\acrodef{MPC}[MPC]{Model Predictive Control}
\acrodef{QP}[QP]{Quadratic Program}
\acrodef{CBF}[CBF]{Control Barrier Function}
\newcommand{\vx}{{\boldsymbol x}}
\newcommand{\vu}{{\boldsymbol u}}
\newcommand{\vp}{{\boldsymbol p}} 
\newcommand{\vv}{{\boldsymbol v}} 
\newcommand{\lieder}{L}
\newcommand{\StateSpace}{\mathcal{X}}
\newcommand{\RealSpace}{\mathbb{R}}
\newcommand{\Rn}{\mathbb{R}^{n}}
\newcommand{\calC}{\mathcal{C}} 
\newcommand{\calK}{\mathcal{K}} 
\newcommand{\calX}{\mathcal{X}} 
\newcommand{\calU}{\mathcal{U}} 
\newcommand{\obsidx}{j}
\newtheorem{definition}{Definition}
\newtheorem{theorem}{Theorem}
\newtheorem{lemma}{Lemma}
\newtheorem{proposition}{Proposition}
\newtheorem{corollary}{Corollary}
\newtheorem{remark}{Remark}
\theoremstyle{definition}
\newtheorem{problem}{Problem}
\theoremstyle{definition}
\theoremstyle{definition}
\theoremstyle{definition}
\newtheorem{assumption}{Assumption}
\title{\LARGE \bf Beyond Collision Cones: Dynamic Obstacle Avoidance for Nonholonomic Robots via Dynamic Parabolic Control Barrier Functions}
\author{Hun Kuk Park$^*$, Taekyung Kim$^*$ and Dimitra Panagou
\thanks{$^{*}$These authors contributed equally to this work}
\thanks{The authors are with the Department of Robotics, University of Michigan, Ann Arbor, MI 48109 USA {\tt\footnotesize parkcart@umich.edu, taekyung@umich.edu, dpanagou@umich.edu} }
\thanks{Dimitra Panagou is also with the Department of Aerospace Engineering, University of Michigan, Ann Arbor, MI, 48109, USA}%
}
\begin{document}
\maketitle
\thispagestyle{empty}
\pagestyle{empty}
\begin{abstract}
Control Barrier Functions (CBFs) are a powerful tool for ensuring the safety of autonomous systems, yet applying them to nonholonomic robots in cluttered, dynamic environments remains an open challenge. State-of-the-art methods often rely on collision-cone or velocity-obstacle constraints which, by only considering the angle of the relative velocity, are inherently conservative and can render the CBF-based quadratic program infeasible, particularly in dense scenarios. To address this issue, we propose a Dynamic Parabolic Control Barrier Function (DPCBF) that defines the safe set using a parabolic boundary. The parabola's vertex and curvature dynamically adapt based on both the distance to an obstacle and the magnitude of the relative velocity, creating a less restrictive safety constraint. We prove that the proposed DPCBF is valid for a kinematic bicycle model subject to input constraints. Extensive comparative simulations demonstrate that our DPCBF-based controller significantly enhances navigation success rates and QP feasibility compared to baseline methods. Our approach successfully navigates through dense environments with up to 100 dynamic obstacles, scenarios where collision cone-based methods fail due to infeasibility. \href{https://www.taekyung.me/dpcbf}{\textcolor{red}{[Project Page]}}\footnote{Project page: \href{https://www.taekyung.me/dpcbf}{https://www.taekyung.me/dpcbf}} \href{https://github.com/tkkim-robot/safe_control/tree/main/dynamic_env}{\textcolor{red}{[Code]}} \href{https://youtu.be/57qgoe7YJao}{\textcolor{red}{[Video]}} 
\end{abstract}

\section{INTRODUCTION}

Ensuring safety is a fundamental challenge for autonomous systems, particularly nonholonomic robots and autonomous vehicles operating in dynamic and cluttered environments. Control Barrier Functions~(CBFs) have emerged as a powerful tool for enforcing safety constraints in real-time, formulated within a Quadratic Program~(QP)~\cite{ames_control_2017} or with Model Predictive Control~(MPC)~\cite{zeng_safetycritical_2021}. Their effectiveness has led to widespread adoption in applications from robotic navigation~\cite{kim_visibilityaware_2025} to multi-agent coordination~\cite{zhang_gcbf_2025}. %

Collision avoidance can be encoded through a distance-based CBF, which defines the safe set based on the Euclidean distance to an obstacle. To incorporate the relative velocity between the robot and the obstacle, one can employ a High-Order CBF~(HOCBF)~\cite{xiao_control_2019}. However, it requires all control inputs to appear in the CBF condition, which makes it difficult to be applied to systems with inputs of different relative degrees~\cite{garg_advances_2024}.

Recent work addresses dynamic obstacles within the CBF framework by leveraging velocity-obstacle~(VO) constraints~\cite{fiorini_motion_1998}, also referred to as collision cones in other literature~\cite{chakravarthy_obstacle_1998}. These methods define the unsafe set as a collision cone in the relative-velocity space and constrain the relative velocity to lie outside a fixed cone~\cite{tayal_collision_2024, huang_dynamic_2025}. This approach has been successfully applied to various systems, including the kinematic bicycle model, by showing that the constraint has relative degree one with respect to all control inputs. Despite their advantages for dynamic obstacle avoidance, cone-based and VO-based methods exhibit fundamental conservatism. Because the safety constraint depends only on the heading angle of the relative velocity, the robot is prohibited from moving toward the obstacle, regardless of their distance or relative speed. This rigidity can induce immediate QP infeasibility when the initial relative velocity lies within a collision cone, or in dense environments where the union of multiple cones removes all feasible control inputs, even when sufficient collision-free space exists (see Fig.~\ref{fig:dpcbf_intro}a).

\begin{figure}[t]
    \centering
    \includegraphics[width=0.99\linewidth]{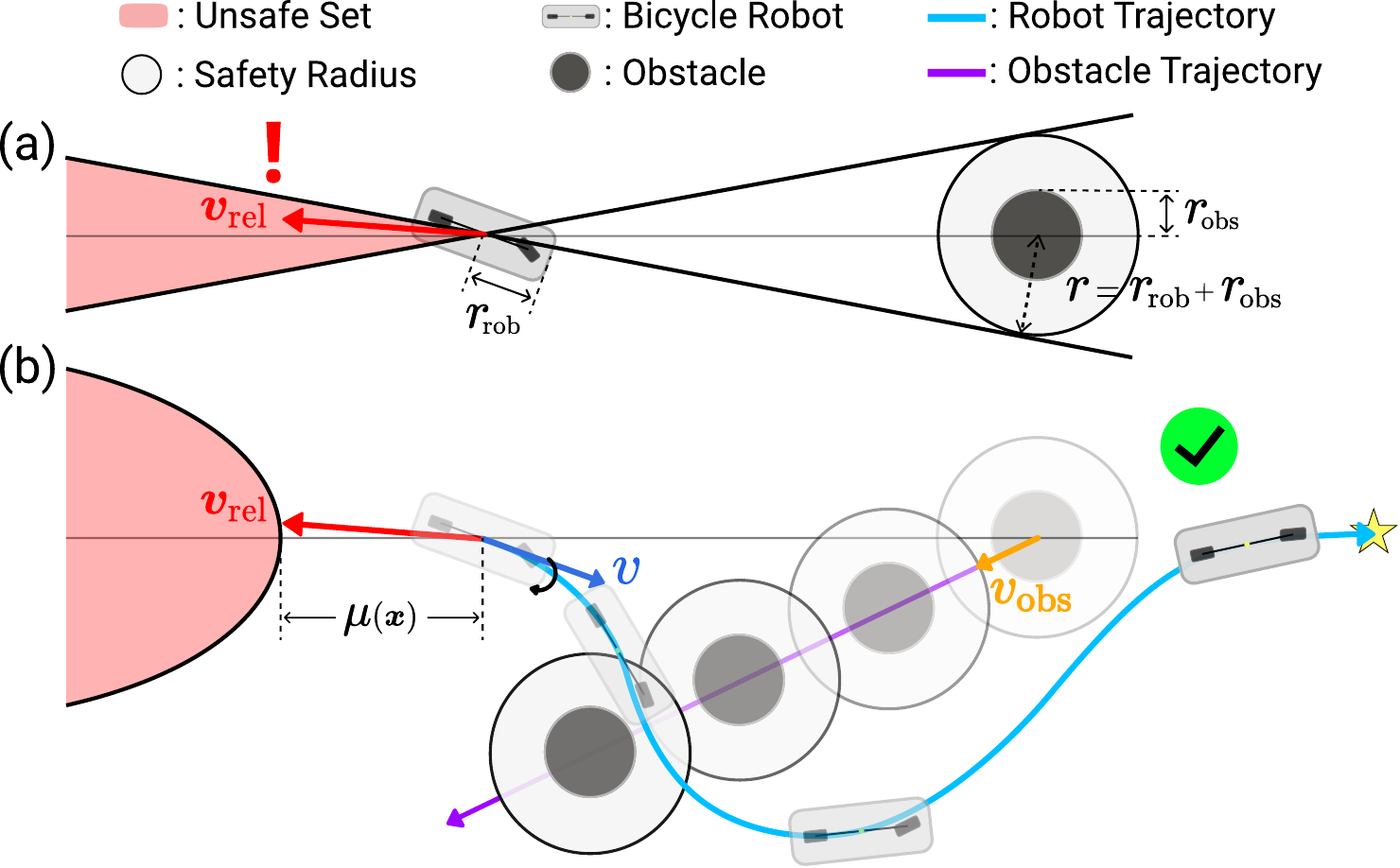}
    \caption{Illustrative comparison of two CBF mechanisms in dynamic obstacle avoidance scenarios. (a) Since the Collision Cone CBF~(C3BF) evaluates only the heading of the relative velocity, it may classify the robot as unsafe regardless of its actual distance from the obstacle. (b) Our Dynamic Parabolic CBF~(DPCBF) establishes a more flexible safety condition by evaluating both relative position and the magnitude of relative velocity, which avoids unnecessary restrictions when clearance is large. As shown in (b), the parabola's vertex shifts away from the robot's origin by $\mu(\vx)$. This relaxes the safety constraint, allowing for less restrictive movements that approach the boundary of the unsafe set while remaining provably safe.}
    \label{fig:dpcbf_intro}
\end{figure} %

This paper introduces a \textbf{\emph{Dynamic Parabolic Control Barrier Function~(DPCBF)}} that explicitly incorporates both clearance and the magnitude of the relative velocity. Instead of a fixed cone, we define a state-dependent parabolic safety boundary whose curvature and vertex adapt with distance and relative velocity (see Fig.~\ref{fig:dpcbf_intro}b). This design yields less conservative safety constraints, improving the feasibility of the CBF-based controller in cluttered, dynamic environments. The main contributions of this work are:

\begin{itemize}
\item We propose a DPCBF for nonholonomic robots in dynamic obstacle avoidance tasks, which dynamically shapes the safety boundary to provide less conservative safety margins by adapting to distance and relative velocity.
\item We prove that DPCBF is valid for the kinematic bicycle model under input constraints.
\item We show extensive simulation results in dense, dynamic environments, demonstrating higher feasibility and success rates, and lower control intervention, compared to state-of-the-art CBF methods.
\end{itemize}
\section{{PRELIMINARIES} \label{sec:preliminaries}}
\subsection{Control Barrier Functions}

Consider a continuous-time, control-affine system:
\begin{equation}\label{eq:cbf_system}
  \dot{\vx} = f(\vx) + g(\vx) \vu,
\end{equation}
where $\vx \in \calX \subset \mathbb R^{n}$ is the state and $\vu\in \calU \subset\mathbb R^{m}$ is the control input, with $\calU$ representing the admissible control set for System~\eqref{eq:cbf_system}. The functions $f:\calX \to\mathbb R^{n}$ and $g:\calU \to\mathbb R^{n\times m}$ are both assumed to be locally Lipschitz continuous.

Let $h:\mathbb R^{n} \to\mathbb R$ be a continuously differentiable function. We define
\begin{subequations}
\begin{align}\label{eq:def_safe_set}
    \calC \coloneqq \{\vx \in \mathbb R^{n} \mid h(\vx) \geq 0 \},\\
  \label{eq:def_safe_set_boundary}\partial \calC \coloneqq \{\vx \in \mathbb R^{n} \mid h(\vx) = 0 \},\\
  \text{Int}(\calC) \coloneqq \{\vx \in \mathbb R^{n} \mid h(\vx) > 0 \},
\end{align}
\end{subequations}
where $\calC$ is referred to as the \emph{safe set}. 

\begin{definition}[Forward Invariance]\label{def:fwd_inv}
A closed set $\calC \subset \Rn$ is \emph{forward invariant} for System~\eqref{eq:cbf_system} under a state-feedback control law $\vu=\pi(\vx)$ if the solution $\vx(t)$ of the closed-loop system $\dot{\vx}(t) = f(\vx(t)) + g(\vx(t)) \pi(\vx(t))$ for every initial state $\vx(0) \in \cal{C}$ satisfies $\vx(t)\in\calC, \forall t \ge 0$.
\end{definition} %

\begin{definition}[Control Barrier Function~\cite{ames_control_2017}]\label{def:cbf_basic}
Given the set $\calC$ defined by \eqref{eq:def_safe_set}, the function $h$ is a \emph{CBF} for System~\eqref{eq:cbf_system} if there exists an extended class $\calK_{\infty}$
function $\alpha(\cdot)$ such that
\begin{equation}\label{cond:cbf_basic}
  \sup_{\vu \in \calU} \bigl[\underbrace{\lieder_fh(\vx)+\lieder_gh(\vx)\vu}_{\dot h(\vx, \vu)}\bigr]
  \geq -\alpha(h(\vx))
  \quad\forall \vx \in \calC . \end{equation} 
  \end{definition}
We denote $\lieder_f h$ and $\lieder_g h$ as the Lie derivatives of the function $h$ with respect to $f$ and $g$. %

\begin{lemma}\label{lem:def_cbf_controller}\textup{(\cite[Theorem 1]{ahmadi_safe_2019})}
Let $h$ satisfy the CBF condition~\eqref{cond:cbf_basic} and define
\begin{equation}
    K_{\textup{cbf}}(\vx) 
    \coloneqq \Bigl\{\vu \in \calU \bigm|
          \lieder_fh(\vx) + \lieder_gh(\vx) \vu
          \ge - \alpha \bigl(h(\vx) \bigr) \Bigr \}.
\end{equation}
Then, any Lipschitz continuous feedback controller $\vu=\pi(\vx)\in K_{\textup{cbf}}(\vx)$ renders
$\mathcal C$ forward invariant for System~\eqref{eq:cbf_system}.
\end{lemma}

To enforce that trajectories of \eqref{eq:cbf_system} remain in $\calC$ \eqref{eq:def_safe_set}, we solve the following Quadratic Program with CBF constraint~(CBF-QP):
\begin{align}
  \vu^{\star}(\vx)
  =&
  \arg\min_{\vu\in\mathcal U}
      \|\vu-\vu_{\textup{ref}}(\vx)\|_{2}^{2} \label{eq:cbf_qp}\\
  \text{s.t. }&
      \lieder_f h(\vx)+\lieder_g h(\vx) \vu
      \ge -\alpha \bigl(h(\vx)\bigr). \nonumber
\end{align}
Note, inputs are bounded:  $\calU \not = \mathbb R^{m}$. By \autoref{lem:def_cbf_controller}, if $h$ is a CBF, applying $\vu=\vu^\star(\vx)$ guarantees the state in the safe set~$\calC$ for all time. %
\subsection{Bicycle Model}

In this paper, we consider a robot modeled by the kinematic bicycle model~\cite{polack_kinematic_2017,he_rulebased_2021} (see Fig. \ref{fig:dpcbf_bicycle_schemetic}). The state is $\vx=[x, y, \theta, v]^{\top}$, where $(x,y)$ denotes the position of the vehicle’s center of mass (CoM), $\theta$ is the heading angle, and $v$ is the forward velocity. The control inputs are longitudinal acceleration $a$ and the forward-wheel steering angle $\delta$. Let $\ell_f$ and $\ell_r$ denote the distances from the CoM to the front and rear axles, respectively, and define the slip angle $\beta=\tan^{-1} \bigl( \tan(\delta) \, \ell_r/(\ell_f+\ell_r) \bigr)$. 

\begin{figure}[t]
    \centering
    \includegraphics[width=0.99\linewidth]{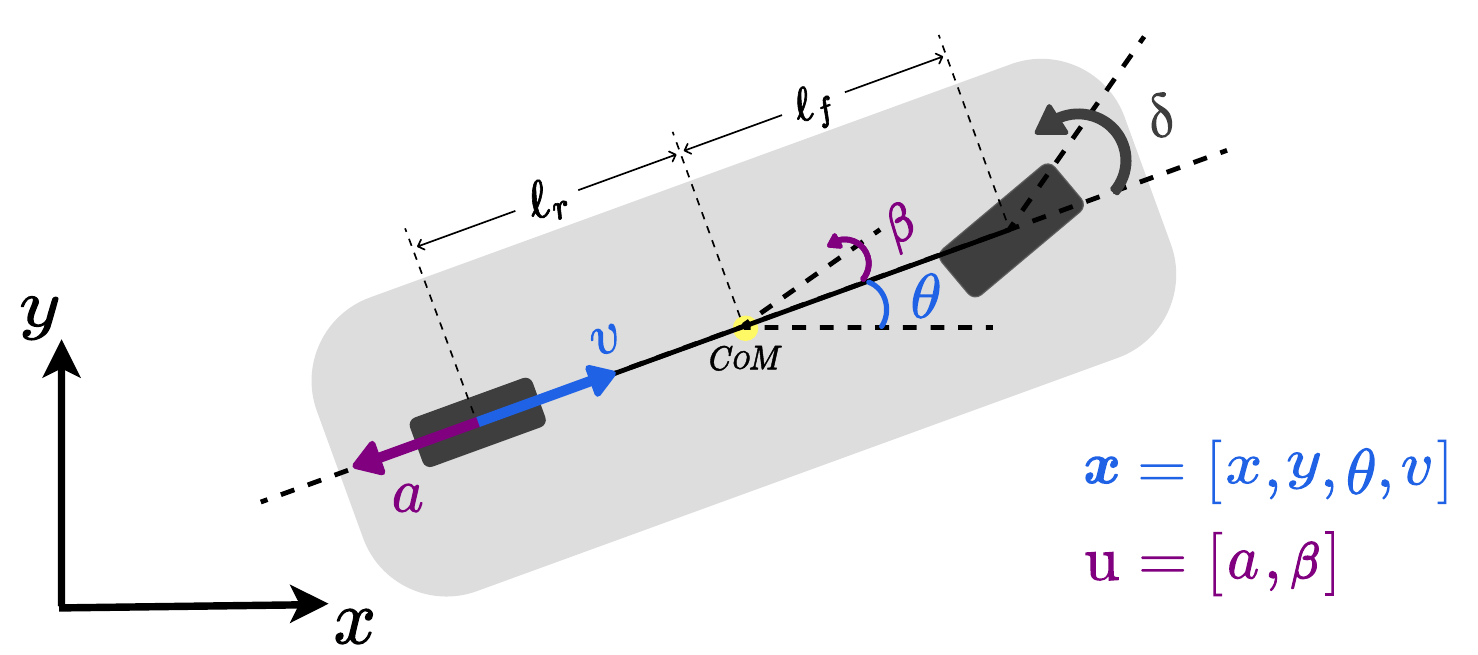}
    \caption{Schematic of the kinematic bicycle model. The robot's state is defined by its Center of Mass (CoM) position $(x,y)$, heading angle $\theta$, and forward velocity $v$. The distances from the CoM to the front and rear axles are $\ell_{f}$ and $\ell_{r}$, respectively. The front-wheel steering angle is $\delta$, and $\beta$ is the resulting vehicle slip angle.}
    \label{fig:dpcbf_bicycle_schemetic}
\vspace{-5pt}
\end{figure}

To model the kinematic bicycle as a control affine system as in \eqref{eq:cbf_system}, we consider that the slip angle~$\beta$ is small, i.e., $\sin \beta \approx \beta$. Then, the dynamics equation follows~\cite{polack_kinematic_2017}
\begin{equation}\label{eq:bicycle_simplified_sys}
\underbrace{\begin{bmatrix}
\dot x\\[2pt]
\dot y\\[2pt]
\dot\theta\\[2pt]
\dot v
\end{bmatrix}}_{\dot\vx}
=
\underbrace{\begin{bmatrix}
v\cos\theta\\[2pt]
v\sin\theta\\[2pt]
0\\[2pt]
0
\end{bmatrix}}_{f(\vx)}
+
\underbrace{\begin{bmatrix}
0 & -v\sin\theta\\[2pt]
0 & \,v\cos\theta\\[2pt]
0 & \dfrac{v}{\ell_r}\\[6pt]
1 & 0
\end{bmatrix}}_{g(\vx)}
\underbrace{\begin{bmatrix}
a\\[2pt]
\beta
\end{bmatrix}}_{\vu},
\end{equation}
where the inputs are now $\vu=[a, \beta]^{\top}$.

\subsection{Obstacle Model}

We model a scenario with multiple dynamic obstacles. The state of the $\obsidx$-th dynamic obstacle, where $\obsidx \in \{1, \ldots, N_{\textup{obs}}\}$, is represented by
\begin{equation}
\vx_{\textup{obs}}^{\obsidx}
  = [x_{\textup{obs}}^{\obsidx}, y_{\textup{obs}}^{\obsidx}, \theta_{\textup{obs}}^{\obsidx} ,v_{\textup{obs}}^{\obsidx}]^{\top},
\end{equation}
where $x_{\textup{obs}}^{\obsidx},y_{\textup{obs}}^{\obsidx}$ denote the obstacle’s center position, 
$\theta_{\textup{obs}}^{\obsidx}$ its heading angle, and $v_{\textup{obs}}^{\obsidx}$ its forward speed.

The dynamics of the $\obsidx$-th obstacle is described by a unicycle model with constant velocity: $\dot{x}_{\textup{obs}}^{\obsidx} = v_{\textup{obs}}^{\obsidx} \cos \theta_{\textup{obs}}^{\obsidx}$ and $ \dot{y}_{\textup{obs}}^{\obsidx}=v_{\textup{obs}}^{\obsidx}\sin \theta_{\textup{obs}}^{\obsidx}$.
We assume that the obstacle states are fully observable. For the remainder of the paper, we omit the superscript~$\obsidx$ and describe the CBF constraint for each obstacle for notational simplicity. 
\subsection{Distance-Based CBF}

A distance-based CBF is a collision avoidance formulation based solely on the Euclidean distance. Let $r_{\textup{rob}}>0$ and $r_{\textup{obs}}>0$ denote conservative safety radii that over-approximate the robot and obstacle geometries, and define $r\coloneqq r_{\textup{rob}}+r_{\textup{obs}}$. Then, with the robot position $\vp=[x, y]^{\top}$ and obstacle position $\vp_{\textup{obs}}=[x_{\textup{obs}}, y_{\textup{obs}}]^{\top}$, a distance-based safety constraint function is:
\begin{equation}\label{eq:dist_cbf}
h_{\textup{dist}}(\vx) = \|\vp-\vp_{\textup{obs}}\|^{2}-r^{2}.
\end{equation}
Because it only considers distance, this barrier is not, in general, a CBF except for simple systems in which the control inputs directly affect the velocity. Its myopic nature makes it particularly unsuitable for systems with nonholonomic constraints~\cite{tayal_collision_2024}.
\subsection{Collision Cone CBF \label{sec:c3bf}}

\begin{figure}[t]
    \centering
    \includegraphics[width=0.99\linewidth]{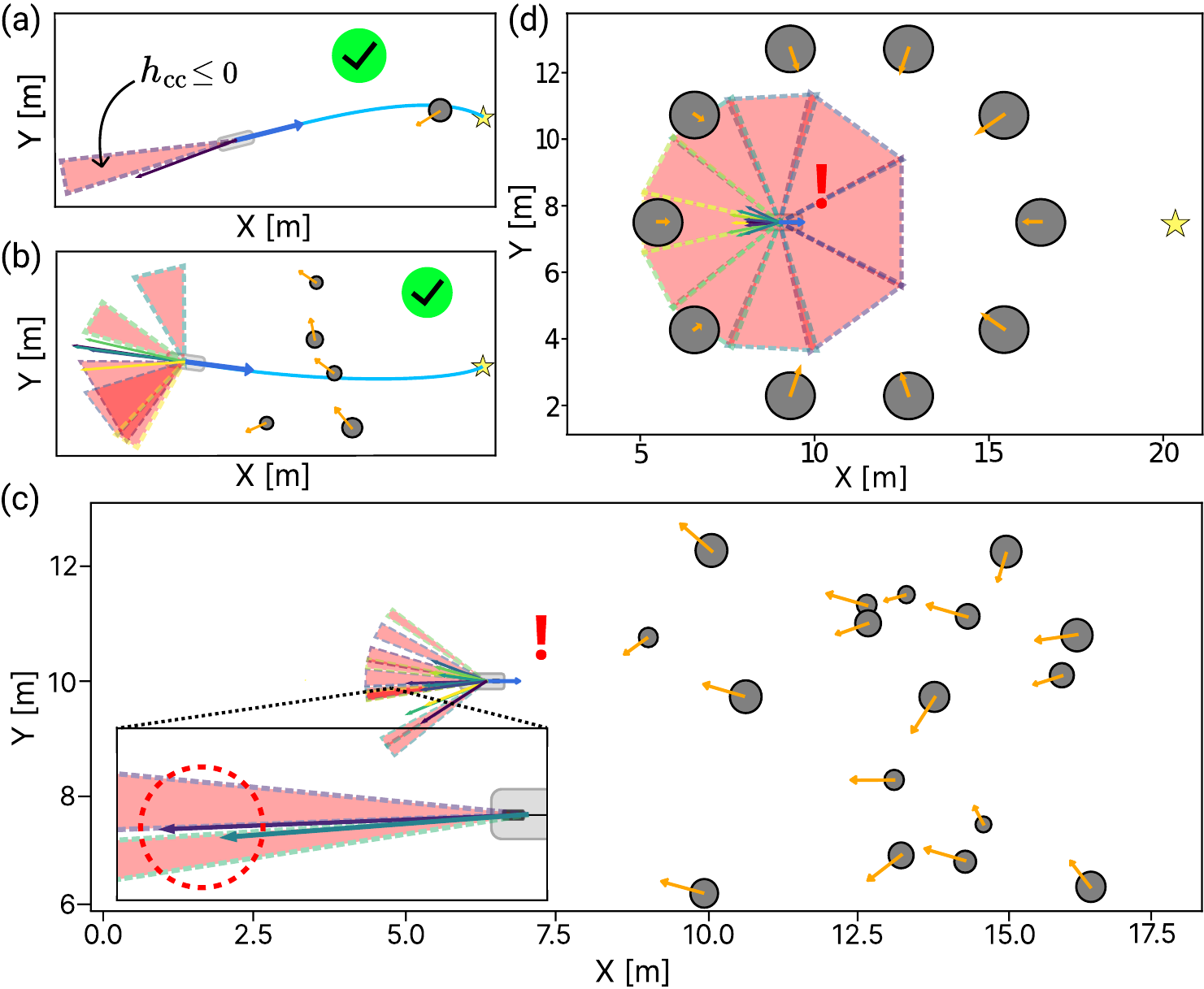}
    \caption{A closer look at collision cone-based CBF. (a) Single obstacle: if $h_{\textup{cc}}(\vx(t_0)) \geq 0$, the CBF constraint keeps $h_{\textup{cc}}(\vx(t)) \ge 0$ for all $t \ge t_0$. 
(b) Five obstacles: if $h_{\obsidx, \textup{cc}}(\vx(t)) \geq 0$ for all $\obsidx$-th obstacle and the CBF-QP is feasible at $t$, safety is at least maintained at a given time $t$.
(c) Even with $h_{\obsidx,\textup
{cc}}(\vx(t_0)) \ge 0$ for all $\obsidx$-th obstacle, cone intersections can leave no admissible relative velocity direction, leading the CBF-QP infeasible. (d) Given the initial configuration where the robot is surrounded by the union of the collision cones, there is no feasible solution to the CBF-QP even though a large collision-free area exists nearby.}
\label{fig:c3bf_cases} 
\vspace{-5pt}
\end{figure} %

The Collision Cone CBF (C3BF)~\cite{tayal_collision_2024} was recently proposed for dynamic obstacle avoidance and constructs the CBF with the Velocity Obstacle (VO)~\cite{fiorini_motion_1998} constraint. Given the relative position~$\vp_{\textup{rel}}$ and relative velocity~$\vv_{\textup{rel}}$, a conservative circle of radius $r$ is placed around the obstacle center, and the collision cone is formed by the pair of tangents from the robot's center to the circle (see Fig.~\ref{fig:dpcbf_intro}a). To implement the outside-of-cone constraint, C3BF defines
\begin{equation} \label{eq:c3bf_candidate}
    h_{\textup{cc}}(\vx) = \langle\vp_{\textup{rel}}, \vv_{\textup{rel}}\rangle + \|\vp_{\textup{rel}}\|\|\vv_{\textup{rel}}\|\,\cos \phi
\end{equation}
where $\phi$ is half the cone angle, and $\cos\phi= \frac{\sqrt{\|\vp_{\textup{rel}}\|^{2}-r^{2}}}{\|\vp_{\textup{rel}}\|}$. 
As illustrated in Fig.~\ref{fig:c3bf_cases}a, the unsafe set is the collision cone, i.e., $\{\vx \mid  h_{\textup{cc}}(\vx) < 0\}$. If the relative velocity at time $t_0$ lies outside this cone, the CBF constraint enforces the control input such that the relative velocity remains outside for all future time, thereby avoiding collision. This collision-cone approach is designed for moving obstacle avoidance tasks and it is also shown to be applicable to the kinematic bicycle model~\eqref{eq:bicycle_simplified_sys}~\cite{thontepu_collision_2023}. Recently, it also has been extended to navigation tasks of quadrotors~\cite{tayal_control_2024}, ground mobile robots, and autonomous vehicles~\cite{thontepu_collision_2023}.

However, the mechanism itself poses conversely fundamental limitations of the C3BF. Since the CBF only monitors the relative velocity's angle with respect to the collision cone, the robot cannot ever drive towards the unsafe set, no matter how far away from those unsafe sets and how small the velocities are, the resulting behavior is \textbf{\emph{extremely conservative}}. In addition, if the initial relative velocity lies inside of the collision cone whenever the controller just gets initiated, the problem becomes immediately infeasible, even though there is a large free space in between the robot and the obstacle (see Fig.~\ref{fig:c3bf_cases}c). Furthermore, this problem is more prominent in multi-obstacle cases as shown in Fig.~\ref{fig:c3bf_cases}d. If the robot is surrounded by obstacles, the union of each cone shrinks the set of admissible relative velocity directions, making it easily infeasible in dense environments.  %

\section{DYNAMIC PARABOLIC CBF\label{sec:DPCBF_main}}

In this paper, we present a novel CBF formulation for dynamic-obstacle collision avoidance tasks. Existing methods either do not provide safety guarantees for moving obstacles or are prone to infeasibility when multiple obstacles are nearby. Accordingly, we focus on improving the key criteria: (i) guaranteeing safety for dynamic obstacles under input constraints, and (ii) improving the feasibility of the resulting CBF-based controller.

At a high level, we construct a safety constraint that explicitly accounts for the \textbf{\emph{magnitude of the relative velocity}}. Unlike C3BF, which relies on a fixed cone and only evaluates the heading of the relative velocity, our approach allows for a less restrictive safety condition. This distinction is crucial, as it permits the robot to safely move toward an obstacle when the relative velocity is low and clearance is large. We introduce a geometric strategy inspired by finite-time velocity obstacle formulations~\cite{guy_clearpath_2009}, in particular the truncated cone construction and parabolic approximation of the safe set boundary~\cite{kim_study_2016, samavati_optimal_2017}.
\subsection{DPCBF Formulation}

Consider a robot modeled as System~\eqref{eq:bicycle_simplified_sys} navigating in an environment with dynamic obstacles.

\begin{figure}[t]
    \centering
    \includegraphics[width=0.99\linewidth]{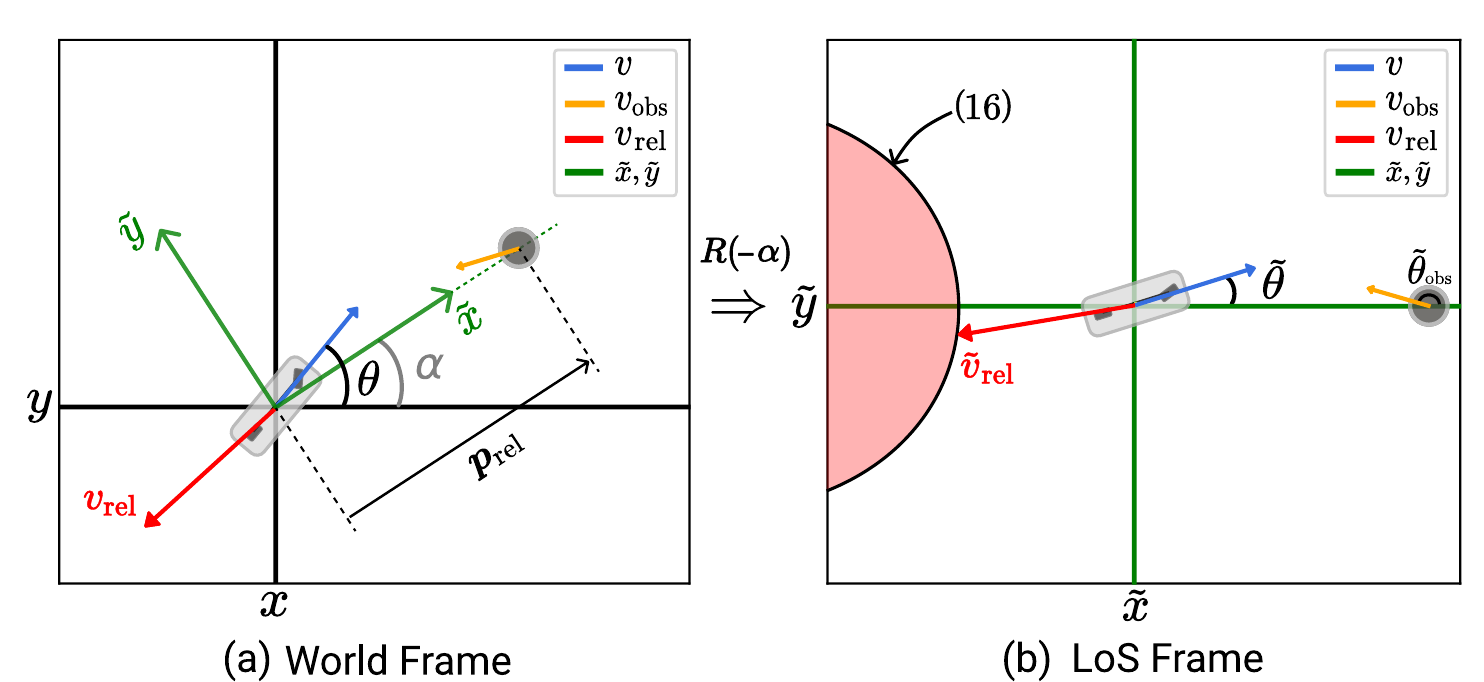}
    \caption{Visualization of the global world frame $(x,y)$ and the rotated Line-of-Sight (LoS) frame $(\tilde x,\tilde y)$ used in our formulation. By rotating the coordinates by an angle $\alpha$~\eqref{eq:dpcbf_alpha_def}, the $\tilde x$-axis of the LoS frame is aligned with the vector from the robot to the obstacle, $\vp_{\textup{rel}}$. This transformation simplifies the definition of the parabolic safety boundary~\eqref{eq:parabolic_region}, allowing its position and curvature to adapt online based on the relative velocity components in this new frame.}
    \label{fig:dpcbf_frames}
\vspace{-5pt}
\end{figure}

\subsubsection{Relative Coordinates}
Define the relative position and velocity between the robot and the obstacle:
\begin{subequations}
\begin{align}
\vp_\textup{rel} &= \begin{bmatrix}
    p_{\textup{rel},x}\\
    p_{\textup{rel},y}
\end{bmatrix}= \begin{bmatrix}
    x_{\textup{obs}} - x\\
    y_{\textup{obs}} - y
\end{bmatrix} \in \RealSpace^{2}, \\
    \vv_\textup{rel} &= \begin{bmatrix}
    v_{\textup{rel},x}\\
    v_{\textup{rel},y}
\end{bmatrix}= \begin{bmatrix}
    v_{\textup{obs}}\cos\theta_{\textup{obs}} - v\cos\theta\\
    v_{\textup{obs}}\sin\theta_{\textup{obs}} - v\sin\theta
\end{bmatrix} \in \RealSpace^{2}
\end{align}
\end{subequations}
with norms $\|\vp_{\textup{rel}}\| = \sqrt{p_{\textup{rel},x}^2 + p_{\textup{rel},y}^2}$, $\|\vv_{\textup{rel}}\| = \sqrt{v_{\textup{rel},x}^2 + v_{\textup{rel},y}^2}$. Then, we rotate the coordinates to align with the line connecting the robot and the obstacle (see Fig. \ref{fig:dpcbf_frames}a). Denote the angle between the global \emph{x}-axis and this new \emph{x}-axis as the rotation angle:
\begin{subequations}
\begin{align}\label{eq:dpcbf_alpha_def}
      \alpha &= \operatorname{atan2}(p_{\textup{rel},y}, p_{\textup{rel},x}), \\
\label{eq:dpcbf_rotation}
      \mathbf R(-\alpha)
  &=
  \begin{bmatrix}
    \cos\alpha & \sin\alpha \\
    -\sin\alpha & \cos\alpha
  \end{bmatrix}
  \in SO(2).
\end{align}
\end{subequations}
We refer to the rotated frame defined by the rotation matrix $\mathbf{R}$ as Line-of-Sight (LoS) frame throughout this paper (see Fig. \ref{fig:dpcbf_frames}b). Finally, we define the relative velocity in the LoS frame:
\begin{equation}\label{eq:rot_rel_vel}
\tilde\vv_{\textup{rel}}=
    \begin{bmatrix}
\tilde v_{\textup{rel},x} \\
\tilde v_{\textup{rel},y}
\end{bmatrix}
=
 \mathbf R(-\alpha)
\begin{bmatrix}
v_{\textup{rel},x} \\
v_{\textup{rel},y}
\end{bmatrix}.
\end{equation}

\subsubsection{CBF Formulation and Design Maps}
Let $r \in \RealSpace$ be the combined robot-obstacle radius defined in \eqref{eq:dist_cbf}. Then,
\begin{equation}\label{eq:dist_clearance}
    d(\vx) = \sqrt{\|\vp_{\textup{rel}}\|^2 - r^2}.
\end{equation}

We introduce tunable parameters $k_\lambda,k_\mu>0$, and the following functions:
\begin{equation}\label{eq:dpcbf_func_def}
    \lambda(\vx) = k_\lambda \frac{d(\vx)}{\|\vv_{\textup{rel}}\|}, 
\quad 
\mu(\vx) = k_\mu \, d(\vx).
\end{equation}
Here $\lambda: \StateSpace \rightarrow \RealSpace$ adjusts the curvature of the parabola, and $\mu: \StateSpace \rightarrow \RealSpace$ shifts the parabola forward by the safe distance margin. 

We propose the Dynamic Parabolic CBF~(DPCBF) as follows:
\begin{equation}\label{eq:dpcbf_def_simple}
    h(\vx) = \tilde v_{\textup{rel},x} 
       + \lambda(\vx) \tilde v_{\textup{rel},y}^2
       + \mu(\vx),
\end{equation}
where $\tilde v_{\textup{rel},x}$ and $\tilde v_{\textup{rel},y}$ are the components of the relative velocity between the robot and the obstacle in the LoS frame. Since $\lambda(\vx)$ and $\mu(\vx)$ depend on the relative distance and speed, the parabola representing the unsafe set dynamically adapts its shape online to the current situation. The CBF is now defined by measuring how close the endpoint of the rotated relative velocity~\eqref{eq:rot_rel_vel} is to a specific parabolic region in this new plane (see Fig. \ref{fig:dpcbf_functions}):
\begin{equation}\label{eq:parabolic_region}
    \tilde v_{\textup{rel},x} = 
       -\lambda(\vx) \tilde v_{\textup{rel},y}^2
       - \mu(\vx).
\end{equation}
This provides a significant advantage over cone-based methods. As illustrated in Fig.~\ref{fig:dpcbf_intro}b, the boundary of the unsafe set~\eqref{eq:parabolic_region} does not pass through the origin of the LoS-frame relative-velocity plane whenever the clearance to the obstacle, $d(\vx)$, is non-zero. This shift creates a feasible space where motion toward an obstacle is no longer treated immediately as unsafe. Instead, safety is now evaluated jointly using the current clearance~$d(\vx)$ and the relative velocity~$\vv_{\textup{rel}}$. We empirically show in \autoref{sec:results} that this design improves the feasibility of the DPCBF-based controller compared with prior work.

\begin{figure}[t]
    \centering
    \includegraphics[width=0.99\linewidth]{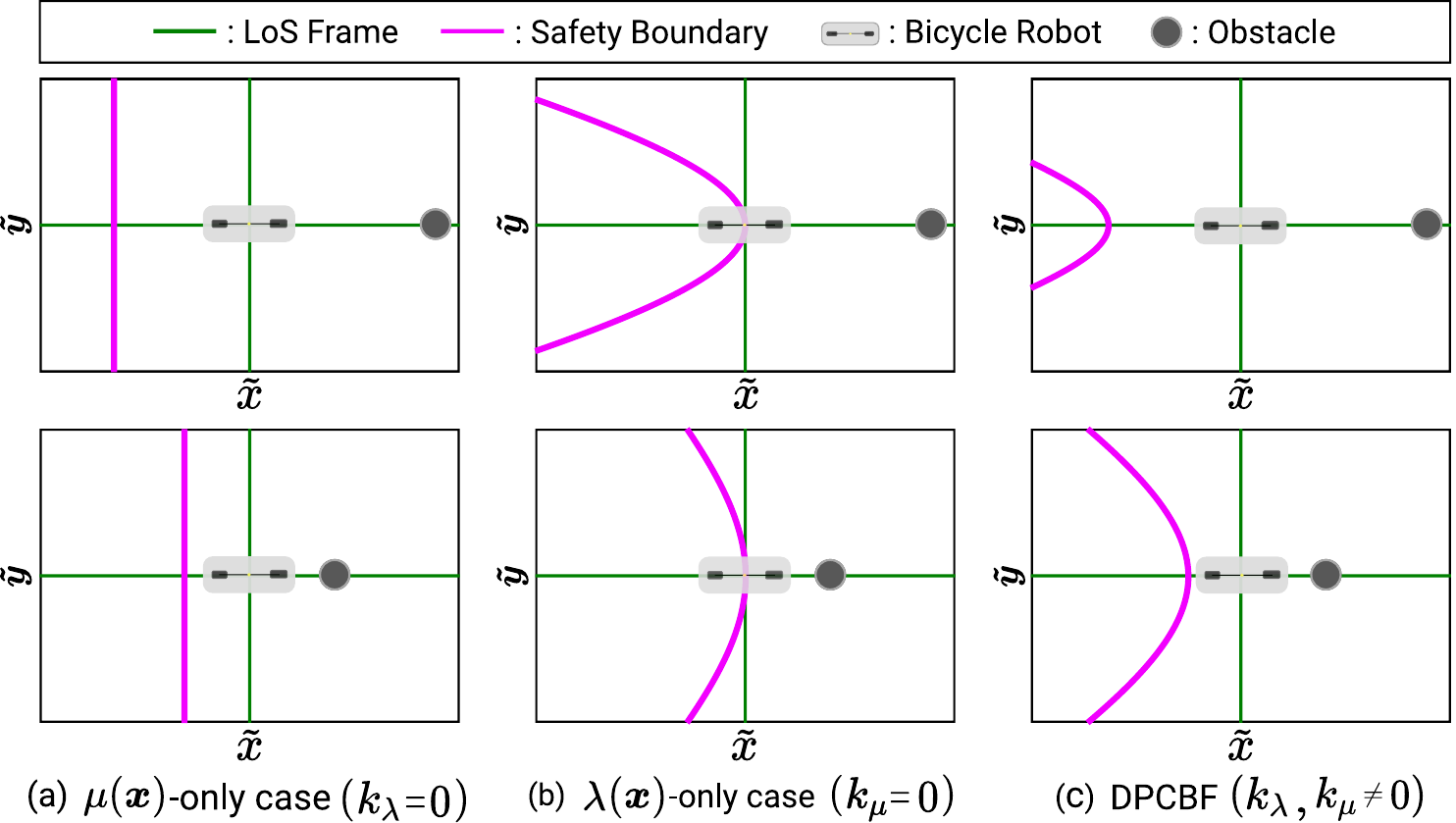}
    \caption{Three examples illustrating how \eqref{eq:parabolic_region} shapes the safety boundary. (a) When $k_{\lambda}=0$ (with $k_{\mu}$ active), as the obstacle gets closer, the parabola's vertex moves toward the robot, shrinking the safe region. (b) When $k_{\mu}=0$ (with $k_{\lambda}$ active), as the obstacle approaches or the relative velocity magnitude increases, the curvature of the parabola decrease, leading to a larger unsafe set. (c) In DPCBF, where $k_{\lambda}\neq 0$ and $k_{\mu}\neq 0$, both the vertex and the curvature of the parabola adapt dynamically.}
    \label{fig:dpcbf_functions}
\vspace{-5pt}
\end{figure}

\subsection{Validity of DPCBF}

To make DPCBF valid for System~\eqref{eq:bicycle_simplified_sys}, we require the following assumptions:
\begin{assumption}\label{assum:robot_speed_bound}
    The forward speed of System~\eqref{eq:bicycle_simplified_sys} is bounded by $v \in [v_{\min},\,v_{\max}]$, where $v_{\max}>v_{\min}>0$.
\end{assumption}
\autoref{assum:robot_speed_bound} is a required but mild assumption as in \cite{sadraddini_provably_2017, xiao_control_2019,haraldsen_safetycritical_2024}, since it excludes the degeneracy at $v=0$ in \eqref{cond:cbf_basic} where $\lieder_f h(\vx)=0$.
\begin{assumption}\label{assum:clearance}
    The admissible distance satisfies $p_{\max} \geq \|\vp_{\textup{rel}}\| \geq p_{\min} \coloneqq s\,r > 0$, where $s>1$ is a safety margin. Therefore, $d(\vx) = \sqrt{\|\vp_{\textup{rel}}\|^{2}-r^{2}} > d_{\min} \coloneqq \sqrt{p_{\min}^{2}-r^{2}} > 0$. The maximum distance to obstacle~$p_{\max}$ is determined by the finite sensing range.
\end{assumption}

To prove the proposed DPCBF is valid for System~\eqref{eq:bicycle_simplified_sys}, we show that for any state on safe set boundary~$\vx \in \partial\calC$, there exists an admissible control input $\vu \in \calU \coloneqq \{[a,\beta]^\top \mid|a|\leq a_{\max},|\beta|\leq~\beta_{\max}\}$ that satisfies the CBF condition \eqref{cond:cbf_basic}.

We first derive the corresponding terms for System~\eqref{eq:bicycle_simplified_sys}. Let $\Phi(\vx)$ denote the maximum control authority at $\vx$:
\begin{align}
    \Phi(\vx) \coloneqq& \sup_{\vu\in\mathcal U} \lieder_g h(\vx)\vu = \sup_{\vu\in\mathcal U} \begin{bmatrix} C^{a}(\vx) \\C^{\beta}(\vx)\end{bmatrix}^\top \vu \nonumber\\
          =& \big| C^{a}(\vx) \big|\,a_{\max} + \big| C^{\beta}(\vx) \big|\,\beta_{\max},
\end{align}
where $C^{a}(\vx)$ and $C^{\beta}(\vx)$ are the derived control terms:
\begin{align}
    &\big|C^{a}(\vx)\big| =  \Bigg| \underbrace{  \Bigl[
        -1
        + k_\lambda \frac{d(\vx)}{\|\vv_{\textup{rel}}\|^{3}}\, v_{\textup{obs}}\,\cos\tilde\theta_{\textup{obs}}\tilde v_{\textup{rel},y}^{2}
      \Bigr]}_{\coloneqq\eta_{a,\cos}(\vx)}\,
      \cos\tilde \theta \nonumber
  \\
    &+  \underbrace{\Bigl[
        k_\lambda \frac{d(\vx)}{\|\vv_{\textup{rel}}\|^{3}} v_{\textup{obs}}\sin\tilde\theta_{\textup{obs}}\tilde v_{\textup{rel},y}^{2}
          -2k_\lambda\frac{d(\vx)}{\|\vv_{\textup{rel}}\|}
            \tilde v_{\textup{rel},y}
      \Bigr]}_{\coloneqq\eta_{a,\sin}(\vx)} \sin\tilde \theta \nonumber \\
      &+ \underbrace{\Bigl[
        - k_\lambda \frac{d(\vx)}{\|\vv_{\textup{rel}}\|^{3}}
          v\,\tilde v_{\textup{rel},y}^{2}
      \Bigr]}_{\coloneqq\eta_{a,0}(\vx)} \Bigg| ,
\end{align}
and
\begin{equation}
    \bigl|C^{\beta}(\vx)\bigr|
  =
  \bigl|\eta_{\beta,\cos}(\vx)\cos\tilde\theta
          + \eta_{\beta,\sin}(\vx)\sin\tilde\theta\bigr|,
\end{equation}
where
\begin{subequations}
\begin{align}
&\eta_{\beta,\cos}(\vx) \coloneqq v\,\biggl[
          -\frac{\tilde v_{\textup{rel},y}}{\|\vp_{\textup{rel}}\|}
          + 2k_\lambda\,\frac{d(\vx)}{\|\vv_{\textup{rel}}\|}\,
            \frac{\tilde v_{\textup{rel},y}\tilde v_{\textup{rel},x}}{\|\vp_{\textup{rel}}\|} \nonumber \\
        &+  k_\lambda \frac{v}{\ell_{r}} \frac{d(\vx)}{\|\vv_{\textup{rel}}\|}\,
                \tilde v_{\textup{rel},y}
                \Bigl(
             \frac{v_{\textup{obs}}}{\|\vv_{\textup{rel}}\|^{3}}
            \sin\tilde\theta_{\textup{obs}}\,
                \tilde v_{\textup{rel},y}
            - 2 \Bigr)
          \biggr],\\
        &\eta_{\beta,\sin}(\vx) \coloneqq v\,\biggl[
            k_\lambda\,\frac{\|\vp_{\textup{rel}}\|}{d(\vx)}\,
              \frac{\tilde v_{\textup{rel},y}^{2}}{\|\vv_{\textup{rel}}\|}
          + k_\mu\,\frac{\|\vp_{\textup{rel}}\|}{d(\vx)} \nonumber\\
        &+ \frac{v}{\ell_{r}}\Bigl(
              1
            - k_\lambda\frac{d(\vx)}{\|\vv_{\textup{rel}}\|^{3}}\,
                v_{\textup{obs}}\cos\tilde\theta_{\textup{obs}}\,
                \tilde v_{\textup{rel},y}^{2}
            \Bigr)
          \biggr] .
\end{align}
\end{subequations}

Now, we aim to verify the following Nagumo's condition:
\begin{equation}\label{eq:nagumo_eq}
  \lieder_f h(\vx) + \Phi(\vx) \ge 0\qquad \forall \vx\in\partial\mathcal C,
\end{equation}
where
\begin{align}
    &\lieder_{f}h(\vx) = v\biggl[\biggl(
  -k_\lambda
     \frac{\|\vp_{\textup{rel}}\|}{d(\vx)}
     \frac{\tilde v_{\textup{rel},y}^2}{\|\vv_{\textup{rel}}\|}
     -k_\mu \frac{\|\vp_{\textup{rel}}\|}{d(\vx)} \biggr) \cos\tilde \theta\nonumber\\
  &\qquad \quad + \biggl(2k_\lambda
     \frac{\tilde v_{\textup{rel},y}}{\|\vv_{\textup{rel}}\|}
     \frac{d(\vx)}{\|\vp_{\textup{rel}}\|}\tilde v_{\textup{rel},x}
     -\frac{\tilde v_{\textup{rel},y}}{\|\vp_{\textup{rel}}\|}
  \biggr)\sin\tilde \theta \biggr] .
\end{align}

We partition the safe set boundary $\partial\mathcal C$ as $\partial\mathcal C_{1}$ and $\partial\mathcal C_{2}$, i.e., $\partial\mathcal C_{1} \cup \partial\mathcal C_{2} = \partial \calC$:
\begin{equation}\label{eq:dpcbf_split}
    \partial\mathcal C_{1}\coloneqq\{\vx \mid | \sin\tilde{\theta} | \ge \bar{s} \},\quad
  \partial\mathcal C_{2}\coloneqq\{\vx \mid | \sin\tilde{\theta} | < \bar{s} \},
\end{equation}
where $\bar s \coloneqq \frac{v_{\textup{obs}}}{v}\sin \tilde \theta_{\textup{obs}}\in[0,1)$. Therefore, we verify \eqref{eq:nagumo_eq} in these two sub-groups separately, for $i \in \{ 1, 2\}$, yielding:
 \allowdisplaybreaks
\begin{subequations}
\begin{align}\label{eq:dpcbf_boundary_split}
&\inf_{\vx\in\partial\mathcal C_i}\big[L_f h(\vx)+|C^{a}(\vx)|a_{\max}
+ |C^{\beta}(\vx)|\beta_{\max}\big]   \\ 
\geq & \underbrace{\inf_{\vx\in\partial\mathcal C_i} L_f h(\vx)}_{D_{i, \min}(k_{\lambda}, k_{\mu})} + \underbrace{\inf_{\vx\in\partial\mathcal C_i}\big[|C^{a}(\vx)|a_{\max} \big]}_{C^{a}_{i, \min}(k_{\lambda}, k_{\mu})} \nonumber\\
& \qquad\qquad\qquad\qquad\qquad + \underbrace{\inf_{\vx\in\partial\mathcal C_i}\big[|C^{\beta}(\vx)|\beta_{\max}\big]}_{C^{ \beta}_{i, \min}(k_{\lambda}, k_{\mu})} \\
=& D_{i, \min}(k_{\lambda}, k_{\mu}) + \underbrace{C^{a}_{i, \min}(k_{\lambda}, k_{\mu}) + C^{\beta}_{i, \min}(k_{\lambda}, k_{\mu})}_{\Phi_{i, \min}(k_{\lambda}, k_{\mu})} \geq 0. \label{eq:dpcbf_proof_conv}
\end{align}
\end{subequations}
We will show that \eqref{eq:dpcbf_proof_conv} holds for both $i \in \{ 1, 2\}$, which together implies that \eqref{eq:nagumo_eq} is satisfied.

\textbf{Case 1 ($i=1$).}
We aim to verify \eqref{eq:dpcbf_proof_conv} for the subset
$\partial\mathcal C_{1}$ of the safety boundary. Dividing \eqref{eq:dpcbf_proof_conv} by $v$ yields:
\begin{align}
    &\inf_{\vx\in\partial\mathcal C_1} \frac{\lieder_f h(\vx)}{v} + \inf_{\vx\in\partial\mathcal C_1} \biggl|\frac{C^{a}(\vx)}{v}\biggr|a_{\max} \nonumber \\ 
  & \qquad \qquad \qquad \qquad \quad +
  \inf_{\vx\in\partial\mathcal C_1} \biggl| \frac{C^{\beta}(\vx)}{v} \biggr|\beta_{\max}
  \geq 0 \label{eq:dpcbf_proof_case1}
\end{align}
By $| \sin\tilde{\theta} | \ge \bar{s}$ in \eqref{eq:dpcbf_split}, we have $\inf_{\vx \in \partial \calC_{1}} \Bigl| \frac{C^{a}(\vx)}{v} \Bigr| = 0$ (see Appendix~\autoref{app:case1}). Also, we show that the control term for $\beta$ and the drift term is both lower-bounded by $\Phi_{1,\min} (k_\mu)$ and $D_{1,\min}( k_\lambda, k_\mu)$, respectively, and they depend on the hyperparameters~$k_\lambda$ and $k_\mu$:
\begin{align}
    \inf_{\vx \in \partial \calC_{1}}\Bigl|\frac{C^{\beta}(\vx)}{v} \, \Bigr|\beta_{\max} & \coloneqq \Phi_{1,\min} (k_\mu) > 0 \\
 \inf_{\vx\in\partial\mathcal C_1}\frac{\lieder_f h(\vx)}{v} & \coloneqq D_{1,\min}( k_\lambda, k_\mu).
\end{align}
Therefore, the following condition is sufficient to satisfy \eqref{eq:dpcbf_proof_case1}:
\begin{equation} \label{eq:case1_result}
    \Phi_{1,\min}(k_\mu)\geq -D_{1,\min}(k_\lambda, k_\mu).
\end{equation}

\textbf{Case 2 ($i=2$).}
Similarly, we show \eqref{eq:dpcbf_proof_conv} for $\partial\mathcal C_{2}$.
\begin{align} \label{eq:dpcbf_proof_case2}
    &\inf_{\vx\in\partial\mathcal C_2} L_f h(\vx)
  +
  \inf_{\vx\in\partial\mathcal C_2} |C^{a}(\vx)|\,a_{\max} \nonumber \\
  &\qquad \qquad \qquad \qquad \quad +
  \inf_{\vx\in\partial\mathcal C_2} |C^{\beta}(\vx)|\,\beta_{\max} \ge 0
\end{align}
By $| \sin\tilde{\theta} | < \bar{s}$ in \eqref{eq:dpcbf_split}, we have $\inf_{\vx \in \partial \calC_{2}} |C^{\beta}(\vx)| = 0$ (see Appendix~\autoref{app:case2}). Also, each of the remaining terms are lower-bounded:
\begin{align}
    \inf_{\vx \in \partial \calC_{2}} | C^{a}(\vx)| a_{\max} & \coloneqq \Phi_{2,\min} (k_\lambda) > 0 \\
 \inf_{\vx\in\partial\calC_2} \lieder_f h(\vx) & \coloneqq D_{2,\min}( k_\lambda, k_\mu).
\end{align}
Therefore, the following condition is sufficient to satisfy \eqref{eq:dpcbf_proof_case2}:
\begin{equation} \label{eq:case2_result}
    \Phi_{2,\min}(k_\lambda)\geq -D_{2,\min}(k_\lambda, k_\mu).
\end{equation}

\begin{theorem}\label{thm:dpcbf_validity}
Under Assumptions~\ref{assum:robot_speed_bound}-\ref{assum:clearance}, the DPCBF is valid for System~\eqref{eq:bicycle_simplified_sys} under the input constraints, if there exist parameters $k_{\lambda}$ and $k_{\mu}$ that satisfy \eqref{eq:case1_result} and \eqref{eq:case2_result}.
\end{theorem}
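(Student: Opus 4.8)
The plan is to assemble the ingredients developed above into a short verification of Nagumo's condition. By the reduction adopted at the start of this subsection, showing that the DPCBF is valid amounts to exhibiting, for every boundary state $\vx\in\partial\calC$, an admissible input $\vu\in\calU$ satisfying the CBF condition~\eqref{cond:cbf_basic}; since $h(\vx)=0$ on $\partial\calC$, the extended class-$\calK$ term there vanishes, so it suffices to verify~\eqref{eq:nagumo_eq}, i.e. $\lieder_f h(\vx)+\Phi(\vx)\ge 0$ with $\Phi(\vx)=\sup_{\vu\in\calU}\lieder_g h(\vx)\vu$. I would first record that every quantity entering~\eqref{eq:nagumo_eq} is well defined and finite on $\partial\calC$: from $h(\vx)=0$ together with $\lambda(\vx),\mu(\vx)>0$ one gets $\tilde v_{\textup{rel},x}=-\lambda(\vx)\,\tilde v_{\textup{rel},y}^{2}-\mu(\vx)\le-\mu(\vx)<0$, hence $\|\vv_{\textup{rel}}\|\ge\mu(\vx)=k_\mu d(\vx)\ge k_\mu d_{\min}>0$ by Assumption~\ref{assum:clearance}; combined with $\|\vp_{\textup{rel}}\|\in[p_{\min},p_{\max}]$, $d(\vx)\ge d_{\min}>0$, and $v\in[v_{\min},v_{\max}]$, this keeps $\lambda$, $\mu$, $\lieder_f h$, $C^{a}$, $C^{\beta}$ bounded on $\partial\calC$, so the infima appearing below are finite.

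Next I would use the partition $\partial\calC=\partial\calC_1\cup\partial\calC_2$ from~\eqref{eq:dpcbf_split}, with threshold $\bar s=\tfrac{v_{\textup{obs}}}{v}\sin\tilde\theta_{\textup{obs}}\in[0,1)$, and verify~\eqref{eq:nagumo_eq} on $\partial\calC_1$ and $\partial\calC_2$ in turn. On $\partial\calC_i$, writing $\Phi(\vx)=|C^{a}(\vx)|a_{\max}+|C^{\beta}(\vx)|\beta_{\max}$ and bounding the infimum of a sum below by the sum of the infima, exactly as in~\eqref{eq:dpcbf_boundary_split}--\eqref{eq:dpcbf_proof_conv}, reduces the claim to $D_{i,\min}+C^{a}_{i,\min}+C^{\beta}_{i,\min}\ge 0$. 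For $i=1$, the defining inequality $|\sin\tilde\theta|\ge\bar s$ is precisely what lets the coefficient map $\tilde\theta\mapsto\eta_{a,\cos}(\vx)\cos\tilde\theta+\eta_{a,\sin}(\vx)\sin\tilde\theta+\eta_{a,0}(\vx)$ attain zero, so $C^{a}_{1,\min}=0$ (Appendix~\autoref{app:case1}); the residual steering contribution is strictly positive, $C^{\beta}_{1,\min}=\Phi_{1,\min}(k_\mu)>0$, and $D_{1,\min}=D_{1,\min}(k_\lambda,k_\mu)$, so the inequality becomes~\eqref{eq:case1_result}. Symmetrically, for $i=2$ the inequality $|\sin\tilde\theta|<\bar s$ forces $C^{\beta}_{2,\min}=0$ (Appendix~\autoref{app:case2}), leaving $C^{a}_{2,\min}=\Phi_{2,\min}(k_\lambda)>0$ and $D_{2,\min}=D_{2,\min}(k_\lambda,k_\mu)$, so the inequality becomes~\eqref{eq:case2_result}.

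Finally I would conclude: if $k_\lambda,k_\mu$ satisfy both~\eqref{eq:case1_result} and~\eqref{eq:case2_result}, then the sufficient condition~\eqref{eq:dpcbf_proof_conv} holds for $i=1$ and $i=2$, so~\eqref{eq:nagumo_eq} holds on all of $\partial\calC$. Consequently, for each $\vx\in\partial\calC$ the vertex input $\vu^\star=[\,a_{\max}\operatorname{sgn} C^{a}(\vx),\ \beta_{\max}\operatorname{sgn} C^{\beta}(\vx)\,]^\top\in\calU$ attains $\Phi(\vx)$ and hence satisfies~\eqref{cond:cbf_basic}, which is exactly the validity criterion stated at the start of the subsection; therefore the DPCBF is valid for System~\eqref{eq:bicycle_simplified_sys} under the input constraints.

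The step I expect to be the real work is not this assembly but the two appendix facts it invokes: (i) proving that the cross terms genuinely vanish — that on $\partial\calC_1$ the affine-in-$(\cos\tilde\theta,\sin\tilde\theta)$ coefficient $C^{a}(\vx)$ hits zero somewhere inside the admissible range of $\tilde\theta$ fixed by $|\sin\tilde\theta|\ge\bar s$, and symmetrically that $C^{\beta}(\vx)$ hits zero on $\partial\calC_2$, which is where the precise choice of the splitting threshold $\bar s$ is exploited; and (ii) producing explicit, usable lower bounds $\Phi_{1,\min}(k_\mu)$, $\Phi_{2,\min}(k_\lambda)$, $D_{1,\min}$, $D_{2,\min}$ by minimizing the corresponding expressions over the compact sets $\partial\calC_i$ under Assumptions~\ref{assum:robot_speed_bound}--\ref{assum:clearance}, for which the bound $\|\vv_{\textup{rel}}\|\ge k_\mu d_{\min}$ derived above is essential to keep $\lambda$ and the velocity-normalized terms bounded. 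One should also keep in mind that the sum-of-infima step is loose, so feasibility of~\eqref{eq:case1_result}--\eqref{eq:case2_result} for some pair $(k_\lambda,k_\mu)$ is a genuine design requirement rather than an automatic consequence of the construction.
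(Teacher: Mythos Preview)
Your proposal is correct and follows essentially the same approach as the paper: verify Nagumo's condition on $\partial\calC$ by partitioning into $\partial\calC_1$ and $\partial\calC_2$, apply infimum sub-additivity to decouple the drift and control terms, show that one control channel's infimum vanishes on each piece while the other remains strictly positive, and then invoke the hypotheses~\eqref{eq:case1_result}--\eqref{eq:case2_result}. Your observations that $\|\vv_{\textup{rel}}\|\ge k_\mu d_{\min}$ on $\partial\calC$ and that an explicit bang-bang input $\vu^\star$ attains $\Phi(\vx)$ are exactly the auxiliary facts the appendix establishes (Lemma~\ref{lemma:boundary_coupling}) and uses, and your closing paragraph correctly identifies where the substantive analysis lies.
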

\begin{proof}
A full proof with step-by-step derivation of each term can be found in Appendix~\autoref{app:case1}-\autoref{app:case2}, and how to find a feasible set of parameters $(k_\lambda, k_\mu)$ that satisfies both \eqref{eq:case1_result} and \eqref{eq:case2_result} are shown in Appendix~\autoref{app:result}.
\end{proof}

\begin{figure*}[t]
    \centering    \includegraphics[width=0.99\linewidth]{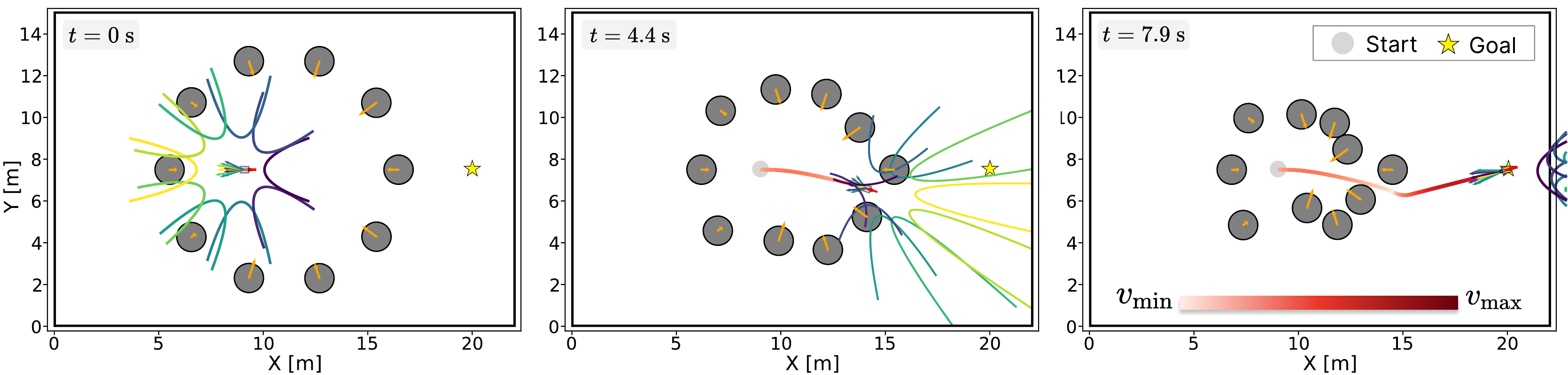}
    \caption{Demonstration of the proposed DPCBF’s navigation behavior in a surrounded environment with ten dynamic obstacles. For the same configuration, collision cone-based methods are infeasible, as shown in Fig~\ref{fig:c3bf_cases}d. (Left) At $t = 0$ s, with $v(t_{0}) = 0.5$ m/s, the QP with DPCBF constraints finds a feasible solution even though the robot is surrounded by unsafe sets, by keeping the relative-velocity vectors outside the dynamic parabolic boundaries, allowing it to proceed safely. (Center) By $t = 4.4$ s, the robot successfully maneuvers through a narrow passage. This is possible due to the less conservative formulation of DPCBF, which provides the necessary control flexibility in confined spaces. (Right) The robot safely navigates through the obstacles and reaches the goal at $t = 7.9$ s.}
    \label{fig:dpcbf_surround}
\vspace{-5pt}
\end{figure*}

\begin{remark}
As is common in CBF analysis, including the prior works we evaluate against, the safety guarantee in Theorem~\ref{thm:dpcbf_validity} holds for a single CBF constraint, corresponding to one obstacle. For methods on composing multiple CBF constraints into a single constraint, we refer the readers to \cite{breeden_compositions_2023,garg_advances_2024}. A formal investigation into the composition of multiple DPCBFs under input constraints is outside the scope of this paper. We evaluate the performance of our DPCBF-based controller against the compared methods using a QP with multiple constraints in \autoref{sec:results}.
\end{remark}

\section{RESULTS \label{sec:results}}
\subsection{Experimental setup}

We conduct a series of simulation experiments to evaluate the performance of our proposed DPCBF and compare it against state-of-the-art baseline methods. The primary goal is to assess the ability of DPCBF to maintain safety while reducing conservatism, particularly in challenging scenarios with multiple dynamic obstacles. All experiments are performed in a simulated 2D environment. The robot is modeled as a kinematic bicycle~\eqref{eq:bicycle_simplified_sys} with parameters specified in \autoref{tab:sim_parameters}. Dynamic obstacles are modeled as discs with varying radii and move with constant velocity. The nominal controller $\vu_{\textup{ref}}$ is a simple proportional controller that drives the robot towards a goal location. We compare our DPCBF against three established CBF methods for dynamic obstacle avoidance:

\textbf{(i) C3BF~\cite{tayal_control_2024}:} The collision-cone based CBF described in \autoref{sec:c3bf}. 

\textbf{(ii) MA-CBF-VO~\cite{roncero_multiagent_2025}:} This method uses a velocity obstacle formulation for guidance and a separate, distance-based CBF to formally guarantee safety. To avoid the conservative behavior of VO approaches, the VO constraint is relaxed into a soft constraint by using a slack variable in the optimization's objective function, while the distance-based CBF remains a hard constraint for collision avoidance.

\textbf{(iii) Dynamic zone-based CBF~\cite{wang_safe_2025}:} This approach modulates a circular safety zone around each obstacle based on the relative motion between the robot and the obstacle. The radius of this zone dynamically expands only when the robot and an obstacle are moving toward each other.

For all methods, the safety constraints are enforced via the CBF-QP formulation. We evaluate performance based on four key metrics: (i) Success rate: the percentage of trials where the robot reaches the goal without collision or infeasibility. (ii) Infeasible rate: the percentage of trials where the CBF-QP becomes infeasible, leading to mission failure. (iii) Collision rate: the percentage of trials where the robot's body intersects with an obstacle. (iv) QP cost: the total amount of deviation from the reference control input, calculated as the cumulative sum of the instantaneous QP cost, $\|\vu - \vu_{\textup{ref}}\|_2^{2}$, over the trajectory, where a lower QP cost implies a more efficient and less conservative method.
\begin{table}[t]
\centering
\begin{tabular}{l|cc|cc}
\toprule
Parameter & \multicolumn{2}{c|}{Bicycle Robot} & \multicolumn{2}{c}{Obstacles}  \\ 
\cmidrule(l){1-5}
Maximum velocity & \multicolumn{2}{c|}{3.5 [m/s]} & \multicolumn{2}{c}{1.2 [m/s]}   \\
Minimum velocity & \multicolumn{2}{c|}{0.2 [m/s]} & \multicolumn{2}{c}{0 [m/s]} \\
Maximum sensing range & \multicolumn{2}{c|}{15 [m]} & \multicolumn{2}{c}{-}   \\
$a_{\textup{max}}$ & \multicolumn{2}{c|}{5.0 [m/s$^{2}$]} & \multicolumn{2}{c}{-}  \\ 
$\beta_{\textup
max}$ & \multicolumn{2}{c|}{0.28 [rad]} & \multicolumn{2}{c}{-} \\ 
Max/Min radius & \multicolumn{2}{c|}{0.3 / - [m]} & \multicolumn{2}{c}{0.7 / 0.1 [m]} \\ 
Rear axes distance $\ell_r$  & \multicolumn{2}{c|}{0.2 [m]} & \multicolumn{2}{c}{-}  \\ 
Safety buffer $s$ & \multicolumn{2}{c|}{1.05} & \multicolumn{2}{c}{-} \\
\bottomrule
\end{tabular}
\caption{Main parameters for the simulation studies.}
\vspace{-6pt}
 \label{tab:sim_parameters}
\end{table}

\begin{figure*}[t]
    \centering    \includegraphics[width=0.99\linewidth]{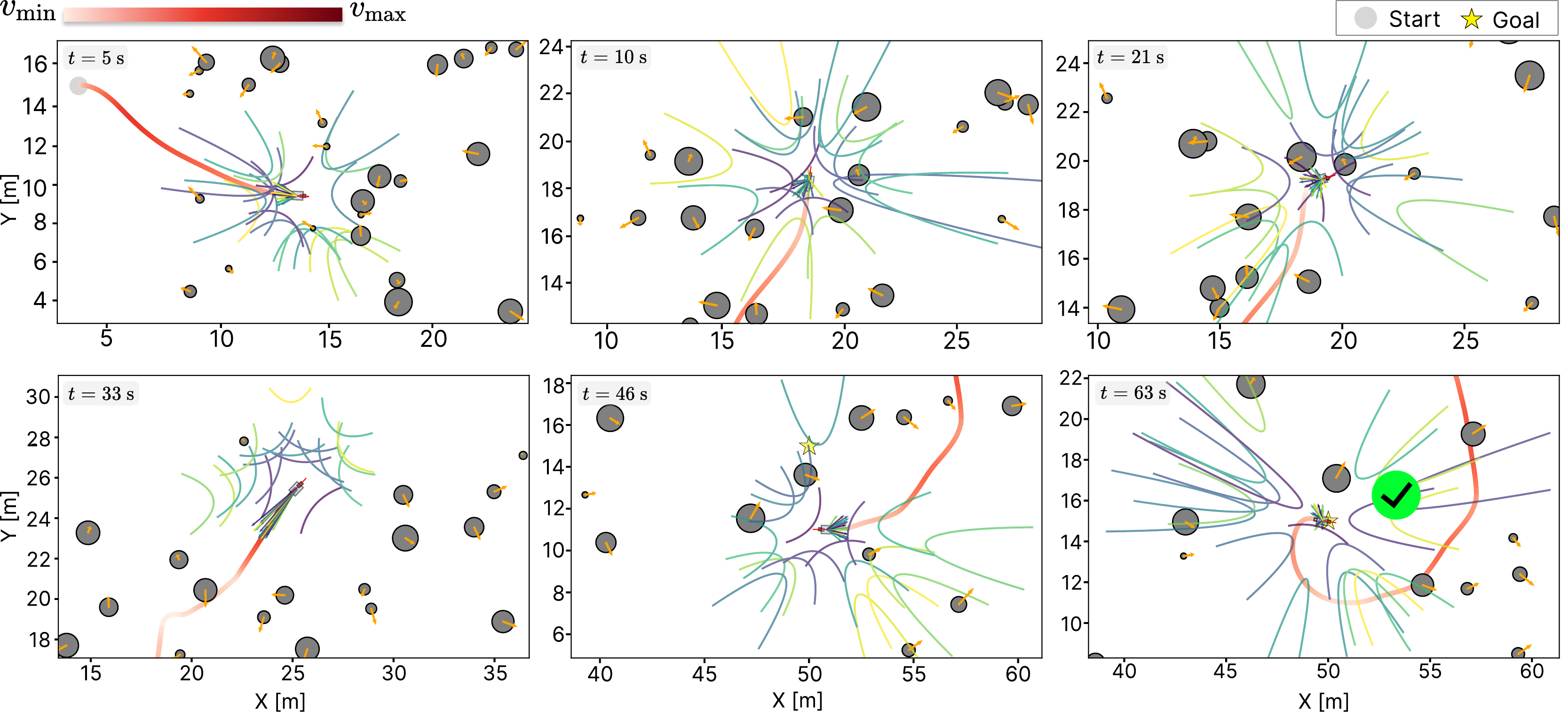}
    \caption{A visualization of a successful navigation scenario using DPCBF with 100 dynamic obstacles, drawn from the statistical results in Fig.~\ref{fig:performance_results}. All other compared baseline methods failed in this same configuration.}
    \label{fig:dpcbf_trialexp_snapshot}
\vspace{-5pt}
\end{figure*}
\subsection{Comparison with C3BF}


We first demonstrate a crucial qualitative comparison in Fig.~\ref{fig:dpcbf_surround}, directly addressing the failure case for C3BF shown in Fig.~\ref{fig:c3bf_cases}d. In this challenging scenario, the robot is initially surrounded by obstacles. While C3BF becomes infeasible due to the complete overlap of collision cones, DPCBF successfully finds a path to the goal. Although the robot is similarly enveloped by parabolic safety boundaries, the dynamic nature of DPCBF provides a key advantage. Specifically, the state-dependent term $\mu(\vx)$ in \eqref{eq:dpcbf_func_def} creates sufficient feasible space for the relative velocity in the CBF-QP. This directly illustrates how DPCBF overcomes the conservatism of cone-based methods.

\begin{figure}[t]
    \centering    \includegraphics[width=0.99\linewidth]{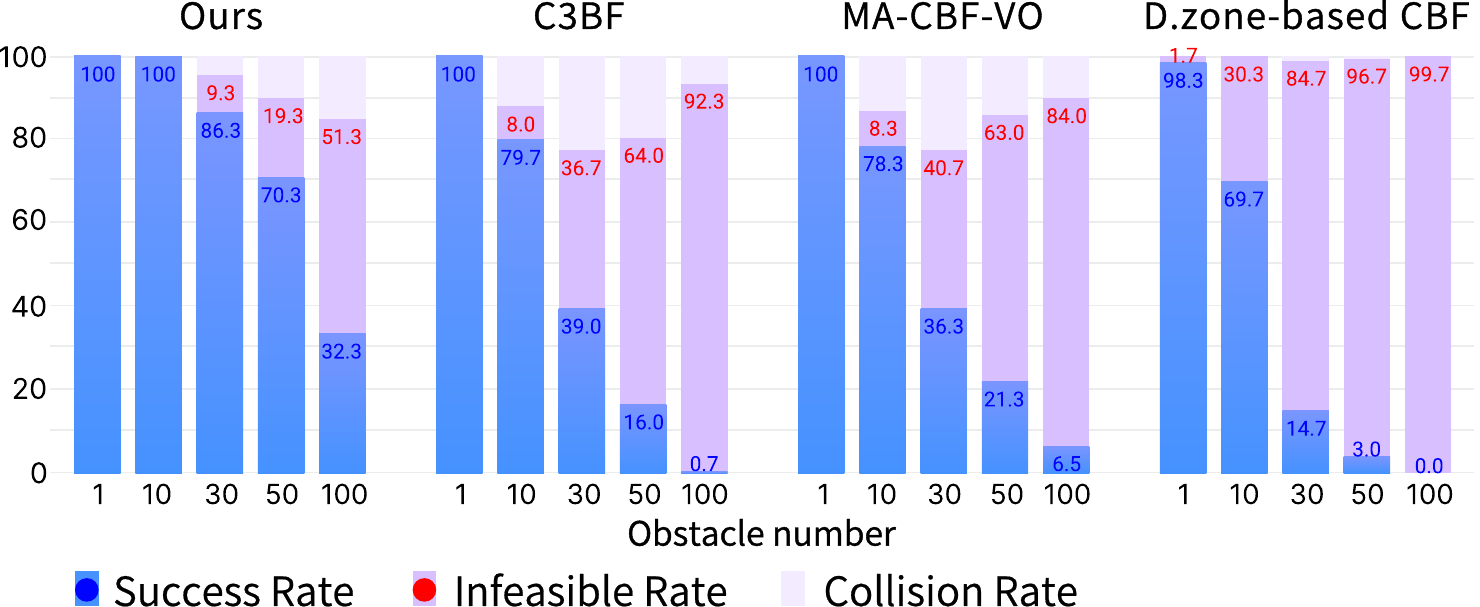}
    \caption{Performance comparison of success, infeasible, and collision rates for our method and three baselines as the number of obstacles increases from 1 to 100. Each bar represents the average of 300 trials, conducted across three scenarios with varying maximum obstacle radii (0.3, 0.5 and 0.7 m). The results highlight that our approach outperforms other state-of-the-art CBF methods by maintaining a high success rate in dense environments where baselines frequently become infeasible.}
    \label{fig:performance_results}
\vspace{-5pt}
\end{figure}

 \begin{figure}[t]
    \centering    \includegraphics[width=0.99\linewidth]{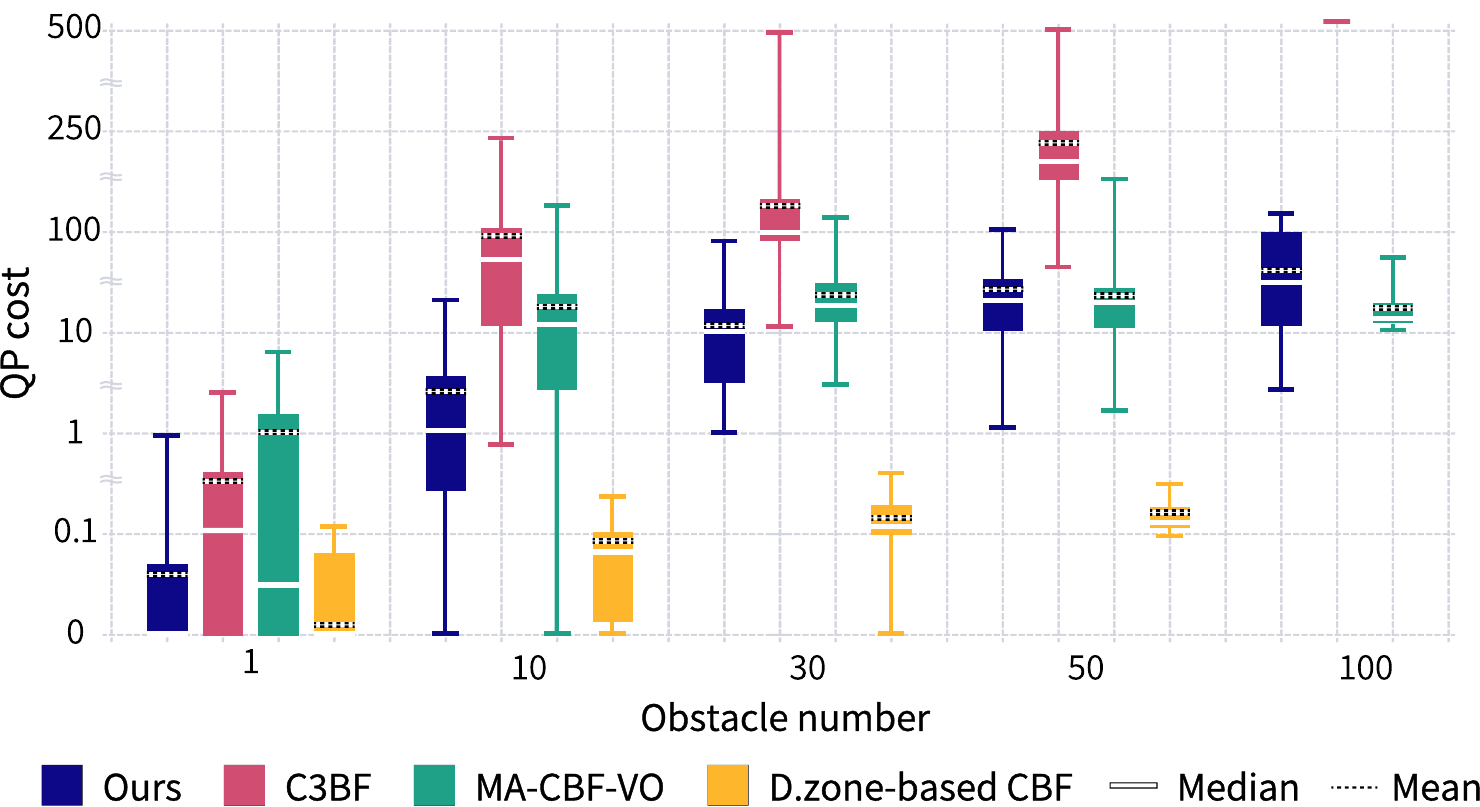}
     \caption{Control intervention, measured by QP cost $\|\vu-\vu_{\textup{ref}}\|^{2}_{2}$, is plotted against obstacle density for each method. Lower costs indicate greater efficiency and less conservative behavior.}
    \label{fig:box_plot}
\vspace{-5pt}
\end{figure}
\subsection{Experimental Results}

\textbf{Performance Analysis in Dense Dynamic Environments.} To test the core hypothesis that DPCBF alleviates infeasibility issues while ensuring safety, we simulate navigation in environments with an increasing number of dynamic obstacles, from 1 to 100. The results are summarized in Fig.~\ref{fig:performance_results}.
We first evaluate the methods in single-obstacle scenarios, where the formal safety guarantee holds for all methods except for Dynamic zone-based CBF. As expected from the theoretical guarantee, all such methods achieve a 100\% success rate. The Dynamic zone-based CBF exhibits a 1.7\% infeasibility rate because it is not a valid CBF for the kinematic bicycle model, regardless of the number of constraints. The performance of the compared methods drops dramatically as the number of dynamic obstacles increases, resulting in frequent QP infeasibility or even collisions. Notably, DPCBF achieves a 100\% success rate even in the 10-obstacle cases. This shows that the collision-cone based methods~\cite{tayal_control_2024,roncero_multiagent_2025} suffer in obstacle-dense environments, where overlapping collision cones severely constrain the feasible control space, leading to frequent QP failures.

\textbf{Analysis on Conservatism.} Fig.~\ref{fig:box_plot} details the QP cost for each method. Our DPCBF consistently exhibits the lower median and mean QP cost, navigating complex scenarios with minimal deviation from the reference controller. In contrast, C3BF requires the largest control interventions.
This reveals a fundamental design limitation that becomes prominent in scenarios with multiple dynamic obstacles: overlapping collision cones overly shrink the safe set. Consequently, the QP with C3BF constraints is forced to either decelerate constantly to maintain the minimum velocity or take a large detour from the optimal trajectory, leading to a longer time to reach the goal. Although the Dynamic zone-based CBF appears to have the lowest QP cost, it is highly prone to infeasibility, as shown in Fig.~\ref{fig:performance_results}. In scenarios with over 50 obstacles, its success rate drops to nearly 0\%. Furthermore, while MA-CBF-VO shows a QP cost comparable to the proposed DPCBF, it has a higher infeasibility and collision rate. This is because its VO constraints are soft constraints that are often relaxed in obstacle-dense environments.

\begin{figure}[t]
    \centering    \includegraphics[width=0.99\linewidth]{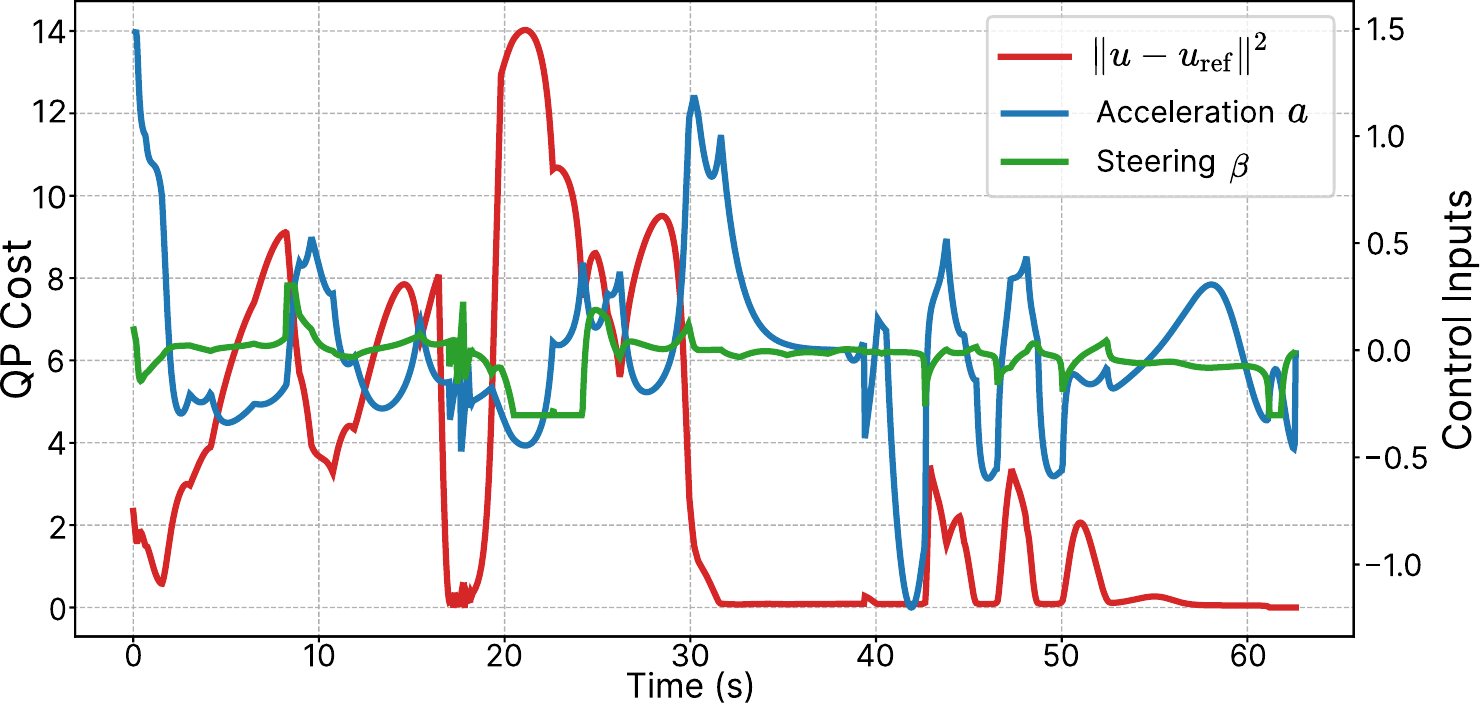}
    \caption{ QP cost and resulting control inputs over time for the dense scenario from Fig.~\ref{fig:dpcbf_trialexp_snapshot}. The peaks in QP cost near $t = 21$~s correspond to the most obstacle-dense moments of the trajectory shown in Fig.~\ref{fig:dpcbf_trialexp_snapshot}.}
    \label{fig:dpcbf_qp_cost}
\vspace{-5pt}
\end{figure}
\textbf{Qualitative Trajectory Analysis.} Fig.~\ref{fig:dpcbf_trialexp_snapshot} visualizes a challenging navigation scenario in which the DPCBF-based QP guides the robot through a dense group of 100 dynamic obstacles with a maximum obstacle radius of $r_{\textup{obs},\max} = 0.7$~m. We also visualize the velocity of the kinematic bicycle along its trajectory, with the corresponding QP cost and control inputs shown in Fig.~\ref{fig:dpcbf_qp_cost}. This demonstrates that DPCBF constraints guide the CBF-QP to effectively adjust both longitudinal and lateral motion around multiple obstacles, successfully performing safe navigation. Importantly, at snapshots taken at $t = 10$~s and $t = 21$~s, the union of the unsafe sets does not render the feasible space empty, whereas methods based on VO or collision cone would be infeasible in the same configurations. The robot is also able to regain high velocity at $t = 33$~s when the obstacles are no longer driving towards it. These examples highlight how DPCBF actively modifies the nominal control inputs to guarantee safety without being overly conservative.

\section{CONCLUSION \label{sec:conclusion}}

 In this paper, we introduced the Dynamic Parabolic Control Barrier Function (DPCBF), a novel CBF formulation for nonholonomic robots navigating in dynamic environments. By defining a safety boundary with a parabola that can adapt based on both relative distance and velocity, DPCBF generates a less conservative constraint that significantly improves the feasibility of the corresponding QP. Extensive simulations validated our approach, demonstrating higher navigation success rates in dense environments compared to state-of-the-art methods, particularly in challenging scenarios with up to 100 obstacles where cone-based approaches fail. Future work will focus on implementing DPCBF on physical hardware and investigating its extension to other complex dynamical systems.

\addtolength{\textheight}{0 cm}   





\bibliographystyle{IEEEtran}
\typeout{}
\bibliography{references.bib}

\clearpage
\onecolumn
\appendix

Recall the proposed DPCBF parameterized by $k_{\lambda}$ and $k_{\mu}$ is the following:
\begin{equation}
    h(\vx; k_{\lambda}, k_{\mu}) = \tilde v_{\mathrm{rel},x} 
       + \lambda(\vx; k_{\lambda})\,\tilde v_{\mathrm{rel},y}^2
       + \mu(\vx; k_{\mu}),
\end{equation}
i.e.

\begin{equation}\label{eq:dpcbf_def}
\boxed{
    h(\vx; k_{\lambda}, k_{\mu})
= \cos\alpha \, v_{\mathrm{rel},x} + \sin\alpha\,v_{\mathrm{rel},y}
  + k_{\lambda}\,\frac{d(\vx)}{\|\vv_{\mathrm{rel}}\|} \,
    \bigl(-\sin\alpha \, v_{\mathrm{rel},x} + \cos\alpha \, v_{\mathrm{rel},y}\bigr)^2
  + k_{\mu}\,d(\vx).
  }
\end{equation}
\subsection{Useful Maths}

This section collects standard analytic inequalities repeatedly invoked in Appendix~\autoref{app:case1} and \autoref{app:case2}.

\begin{proposition}[Infimum sub-additivity]\label{prop:inf_subadd}
Let $\ell_1,\ell_2 \to \RealSpace$ be functions bounded below on a non-empty set $Z$. Then,
\begin{equation}
\inf_{z\in Z}\bigl(\ell_{1}(z)+\ell_{2}(z)\bigr)
\;\ge\;
\inf_{z\in Z}\ell_{1}(z) + \inf_{z\in Z} \ell_{2}(z).
\end{equation}
Equality holds when both infima are attained at a common point in $Z$;
otherwise the inequality is strict.
\end{proposition}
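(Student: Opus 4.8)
The plan is to obtain the inequality directly from the greatest-lower-bound characterization of the infimum, applied pointwise, and then to read off the equality case by exhibiting a feasible point. First I would observe that the two hypotheses are there precisely to avoid degeneracies: because $Z$ is non-empty and each $\ell_i$ is bounded below on $Z$, the numbers $m_1 \coloneqq \inf_{z \in Z}\ell_1(z)$ and $m_2 \coloneqq \inf_{z \in Z}\ell_2(z)$ are finite real numbers, so $m_1+m_2$ is well-defined and the statement is a comparison of real numbers. In the later invocations of the proposition the maps $\ell_1,\ell_2$ are the continuous terms $L_f h(\vx)$, $|C^{a}(\vx)|a_{\max}$, $|C^{\beta}(\vx)|\beta_{\max}$ restricted to $\partial\calC_i$, whose boundedness below is supplied by the compactness/continuity arguments made there, so this hypothesis is genuinely available.

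For the inequality, fix an arbitrary $z\in Z$. By definition of the infimum, $\ell_1(z)\ge m_1$ and $\ell_2(z)\ge m_2$, hence $\ell_1(z)+\ell_2(z)\ge m_1+m_2$. The right-hand side does not depend on $z$, so it is a lower bound of the set $\{\ell_1(z)+\ell_2(z):z\in Z\}$ and therefore does not exceed its greatest lower bound; that is, $\inf_{z\in Z}\bigl(\ell_1(z)+\ell_2(z)\bigr)\ge m_1+m_2$, which is the assertion.

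For the equality clause, suppose there is a common point $z^\star\in Z$ with $\ell_1(z^\star)=m_1$ and $\ell_2(z^\star)=m_2$. Evaluating at the feasible point $z^\star$ gives $\inf_{z\in Z}\bigl(\ell_1(z)+\ell_2(z)\bigr)\le \ell_1(z^\star)+\ell_2(z^\star)=m_1+m_2$, which combined with the inequality above forces equality. Conversely, when the two infima are attained only at distinct points (more generally, when no point simultaneously approaches both), every $z\in Z$ leaves strictly positive slack in at least one of the pointwise bounds $\ell_i(z)\ge m_i$, so the combined bound $m_1+m_2$ is never met and the inequality is strict; I would phrase this direction informally, since in Appendix~\autoref{app:case1}--\autoref{app:case2} the proposition is used only in the ``$\ge$'' direction, to conservatively lower-bound $L_f h(\vx)+\Phi(\vx)$ term by term. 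There is essentially no obstacle in this proof — the whole argument is a single application of the greatest-lower-bound property — so the only point that merits care is the finiteness of $m_1,m_2$, i.e.\ confirming that the ``bounded below'' and ``non-empty'' hypotheses are what make the additive split legitimate.
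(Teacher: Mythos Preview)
The paper states Proposition~\ref{prop:inf_subadd} without proof, listing it among standard analytic inequalities collected for later use. Your argument for the main inequality and for the sufficient condition for equality is correct and is the standard one: pointwise $\ell_i(z)\ge m_i$ gives a $z$-independent lower bound $m_1+m_2$ on the sum, hence the infimum of the sum dominates $m_1+m_2$; and a common minimizer $z^\star$ witnesses the reverse inequality.

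One caveat on the ``otherwise strict'' clause: your informal argument that ``every $z$ leaves strictly positive slack in at least one bound, so the combined bound is never met'' does not by itself yield strict inequality at the infimum level, since the total slack could still tend to zero along a sequence. In fact the clause as written in the paper is not literally true without further hypotheses (take $Z=(0,\infty)$, $\ell_1(z)=1/z$, $\ell_2\equiv 0$: neither infimum is attained at a common point, yet $\inf(\ell_1+\ell_2)=0=\inf\ell_1+\inf\ell_2$). You implicitly flag this with your parenthetical ``more generally, when no point simultaneously approaches both,'' which is the correct sharpening; a clean statement is that equality holds iff there exists a sequence $(z_n)$ along which $\ell_1(z_n)\to m_1$ and $\ell_2(z_n)\to m_2$ simultaneously. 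Since, as you note, only the ``$\ge$'' direction is ever invoked in Appendix~\ref{app:case1}--\ref{app:case2}, this does not affect the downstream argument.
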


\begin{proposition}[Triangle Inequality]\label{prop:triangle}
For any $x,y\in\mathbb{R}$,
\begin{equation}
|x+y| \;\le\; |x| + |y|.
\end{equation}
This classical result is called the \emph{triangle inequality}; it states that, on the real line (and more generally in every normed space), the “length” of one side of a triangle does not exceed the sum of the lengths of the other two.
\end{proposition}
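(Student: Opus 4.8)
The plan is to prove the real-line triangle inequality $|x+y| \le |x| + |y|$ directly from the defining properties of the absolute value, invoking nothing beyond the order structure of $\mathbb{R}$. The single fact I would build on is the two-sided bound $-|t| \le t \le |t|$, valid for every real $t$ because $|t|$ equals either $t$ or $-t$ and is always nonnegative.

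First I would record the pair of bounds $-|x| \le x \le |x|$ and $-|y| \le y \le |y|$, then add them termwise to obtain $-(|x|+|y|) \le x+y \le |x|+|y|$. The concluding step uses the standard characterization that, for $b \ge 0$, one has $|a| \le b$ if and only if $-b \le a \le b$. Applying this with $a = x+y$ and $b = |x|+|y|$ yields $|x+y| \le |x|+|y|$ at once. An equally brief alternative, which I would note as a remark, is to square: since both sides are nonnegative it suffices to compare $(x+y)^2 = x^2 + 2xy + y^2$ with $(|x|+|y|)^2 = x^2 + 2|x|\,|y| + y^2$, whose difference is $2(|x|\,|y| - xy) = 2(|xy| - xy) \ge 0$ because $xy \le |xy|$; monotonicity of the square root on $[0,\infty)$ then recovers the claim.

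There is essentially no obstacle here, as the statement is elementary and is included only to be cited as a tool in the appendix computations (notably in bounding $|C^{a}(\vx)|$ and $|C^{\beta}(\vx)|$). The one point that warrants a line of care is the passage from the two-sided bound to the absolute-value inequality: one must check that $b = |x|+|y|$ is nonnegative so that the equivalence $|a|\le b \iff -b \le a \le b$ legitimately applies, which is immediate since absolute values are nonnegative.
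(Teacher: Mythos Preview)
Your argument is correct and entirely standard: the chain $-|t|\le t\le|t|$, addition, and the equivalence $|a|\le b\iff -b\le a\le b$ for $b\ge 0$ are exactly the usual ingredients, and you correctly flag the nonnegativity of $|x|+|y|$ as the only point needing a word. The squaring alternative is also valid.

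There is nothing to compare against here: the paper does not supply a proof of this proposition at all. It simply records the triangle inequality as a classical fact to be cited later (in the supremum form of \autoref{cor:triangle_inf} and in the bounding of $|C^{a}(\vx)|$), so your write-up already goes beyond what the paper provides.
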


\begin{corollary}[Supremum form of Proposition~\ref{prop:triangle}]\label{cor:triangle_inf}
Let $Z\subset\mathbb{R}$ be a non-empty set. Then
\begin{equation}
\sup_{z\in Z} |\ell_{1}(z) + \ell_{2}(z)|
\;\le\;
\sup_{z\in Z} |\ell_{1}(z)|
\;+\;
\sup_{z\in Z} |\ell_{2}(z)|.
\end{equation}

\end{corollary}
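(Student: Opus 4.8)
The plan is to lift the pointwise triangle inequality of \autoref{prop:triangle} to the supremum by invoking the least-upper-bound property. First I would fix an arbitrary point $z\in Z$ and apply \autoref{prop:triangle} with $x=\ell_{1}(z)$ and $y=\ell_{2}(z)$, which gives $|\ell_{1}(z)+\ell_{2}(z)| \le |\ell_{1}(z)|+|\ell_{2}(z)|$. Since this $z$ is fixed but arbitrary, each summand on the right is dominated by its own supremum over $Z$, namely $|\ell_{1}(z)| \le \sup_{z'\in Z}|\ell_{1}(z')|$ and $|\ell_{2}(z)| \le \sup_{z'\in Z}|\ell_{2}(z')|$. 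Chaining these bounds yields $|\ell_{1}(z)+\ell_{2}(z)| \le \sup_{z'\in Z}|\ell_{1}(z')| + \sup_{z'\in Z}|\ell_{2}(z')|$ for every $z\in Z$.

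The key observation in the next step is that the right-hand side of this last inequality no longer depends on $z$; it is therefore a single uniform upper bound for the entire family $\{\,|\ell_{1}(z)+\ell_{2}(z)| : z\in Z\,\}$. By definition the supremum is the least upper bound of this family, so $\sup_{z\in Z}|\ell_{1}(z)+\ell_{2}(z)|$ cannot exceed any upper bound, and in particular cannot exceed the one just constructed. This is exactly the claimed inequality, and the argument is complete whenever both suprema on the right are finite.

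The only point requiring care — and the closest thing to an obstacle in an otherwise routine argument — is the possibility that one of the suprema on the right equals $+\infty$, since, unlike \autoref{prop:inf_subadd}, the present statement imposes no boundedness hypothesis on $\ell_{1}$ or $\ell_{2}$. In that degenerate case the right-hand side is $+\infty$ and the inequality holds trivially under the usual extended-real-line conventions; otherwise the least-upper-bound reasoning above applies verbatim. I expect no further subtleties, as the result is a direct consequence of \autoref{prop:triangle} together with the monotonicity of the supremum.
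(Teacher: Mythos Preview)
Your argument is correct and is exactly the standard derivation one would expect: apply \autoref{prop:triangle} pointwise, bound each summand by its supremum, and use the least-upper-bound property. The paper itself states this corollary without proof, treating it as an immediate consequence of the triangle inequality, so there is nothing further to compare.
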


\begin{proposition}[Reverse Triangle Inequality]\label{prop:reverse}
For any $x,y\in\mathbb{R}$,
\begin{equation}
|x+y| \;\ge\; |x| - |y|.
\end{equation}
This is the \emph{reverse triangle inequality}; it provides a lower bound on the absolute value of a sum in terms of the absolute values of its summands.
\end{proposition}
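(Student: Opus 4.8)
The plan is to obtain Proposition~\ref{prop:reverse} as an immediate consequence of the ordinary triangle inequality already established in Proposition~\ref{prop:triangle}, rather than arguing directly from a case split on the signs of $x$ and $y$. The guiding observation is that the triangle inequality is an upper bound on $|x+y|$, whereas here we want a lower bound; the standard trick to convert one into the other is to apply the triangle inequality to a cleverly chosen decomposition and then rearrange.

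Concretely, I would start from the trivial algebraic identity $x = (x+y) + (-y)$, valid for all $x,y\in\mathbb{R}$. Treating $x+y$ and $-y$ as the two summands and invoking Proposition~\ref{prop:triangle} gives
\begin{equation}
|x| = \bigl|(x+y) + (-y)\bigr| \le |x+y| + |-y| = |x+y| + |y|,
\end{equation}
where the final equality uses $|-y| = |y|$. Subtracting $|y|$ from both sides isolates the desired bound $|x+y| \ge |x| - |y|$, which is exactly the claim. No appeal to the ordering of $x$ and $y$ or to their signs is needed, since the routing through Proposition~\ref{prop:triangle} handles all cases uniformly.

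There is essentially no obstacle in this argument: the single non-routine decision is selecting the decomposition $x = (x+y) - y$ so that the quantity $x+y$ we wish to lower-bound appears \emph{inside} an absolute value to which the known triangle inequality applies. As a sanity check on the statement, note that whenever $|x| \le |y|$ the right-hand side $|x| - |y|$ is non-positive while the left-hand side $|x+y|$ is non-negative, so the inequality holds trivially in that regime; the nontrivial content is confined to the case $|x| > |y|$, which the one-line derivation above nonetheless covers without a separate treatment.
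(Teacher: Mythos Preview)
Your derivation is correct and is the standard one-line proof of the reverse triangle inequality from the ordinary triangle inequality. Note, however, that the paper does not actually supply a proof of Proposition~\ref{prop:reverse}: it is stated as a classical fact and used directly, so there is nothing in the paper to compare your argument against beyond observing that your approach is the textbook route.
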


\begin{corollary}[Infimum form of Proposition~\ref{prop:reverse}]\label{cor:reverse_inf}
Let $Z\subset\mathbb{R}$ be a non-empty set, and let $\ell_{1}, \ell_{2}:Z\to\mathbb{R}$ be real-valued functions. Then,
\begin{equation}
\inf_{z\in Z} |\ell_{1}(z)+\ell_{2}(z)|
\;\ge\;
\inf_{z\in Z} |\ell_{1}(z)|
\;-\;
\sup_{z\in Z} |\ell_{2}(z)|.
\end{equation}
\end{corollary}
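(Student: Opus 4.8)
The plan is to reduce the statement to the pointwise reverse triangle inequality already established in Proposition~\ref{prop:reverse}, applied separately at each $z\in Z$, and then pass to the infimum. First I would fix an arbitrary $z\in Z$ and invoke Proposition~\ref{prop:reverse} with $x=\ell_1(z)$ and $y=\ell_2(z)$, which yields
\begin{equation}
|\ell_1(z)+\ell_2(z)| \;\ge\; |\ell_1(z)| - |\ell_2(z)|.
\end{equation}
This is the only nontrivial analytic ingredient, and it is supplied by the preceding proposition, so the remainder of the argument is purely order-theoretic.

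Next I would replace the $z$-dependent term $|\ell_2(z)|$ by a uniform bound. Since $|\ell_2(z)| \le \sup_{w\in Z}|\ell_2(w)|$ for every $z\in Z$, we have $-|\ell_2(z)| \ge -\sup_{w\in Z}|\ell_2(w)|$, and therefore
\begin{equation}
|\ell_1(z)+\ell_2(z)| \;\ge\; |\ell_1(z)| - \sup_{w\in Z}|\ell_2(w)| \qquad \forall\, z\in Z.
\end{equation}
The key observation is that the right-hand side is now the function $|\ell_1(\cdot)|$ shifted by a constant independent of $z$. Taking the infimum over $z\in Z$ on both sides and using that an additive constant passes through the infimum, I obtain
\begin{equation}
\inf_{z\in Z}|\ell_1(z)+\ell_2(z)| \;\ge\; \inf_{z\in Z}|\ell_1(z)| - \sup_{z\in Z}|\ell_2(z)|,
\end{equation}
which is exactly the claimed inequality.

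The only points requiring a word of care are well-definedness of the three extremal quantities. Each of $|\ell_1|$ and $|\ell_2|$ is nonnegative, so $\inf_{z}|\ell_1(z)|$ exists as a finite number in $[0,\infty)$ because it is bounded below by $0$; if $\sup_{z}|\ell_2(z)|=+\infty$ the asserted lower bound is $-\infty$ and the inequality holds trivially, so one may assume this supremum is finite and the manipulation above is legitimate. I do not expect any genuine obstacle here: the entire content is the pointwise reverse triangle inequality together with the elementary fact that subtracting a constant commutes with taking an infimum.
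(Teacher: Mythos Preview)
Your argument is correct and is exactly the natural derivation the paper has in mind: the corollary is stated in the paper's ``Useful Maths'' section as an immediate consequence of Proposition~\ref{prop:reverse} without a written-out proof, and your pointwise application of the reverse triangle inequality followed by bounding $|\ell_2(z)|$ by its supremum and passing to the infimum is precisely that intended one-line justification.
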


\begin{proposition}\label{prop:trigono_max}
For every real angle $\theta \in \mathbb{R}$,
\begin{equation}
    -\frac{1}{2} \le \sin\theta\cos\theta \le \frac{1}{2},
\qquad \text{equivalently} \qquad
\bigl| \sin \theta \cos\theta\bigr| \le \frac{1}{2}.
\end{equation}
Moreover, equality holds if and only if 
\begin{equation}
    \theta = \frac{\pi}{4} + k\frac{\pi}{2},
\qquad k \in \mathbb{Z}.
\end{equation}
\end{proposition}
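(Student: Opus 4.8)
The plan is to reduce everything to the double-angle identity $\sin\theta\cos\theta = \tfrac{1}{2}\sin(2\theta)$, after which both the two-sided bound and the equality characterization follow immediately from the elementary range of the sine function. This turns a statement about a product into a statement about a single sinusoid, for which the extremal behavior is standard.

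First I would rewrite the product using $\sin\theta\cos\theta = \tfrac{1}{2}\sin(2\theta)$, which holds for all $\theta \in \mathbb{R}$. Since $-1 \le \sin(2\theta) \le 1$ for every real argument, multiplying through by $\tfrac{1}{2}$ yields $-\tfrac{1}{2} \le \sin\theta\cos\theta \le \tfrac{1}{2}$, which is precisely the asserted inequality (equivalently $|\sin\theta\cos\theta| \le \tfrac{1}{2}$, upon noting that the two-sided bound and the absolute-value bound are logically identical).

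Next, for the equality clause, I would observe that $|\sin\theta\cos\theta| = \tfrac{1}{2}$ if and only if $|\sin(2\theta)| = 1$, i.e. $\sin(2\theta) = \pm 1$. Solving $\sin(2\theta) = 1$ gives $2\theta = \tfrac{\pi}{2} + 2k\pi$, hence $\theta = \tfrac{\pi}{4} + k\pi$; solving $\sin(2\theta) = -1$ gives $2\theta = \tfrac{3\pi}{2} + 2k\pi$, hence $\theta = \tfrac{3\pi}{4} + k\pi$. I would then verify that the union of these two arithmetic progressions is exactly the single progression $\theta = \tfrac{\pi}{4} + k\tfrac{\pi}{2}$, $k \in \mathbb{Z}$: the even-index members $k = 2m$ reproduce $\tfrac{\pi}{4} + m\pi$, while the odd-index members $k = 2m+1$ reproduce $\tfrac{3\pi}{4} + m\pi$, so the stated description with common difference $\tfrac{\pi}{2}$ captures both families with neither omission nor duplication.

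There is no genuine obstacle in this argument; the sole point meriting care is the bookkeeping in the last step, namely confirming that interleaving the two solution families for $\sin(2\theta) = \pm 1$ collapses exactly into the progression of step $\tfrac{\pi}{2}$ claimed in the statement, rather than generating extraneous points or missing any. A quick check against the representative values $\theta \in \{\tfrac{\pi}{4}, \tfrac{3\pi}{4}, \tfrac{5\pi}{4}, \tfrac{7\pi}{4}\}$, at which $\sin(2\theta)$ alternates between $+1$ and $-1$, suffices to confirm the equivalence before concluding.
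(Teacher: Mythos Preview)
Your proposal is correct and follows essentially the same route as the paper's proof: both invoke the double-angle identity $\sin\theta\cos\theta = \tfrac{1}{2}\sin(2\theta)$, use $|\sin(2\theta)| \le 1$ for the bound, and characterize equality via $|\sin(2\theta)| = 1$, i.e.\ $2\theta = \tfrac{\pi}{2} + k\pi$. Your version is simply more explicit in verifying that the two solution families for $\sin(2\theta) = \pm 1$ interleave into the single progression of step $\tfrac{\pi}{2}$, which the paper leaves implicit.
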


\begin{proof}
The result follows from the identity $\sin 2\theta = 2\sin \theta \cos \theta$ and the bound $|\sin 2\theta| \leq 1$. Equality requires $|\sin 2\theta| = 1$, which occurs when $2 \theta = \frac{\pi}{2} + k \pi$ for any integer $k$.
\end{proof}

\subsection{Extra Notations}\label{app:extra_notation}

\begin{figure}[t]
    \centering
    \includegraphics[width=0.6\linewidth]{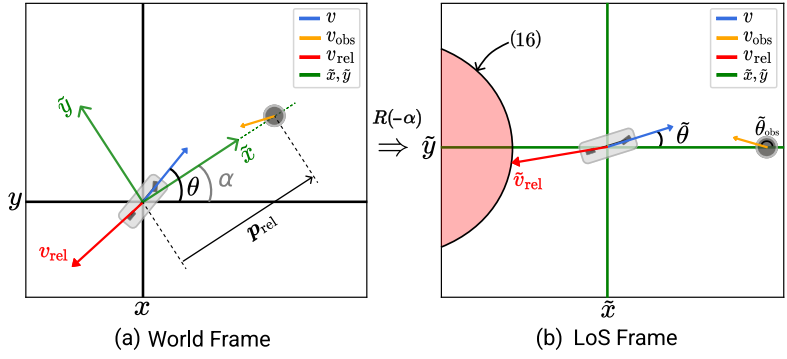}

    \caption{World frame (left) and LoS frame (right) geometries showing all relevant heading angles and relative velocity components ($\theta, \tilde \theta, \tilde \theta_{\textup{obs}} , \alpha , \psi , \tilde \psi$).}
    
    \label{fig:dpcbf_appendix_notation}
\end{figure}

To facilitate the subsequent analysis, this section introduces several key coordinate frames, angles, and a comprehensive summary of notation.

\paragraph{Coordinate frames}
As defined in \eqref{eq:dpcbf_rotation} in the main text, we rotate the world
frame by the Line-of-Sight (LoS) angle $\alpha$ so that the $\tilde x$–axis points from
the robot to the obstacle. 
All quantities expressed in this LoS frame are denoted with a tilde. In particular, the relative velocity $\vv_{\textup{rel}}$ transforms as $\tilde \vv_{\textup{rel}}= R(-\alpha) \vv_{\textup{rel}}$, with components:
\begin{equation}\label{eq:tilded_vel}
      \tilde v_{\textup{rel},x} = \| \vv_{\textup{rel}} \| \cos\tilde\psi,
  \quad
  \tilde v_{\textup{rel},y}= \| \vv_{\textup{rel}}\|\sin\tilde\psi,
  \qquad
  \tilde\psi = \psi-\alpha.
\end{equation}
Here, $\psi = \operatorname{atan2}(v_{\textup{rel},y}, v_{\textup{rel},x})$ is the heading of the relative velocity in the world frame.

\paragraph{Heading angles}
In the LoS frame, heading angles are measured relative to the $\tilde x$-axis. The robot and the obstacle headings are therefore given by (see Fig.~\ref{fig:dpcbf_appendix_notation}):
\begin{equation}\label{eq:los_angles}
    \tilde\theta = \theta-\alpha,
    \qquad
    \tilde\theta_{\textup{obs}} = \theta_{\textup{obs}}-\alpha .
\end{equation}

The notation used throughout the Appendix is organized in \autoref{table:notation}.

\begin{table}[!ht]
\centering
\caption{Nomenclature}
\renewcommand{\arraystretch}{1.05}
\begin{tabular}{@{} l p{0.64\linewidth} @{}}
\toprule
\textbf{Symbol} & \textbf{Definition} \\
\midrule
\multicolumn{2}{@{}l}{\textit{Geometry:}}\\
$\vp_{\textup{rel}}$ & Relative position vector\\
$\vv_{\textup{rel}}$ & Relative velocity vector\\
$\alpha$ & Line–of–Sight (LoS) angle \\
$\psi$ & Relative velocity angle in the world frame\\
$\tilde v_{\textup{rel},x/y}$ & Components of $\vv_{\textup{rel}}$ in the LoS frame \\
$\tilde\psi$ & Relative velocity angle in the LoS frame (=$\psi-\alpha$) \\
$d(\vx)$ & Clearance, $\|\vp_{\textup{rel}}\|^2 - r^{2}$ \\
\addlinespace[2pt]
\multicolumn{2}{@{}l}{\textit{DPCBF and Parameters:}}\\
$k_\lambda,k_\mu$ & Positive tunable parameters \\
$\lambda(\vx; k_\lambda)$ & Curvature parameter, $k_\lambda d(\vx)/\|\vv_{\textup{rel}}\|$ \\
$\mu(\vx; k_\mu)$ & Vertex-shift parameter, $k_\mu d(\vx)$ \\
$h(\vx; k_{\lambda}, k_{\mu})$ & DPCBF candidate function \\
\addlinespace[2pt]
\multicolumn{2}{@{}l}{\textit{Input–Related Terms:}}\\
$C^{a}(\vx), C^{\beta}(\vx)$ & Lie derivative coefficients for  acceleration and steering \\
$a_{\max},\beta_{\max}$ & Admissible input bounds \\
\addlinespace[2pt]
\multicolumn{2}{@{}l}{\textit{Bounds and Constants:}}\\
$p_{\min},p_{\max}$ & Minimum/maximum relative distance\\
$d_{\min},d_{\max}$ & Minimum/maximum clearance \\
$\|\vv_{\textup{rel}}\|_{\min/\max}$ & Minimum/maximum relative speed\\
$\ell_r$ & Rear axle to CoM distance (bicycle model) \\
$\tilde\psi_{\max}$ & Upper bound on $|\tilde \psi|$ on the safety boundary, derived in \autoref{lemma:trigono_bound}\\
\bottomrule
\end{tabular}
\label{table:notation}
\end{table}

\subsection{Problem Formulation}\label{app:problem_formulation}

Our objective is to prove that the candidate barrier function~$h$ is a valid CBF as defined in \autoref{def:cbf_basic}. By Nagumo's Theorem, it is sufficient to verify the CBF condition~\eqref{cond:cbf_basic} on the boundary of the safe set, $\partial \cal{C}$.

\begin{remark}[CBF Condition Under Input Constraints]\label{remark:boxNagumo}
Let the set of admissible inputs for the System~\eqref{eq:bicycle_simplified_sys} be
\begin{equation}
  \mathcal U= \{ [a, \beta]^\top \mid |a| \le a_{\max}, |\beta| \le \beta_{\max} \}.
\end{equation}
The control authority, representing the maximum effect of the input on $\dot{h}(\vx)$, is given by:
\begin{equation}\label{eq:appendix_input term_deriv}
  \Phi (\vx) \coloneqq \sup_{\vu \in \cal{U}} \lieder_g h(\vx)\vu = \sup_{\vu \in \cal{U}} \begin{bmatrix} C^{a}(\vx) \\C^{\beta}(\vx)\end{bmatrix}^\top \vu
          = |C^{a}(\vx) | a_{\max} + | C^{\beta}(\vx) | \beta_{\max}.
\end{equation}
where the coefficients $C^{a}(\vx)$ and $C^{\beta}(\vx)$ are the components of the Lie derivative $\lieder_g h(\vx)$ as
\begin{subequations}
\begin{align}
    \big|C^{a}(\vx)\big| =&  \bigg| \Bigl[
        -1
        + k_\lambda \frac{d(\vx)}{\|v_{\textup{rel}}\|^{3}}  v_{\textup{obs}} \cos\tilde\theta_{\textup{obs}}\tilde v_{\textup{rel},y}^{2}
      \Bigr] \cos\tilde \theta  
      +  \Bigl[k_\lambda \frac{d(\vx)}{\|v_{\textup{rel}}\|^{3}} v_{\textup{obs}} \sin \tilde \theta_{\textup{obs}} \tilde v_{\textup{rel},y}^{2}
          -2 k_\lambda \frac{d(\vx)}{\|v_{\textup{rel}}\|}
            \tilde v_{\textup{rel},y}
      \Bigr] \sin\tilde \theta \nonumber \\
      &+ \Bigl[
        - k_\lambda \frac{d(\vx)}{\|v_{\textup{rel}}\|^{3}}
          v \tilde v_{\textup{rel},y}^{2}
      \Bigr] \bigg|, \\
    \big|C^{\beta}(\vx)\big| =& \bigg| v\Bigl[
            -\frac{\tilde v_{\textup{rel},y}}{\|p_{\textup{rel}}\|}
            + 2k_\lambda \frac{d(\vx)}{\| v_{\textup{rel}} \|} 
              \frac{\tilde v_{\textup{rel},y}\tilde v_{\textup{rel},x}} {\|p_{\textup{rel}}\|}
        + \frac{v}{\ell_r} \Bigl(k_\lambda \frac{d(\vx)}{\|v_{\textup{rel}}\|^3} v_{\textup{obs}} \sin\tilde\theta_{\textup{obs}} \tilde v_{\textup{rel},y}^{2}
        - 2 k_\lambda\frac{d(\vx)}{\|v_{\textup{rel}}\|} \tilde{v}_{\textup{rel},y}
        \Bigr)
      \Big] \cos\tilde \theta \nonumber
  \\
    &  + v\Bigl[
         \Bigl(
            k_\lambda \frac{\|p_{\textup{rel}}\|}{d(\vx)}
              \frac{\tilde{v}_{\textup{rel},y}^{2}}{\|v_{\textup{rel}}\|}
          + k_\mu \frac{\|p_{\textup{rel}}\|}{d(\vx)}\Bigr)
        + \frac{v}{\ell_r}\Bigl(
            1
          - k_\lambda \frac{d(\vx)}{\|v_{\textup{rel}}\|^{3}}v_{\textup{obs}} \cos\tilde \theta_{\textup{obs}} \tilde{v}_{\textup{rel},y}^{2}\Bigr)
      \Bigr]
      \sin\tilde \theta \bigg|.
\end{align}
\end{subequations}
If the following holds:
\begin{equation}\label{eq:appendix_nagumo}
  \lieder_f h(\vx; k_{\lambda}, k_{\mu}) + \Phi(\vx; k_{\lambda}, k_{\mu}) \;\ge\; 0,\qquad \forall \vx\in\partial\mathcal C,
\end{equation}
where
\begin{align} \label{eq:appendix_drift_term_deriv}
    \lieder_{f} h(\vx) = v\biggl[\biggl(
  -k_\lambda
     \frac{\|\vp_{\textup{rel}}\|}{d(\vx)}
     \frac{\tilde v_{\textup{rel},y}^2}{\|\vv_{\textup{rel}}\|}
     -k_\mu \frac{\|\vp_{\textup{rel}}\|}{d(\vx)} \biggr) \cos\tilde \theta + \biggl(2k_\lambda
     \frac{\tilde v_{\textup{rel},y}}{\|\vv_{\textup{rel}}\|}
     \frac{d(\vx)}{\|\vp_{\textup{rel}}\|}\tilde v_{\textup{rel},x}
     -\frac{\tilde v_{\textup{rel},y}}{\|\vp_{\textup{rel}}\|}
  \biggr)\sin\tilde \theta \biggr],
\end{align}
then $h$ is a valid CBF for System~\eqref{eq:bicycle_simplified_sys} and the safe set $\mathcal C$ is rendered forward invariant.
\end{remark}

Note, the validity of the DPCBF depends on the parameters $k_{\lambda}$ and $k_{\mu}$. Therefore, we formulate the following problem:

\begin{problem}[DPCBF Validity]\label{prob:validity_of_dpcbf}
Find parameters $k_{\lambda} > 0$ and $k_{\mu} > 0$ such that the CBF condition~\eqref{eq:appendix_nagumo} holds for all states on the safety boundary $\vx \in \partial \mathcal{C}$.
\end{problem}

To solve \autoref{prob:validity_of_dpcbf}, we partition the boundary $\partial \cal{C}$ and analyze each subset separately.

\begin{remark}[Sufficient Condition via Infimum Sub-additivity] \label{remark:partition}
    Let the boundary be partitioned as $\partial \calC = \partial \calC_{1} \cup \partial \calC_{2}$. The CBF condition~\eqref{eq:appendix_nagumo} mush hold over each subset, i.e., for $i \in \{1, 2\}$:
    \begin{equation}
    \inf_{\vx \in \partial \cal{C}_{i}}\big[\lieder_f h(\vx)+|C^{a}(\vx)|a_{\max}+|C^{\beta}(\vx)|\beta_{\max}\big] \ge 0 ,\quad i=1,2.
    \end{equation}
    Using the property of infimum sub-additivity (\autoref{prop:inf_subadd}), we have:
    \begin{align}
        &\inf_{\vx\in\partial\mathcal C_i}\big[\lieder_f h(\vx)+|C^{a}(\vx)|a_{\max}+|C^{\beta}(\vx)|\beta_{\max}\big] \\
        \text{\autoref{prop:inf_subadd}} \Rightarrow \quad 
        \geq & 
        \underbrace{\inf_{\vx\in\partial\mathcal C_i}\big[\lieder_f h(\vx)\big]}_{D_{i, \min}(k_{\lambda}, k_{\mu})}+ \underbrace{\inf_{\vx\in\partial\mathcal C_i}\big[|C^{a}(\vx)|a_{\max} \big]}_{C^{a}_{\min}(k_{\lambda}, k_{\mu})}  + \underbrace{\inf_{\vx\in\partial\mathcal C_i}\big[|C^{\beta}(\vx)|\beta_{\max}\big]}_{C^{\beta}_{\min}(k_{\lambda}, k_{\mu})}  \ge 0 ,\quad i = 1,2.
    \end{align}
    Therefore, a sufficient condition for verifying the DPCBF is to show that for each partition $i \in \{1,2\}$ :
    \begin{equation}\label{eq:dpcbf_condition_converted}
    D_{i, \min}(k_{\lambda}, k_{\mu})+ \underbrace{C^{a}_{\min}(k_{\lambda}, k_{\mu}) + C^{\beta}_{\min}(k_{\lambda}, k_{\mu})}_{\Phi_{i, \min}(k_{\lambda}, k_{\mu})}   \ge 0,\quad i=1,2.
    \end{equation}
    Our proof proceeds by proving \eqref{eq:dpcbf_condition_converted} holds, which together imply \eqref{eq:appendix_nagumo}.
\end{remark}

First, we demonstrate that the worst-case analysis can be restricted to a smaller, critical set of robot and obstacle headings, simplifying the search for the infimum.

\begin{lemma}[Critical-Heading Set]\label{lemma:dpcbf-heading-angle-envelopes}
The worst-case analysis of the CBF condition~\eqref{eq:dpcbf_condition_converted} occurs within the set~
\begin{equation}
    \mathcal{A} =\{ \vx \in \partial \mathcal C \mid \tilde\theta\in[-\frac{\pi}{2},\frac{\pi}{2}],
      \tilde\theta_{\textup{obs}}\in[\frac{\pi}{2},\frac{3\pi}{2}]\}.
\end{equation}
\end{lemma}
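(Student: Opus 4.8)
The plan is to treat \autoref{lemma:dpcbf-heading-angle-envelopes} as a domain-reduction step: I will show that for every admissible state $\vx\in\partial\calC\setminus\mathcal{A}$ the Nagumo inequality \eqref{eq:appendix_nagumo} already holds, so the only states that can make \eqref{eq:dpcbf_condition_converted} tight — the worst case — necessarily lie in $\mathcal{A}$, and the later parameter search may be carried out over critical headings only.

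First I would extract the structural consequence of being on the boundary. For $\vx\in\partial\calC$, $h(\vx)=0$ gives $\tilde v_{\textup{rel},x}=-\lambda(\vx)\,\tilde v_{\textup{rel},y}^{2}-\mu(\vx)$; since Assumption~\ref{assum:clearance} forces $d(\vx)\ge d_{\min}>0$ and $\|\vv_{\textup{rel}}\|>0$ on the admissible set, both $\lambda(\vx)=k_\lambda d(\vx)/\|\vv_{\textup{rel}}\|$ and $\mu(\vx)=k_\mu d(\vx)$ are strictly positive, hence $\tilde v_{\textup{rel},x}<0$. Substituting $\tilde v_{\textup{rel},x}=v_{\textup{obs}}\cos\tilde\theta_{\textup{obs}}-v\cos\tilde\theta$, this "approaching" condition reads $v\cos\tilde\theta-v_{\textup{obs}}\cos\tilde\theta_{\textup{obs}}=-\tilde v_{\textup{rel},x}>0$. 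I would also record the sign pattern of the drift term \eqref{eq:appendix_drift_term_deriv}, written as $L_f h(\vx)=v\big(A(\vx)\cos\tilde\theta+B(\vx)\sin\tilde\theta\big)$: the coefficient $A(\vx)=-\tfrac{\|\vp_{\textup{rel}}\|}{d(\vx)}\big(k_\lambda\tfrac{\tilde v_{\textup{rel},y}^{2}}{\|\vv_{\textup{rel}}\|}+k_\mu\big)$ is strictly negative, and $B(\vx)=\tfrac{\tilde v_{\textup{rel},y}}{\|\vp_{\textup{rel}}\|}\big(2k_\lambda\tfrac{d(\vx)}{\|\vv_{\textup{rel}}\|}\tilde v_{\textup{rel},x}-1\big)$ has sign opposite to $\tilde v_{\textup{rel},y}$, since the parenthesised factor is $<-1$ on $\partial\calC$. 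Finally, $\Phi(\vx)\ge 0$ always.

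Next I would collapse the $\tilde v_{\textup{rel},y}$ degree of freedom via the reflection $(\tilde\theta,\tilde\theta_{\textup{obs}})\mapsto(-\tilde\theta,-\tilde\theta_{\textup{obs}})$: it flips $\tilde v_{\textup{rel},y}$ while leaving $\tilde v_{\textup{rel},x}$, $\|\vv_{\textup{rel}}\|$, $d$ and $\cos\tilde\theta$ invariant, so it maps $\partial\calC$ to itself, and a parity check of the coefficients in $L_f h$, $C^{a}$, $C^{\beta}$ shows it leaves $L_f h(\vx)+\Phi(\vx)$ invariant; hence it suffices to treat $\tilde v_{\textup{rel},y}\ge 0$, i.e. $B(\vx)\le 0$. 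Then a short case split finishes the argument. If $\cos\tilde\theta\le 0$, the approaching identity gives $v_{\textup{obs}}\cos\tilde\theta_{\textup{obs}}=v\cos\tilde\theta+\tilde v_{\textup{rel},x}<0$, so $\tilde\theta_{\textup{obs}}\in[\tfrac{\pi}{2},\tfrac{3\pi}{2}]$ automatically and only the robot heading is in question; here $A(\vx)\cos\tilde\theta\ge 0$, and combining this with $B(\vx)\le 0$, the sign of $\sin\tilde\theta$, the quadrant restrictions implied by $\tilde v_{\textup{rel},x}<0$, and the control authority $\Phi(\vx)$ (whose $k_\mu\tfrac{\|\vp_{\textup{rel}}\|}{d}$ contribution to $C^{\beta}$ is large precisely in the troublesome quadrant) one obtains $L_f h(\vx)+\Phi(\vx)\ge 0$. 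If instead $\cos\tilde\theta\ge 0$ but $\cos\tilde\theta_{\textup{obs}}>0$ — a receding obstacle, the only remaining way to be outside $\mathcal{A}$ — one shows the pair $(L_f h,\Phi)$ is at least as favourable as at the corresponding state with $v_{\textup{obs}}\cos\tilde\theta_{\textup{obs}}=0$, which belongs to $\mathcal{A}$. What is left over is exactly $\mathcal{A}$.

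The main obstacle I anticipate is this last branch. Because $\tilde\theta_{\textup{obs}}$ enters $L_f h$ and $\Phi$ only through $\tilde v_{\textup{rel},x},\tilde v_{\textup{rel},y},\|\vv_{\textup{rel}}\|$, a receding obstacle simultaneously perturbs the closure rate $-\tilde v_{\textup{rel},x}$, the curvature parameter $\lambda(\vx)$, and the input coefficients $C^{a},C^{\beta}$; ruling it out as a worst case therefore cannot rest on monotonicity in a single variable, but must track how $L_f h+\Phi$ evolves as $v_{\textup{obs}}\cos\tilde\theta_{\textup{obs}}$ increases from $0$ along $\partial\calC$, using the approaching identity together with the bounds $\|\vv_{\textup{rel}}\|\in[\|\vv_{\textup{rel}}\|_{\min},\|\vv_{\textup{rel}}\|_{\max}]$ and $d\in[d_{\min},d_{\max}]$ from Assumptions~\ref{assum:robot_speed_bound}-\ref{assum:clearance}. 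The robot-heading branch is easier in spirit but still requires the explicit lower bound on $|A(\vx)\cos\tilde\theta|+\Phi(\vx)$ relative to $|B(\vx)\sin\tilde\theta|$ to be made uniform over the admissible ranges.
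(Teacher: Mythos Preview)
Your approach is substantially different from, and far more ambitious than, the paper's. The paper's proof of this lemma is a short heuristic: it introduces time-to-collision $\tau=\|\vp_{\textup{rel}}\|/|\tilde v_{\textup{rel},x}|$ as a proxy for how hard the Nagumo condition is to satisfy, notes that on $\partial\calC$ one has $\tilde v_{\textup{rel},x}=-v\cos\tilde\theta+v_{\textup{obs}}\cos\tilde\theta_{\textup{obs}}<0$, and observes that $|\tilde v_{\textup{rel},x}|$ is maximised (hence $\tau$ minimised) exactly when $\cos\tilde\theta>0$ and $\cos\tilde\theta_{\textup{obs}}<0$, which is the set $\mathcal{A}$. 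No quantitative link between $\tau$ and $\lieder_f h+\Phi$ is made; the lemma functions as a physically motivated modelling reduction rather than a proved inequality, and the paper does not supply the estimates you are looking for.

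Your attempt to make the reduction rigorous---by showing $\lieder_f h+\Phi\ge0$ on $\partial\calC\setminus\mathcal{A}$---has a real obstacle beyond the receding-obstacle branch you already flag. In the robot-heading branch ($\cos\tilde\theta\le0$), your symmetry reduction to $\tilde v_{\textup{rel},y}\ge0$ fixes $B(\vx)\le0$ but does \emph{not} fix the sign of $\sin\tilde\theta$: when $\tilde\theta\in(\tfrac{\pi}{2},\pi]$ the contribution $B(\vx)\sin\tilde\theta$ to $\lieder_f h$ is non-positive and its magnitude grows linearly with $k_\lambda$ through $B$, so closing that sub-case cannot be done by sign analysis alone---it requires a parameter-dependent comparison of $\Phi$ against $|B\sin\tilde\theta|$, i.e.\ essentially the same Case~1/Case~2 estimates the paper carries out \emph{inside} $\mathcal{A}$. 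Put differently, proving the lemma along your route is not visibly easier than proving the main result without the lemma; this is presumably why the paper is content with the TTC heuristic here.
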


\begin{proof}
The CBF condition is most difficult to satisfy when the drift $\lieder_f h(\vx)$ is most negative, requiring maximal control authority $\Phi (\vx)$ to counteract it. This corresponds to the most dangerous geometric configurations. The time-to-collision (TTC), defined as $\tau = \|\vp_{\textup{rel}}\|/|\tilde v_{\textup{rel},x}|$, provides a measure of this risk. Since  the range of $\|\vp_{\textup{rel}}\|$ is fixed by \autoref{assum:clearance}, minimizing the TTC is equivalent to maximizing the magnitude of relative velocity $|\tilde v_{\textup{rel}, x}|$. From the definition $\tilde v_{\textup{rel}, x} = -v \cos \tilde \theta + v_{\textup{obs}} \cos \tilde \theta_{\textup{obs}} < 0$, the term $|\tilde v_{\textup{rel}, x}|$ is maximized when the robot heads towards the obstacle $(\cos \tilde \theta > 0)$ and the obstacle heads towards the robot $(\cos \tilde \theta_{\textup{obs}} < 0)$. This geometric condition precisely defines the critical-heading set $\mathcal{A}$.
\end{proof}

By \autoref{remark:partition} and \autoref{lemma:dpcbf-heading-angle-envelopes}, we can reformulate the following problem instead of \autoref{prob:validity_of_dpcbf}:

\begin{figure}[t]
    \centering
    \includegraphics[width=0.6\linewidth]{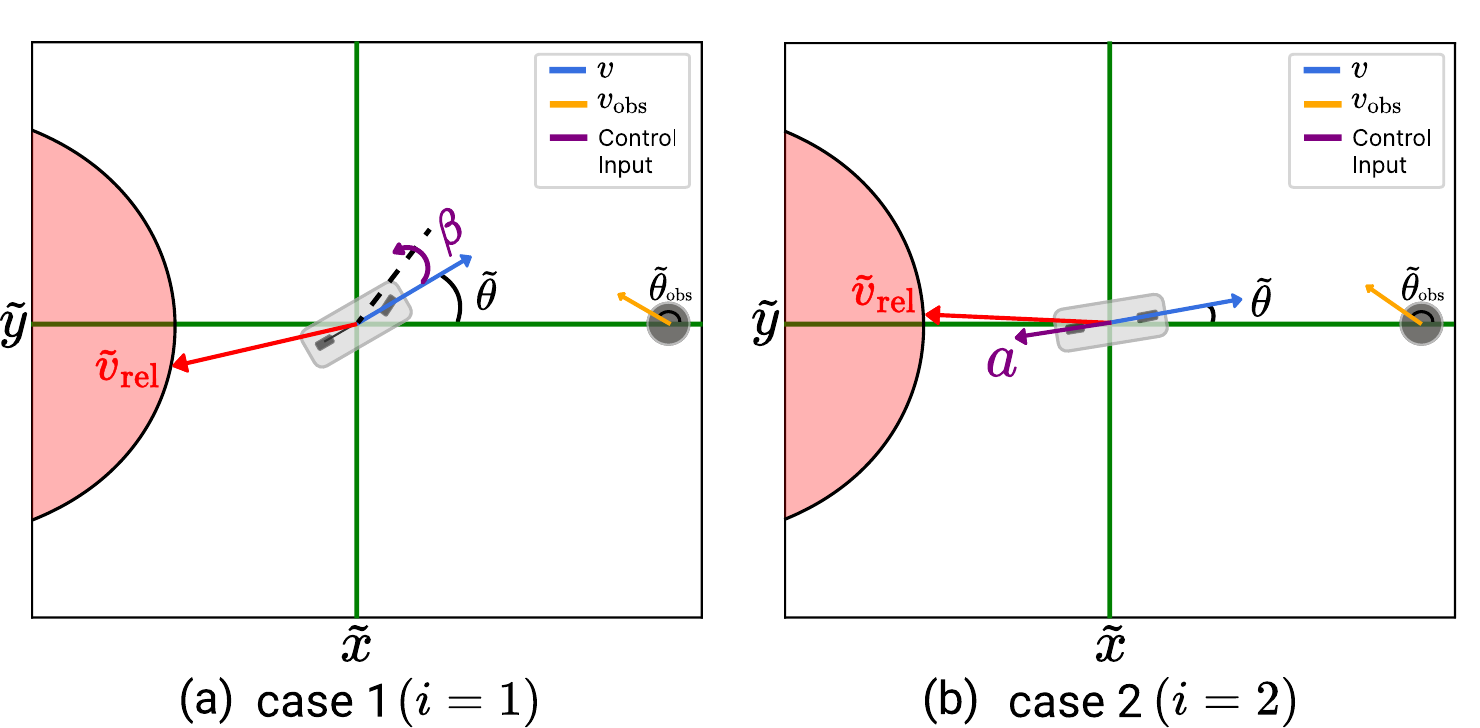}
    \caption{Illustration of the two boundary partitions used in the proof of \autoref{thm:dpcbf_validity}. (a) In the steering-dominant case ($i$=1), the steering input $\beta$ is the primary means of ensuring safety. (b) In the longitudinal-dominant case ($i$=2), the acceleration input $a$ is dominant.}
    \label{fig:dpcbf_appendix_proof_partition}
\end{figure}
\begin{problem}[Feasible Region Selection]\label{prob:gain_feasible}
Let us partition the boundary~$\partial\mathcal C$ according to the state-dependent threshold $\bar s \coloneqq \frac{v_{\textup{obs}}}{v}\sin \tilde \theta_{\textup{obs}} \in (-1,1)$. 
\begin{subequations}
\begin{align}
    \partial\mathcal C_{1} &= \{\vx \in \partial \mathcal{C}\mid |\sin\tilde\theta| \ge\bar s\}, \\
  \partial\mathcal C_{2} &=\{\vx \in \partial \mathcal{C} \mid |\sin\tilde\theta| < \bar{s} \}.
\end{align} 
\end{subequations}
By symmetry, it is sufficient to analyze the domain where $\sin \tilde \theta \geq 0$ and $\bar s \in [0, 1)$, which yields two cases:
\begin{subequations}
\begin{align}\label{eq:dpcbf_proof_partition}
    \partial\mathcal C_{1} &= \{\vx \in \partial \mathcal{C}\mid \sin\tilde\theta \ge\bar{s} \}, \\
  \partial\mathcal C_{2} &=\{\vx \in \partial \mathcal{C} \mid  \sin\tilde\theta <\bar s\}.
\end{align} 
\end{subequations}
Find positive parameters $k_\lambda$ and $k_\mu$ that satisfy the sufficient condition~\eqref{eq:dpcbf_condition_converted} on both $\partial \mathcal{C}_{1}$ and $\partial \mathcal{C}_{2}$.
\end{problem}

The partition is key to proof. The threshold $\bar s$ separates the boundary into two distinct regions, each corresponding to a different dominant control strategy (see Fig.~\ref{fig:dpcbf_appendix_proof_partition}):
\begin{itemize}
    \item \textbf{Steering-Dominant} Case $(\partial \mathcal{C}_{1})$: Here, the robot has a significant heading component towards the obstacle's path. Steering input~$(+\beta)$ is the most effective control action to generate lateral separation and ensure safety.
    \item \textbf{Longitudinal-Dominant} Case $(\partial \mathcal{C}_{2})$: Here, the robot's heading is nearly aligned with the line-of-sight vector. Deceleration input~$(-a)$ is the primary control action to manage the relative speed.
\end{itemize}
By evaluating these cases independently, we prove that the control authority is sufficient in each scenario. To proceed, we first derive several technical lemmas that establish bounds on key quantities on the safety boundary.

\begin{lemma}[Lower Bounded Relative Speed]
\label{lemma:boundary_coupling}
For any state on the boundary $\vx \in \partial \mathcal{C}$, the magnitude of relative velocity is coupled to the clearance from \autoref{assum:clearance} by
\begin{equation}\label{eq:vrel_exact}
  \|\vv_{\textup{rel}}\| =
  \frac{k_{\mu}d(\vx)}
        {-\cos\tilde\psi - k_{\lambda}d(\vx) \sin^{2}\tilde\psi}.
\end{equation}
Consequently, it is uniformly lower-bounded:
\begin{equation}\label{eq:vrel_lower}
      \|\vv_{\textup{rel}}\| \ge 
      \|\vv_{\textup{rel}}\|_{\min}
      \coloneqq k_{\mu} d_{\min} > 0.
\end{equation}
\end{lemma}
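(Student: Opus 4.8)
The plan is to read $\|\vv_{\textup{rel}}\|$ off directly from the boundary condition $h(\vx)=0$. On $\partial\mathcal{C}$, substitute the polar form of the rotated relative velocity from \eqref{eq:tilded_vel}, namely $\tilde v_{\textup{rel},x}=\|\vv_{\textup{rel}}\|\cos\tilde\psi$ and $\tilde v_{\textup{rel},y}=\|\vv_{\textup{rel}}\|\sin\tilde\psi$, together with the design maps $\lambda(\vx)=k_\lambda d(\vx)/\|\vv_{\textup{rel}}\|$ and $\mu(\vx)=k_\mu d(\vx)$ of \eqref{eq:dpcbf_func_def}, into the DPCBF $h(\vx)=\tilde v_{\textup{rel},x}+\lambda(\vx)\tilde v_{\textup{rel},y}^2+\mu(\vx)$ of \eqref{eq:dpcbf_def_simple}. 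One factor of $\|\vv_{\textup{rel}}\|$ cancels in the curvature term, so $h(\vx)=0$ collapses to the scalar identity $\|\vv_{\textup{rel}}\|\bigl(\cos\tilde\psi+k_\lambda d(\vx)\sin^2\tilde\psi\bigr)=-k_\mu d(\vx)$; solving for $\|\vv_{\textup{rel}}\|$ gives \eqref{eq:vrel_exact}.

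For \eqref{eq:vrel_lower} I would first observe that the numerator $k_\mu d(\vx)$ is strictly positive by $k_\mu>0$ and \autoref{assum:clearance}, and that $\|\vv_{\textup{rel}}\|>0$ (it must be nonzero for $\lambda$ to be defined, and the identity above then forces the denominator to be positive too). Next I would bound the denominator above by $1$: $-\cos\tilde\psi\le 1$ since $\cos\tilde\psi\ge-1$, while $-k_\lambda d(\vx)\sin^2\tilde\psi\le 0$ because $k_\lambda>0$ and $d(\vx)>0$ on the admissible set, hence $-\cos\tilde\psi-k_\lambda d(\vx)\sin^2\tilde\psi\le 1$. Substituting this bound into \eqref{eq:vrel_exact} yields $\|\vv_{\textup{rel}}\|\ge k_\mu d(\vx)$, and $d(\vx)>d_{\min}$ from \autoref{assum:clearance} gives $\|\vv_{\textup{rel}}\|\ge k_\mu d_{\min}$, i.e. the claimed bound with $\|\vv_{\textup{rel}}\|_{\min}=k_\mu d_{\min}>0$.

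As a consistency check I would also record a formula-free derivation of the bound: on $\partial\mathcal{C}$ the relation $h(\vx)=0$ gives $\tilde v_{\textup{rel},x}=-\lambda(\vx)\tilde v_{\textup{rel},y}^2-\mu(\vx)\le-\mu(\vx)=-k_\mu d(\vx)$ since $\lambda(\vx)>0$, whence $\|\vv_{\textup{rel}}\|\ge|\tilde v_{\textup{rel},x}|\ge k_\mu d(\vx)\ge k_\mu d_{\min}$, using that the LoS rotation preserves norm. Every step is elementary algebra plus the sign conventions fixed in \autoref{assum:clearance}; the only point deserving a word of care is the non-vanishing of $\|\vv_{\textup{rel}}\|$ needed to divide when deriving \eqref{eq:vrel_exact}, which is in any case guaranteed a posteriori by the bound just obtained, so no real obstacle arises in this lemma.
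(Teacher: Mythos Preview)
Your proof is correct and follows essentially the same approach as the paper: substitute the polar form \eqref{eq:tilded_vel} into $h(\vx)=0$, solve the resulting linear equation for $\|\vv_{\textup{rel}}\|$, and then bound the denominator above by $1$ together with $d(\vx)\ge d_{\min}$ to obtain \eqref{eq:vrel_lower}. Your extra care about the sign of the denominator and the alternative formula-free check via $|\tilde v_{\textup{rel},x}|\ge k_\mu d(\vx)$ are nice additions but not present in the paper.
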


\begin{proof}
On the safety boundary $\partial \mathcal{C}$, we have $h(\vx) = 0$. Substituting the line-of-sight velocity components from \eqref{eq:tilded_vel} into the definition of $h(\vx)$ yields
\begin{align}
    &\tilde v_{\textup{rel},x}
  + k_\lambda \frac{d(\vx)}{\|\vv_{\textup{rel}}\|} \tilde v_{\textup{rel},y}^{2}
  + k_\mu d(\vx) = 0 \\
  \eqref{eq:tilded_vel} \Rightarrow \quad & \|\vv_{\textup{rel}}\| \cos \tilde \psi + k_{\lambda} d(\vx) \|\vv_{\textup{rel}}\| \sin^{2}\tilde \psi  + k_{\mu} d(\vx) =0.
  \label{eq:dpcbf_after_tilde_onboundary}
\end{align}
Solving \eqref{eq:dpcbf_after_tilde_onboundary} for $\|\vv_{\textup{rel}}\|$ gives \eqref{eq:vrel_exact}. The lower bound~\eqref{eq:vrel_lower}
follows from $d(\vx) \geq d_{\min}$ and the fact that the denominator in \eqref{eq:vrel_exact} is upper-bounded above by $1$.
\end{proof}

\begin{lemma}[Trigonometric Bounds on the Boundary]\label{lemma:trigono_bound}
Every boundary state $\vx \in \partial \mathcal{C}$ obeys the bounds:
\begin{subequations}\label{eq:cos_sin_bounds}
\begin{align}
&-1\leq\cos\tilde\psi\le \cos\tilde\psi_{\max}=-\frac{k_{\mu}d_{\min}}{\|\vv_{\textup{rel}}\|_{\max}}, \label{eq:cos_bound}\\
&0\le|\sin\tilde\psi|\leq \sin\tilde\psi_{\max}=\sqrt{1-\cos^{2}\tilde\psi_{\max}}. \label{eq:sin_bound}
\end{align}
\end{subequations}
The condition $\cos \tilde \psi < 0$ implies that on the safety boundary, the robot and obstacle are always moving towards each other in the LoS frame.
\end{lemma}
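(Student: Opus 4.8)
The plan is to treat this lemma as an essentially immediate corollary of the exact identity~\eqref{eq:vrel_exact} proved in \autoref{lemma:boundary_coupling}. First I would rearrange that identity to isolate the cosine term,
\begin{equation*}
-\cos\tilde\psi \;=\; \frac{k_{\mu}d(\vx)}{\|\vv_{\textup{rel}}\|} \;+\; k_{\lambda}d(\vx)\sin^{2}\tilde\psi,
\end{equation*}
which will serve as the workhorse for everything that follows. Reading off signs: the left-hand side of~\eqref{eq:vrel_exact} is a genuine speed $\|\vv_{\textup{rel}}\|>0$ and the numerator $k_{\mu}d(\vx)$ is strictly positive (since $k_{\mu}>0$ and $d(\vx)\ge d_{\min}>0$ by \autoref{assum:clearance}), so the denominator $-\cos\tilde\psi - k_{\lambda}d(\vx)\sin^{2}\tilde\psi$ must be strictly positive; because $k_{\lambda}d(\vx)\sin^{2}\tilde\psi\ge 0$ this forces $\cos\tilde\psi<0$. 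That already establishes the geometric remark that the robot and obstacle are always closing on each other along the line of sight.

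For the quantitative upper bound~\eqref{eq:cos_bound} I would simply drop the nonnegative curvature term in the displayed identity to get $-\cos\tilde\psi \ge k_{\mu}d(\vx)/\|\vv_{\textup{rel}}\|$, and then bound the right-hand side from below using $d(\vx)\ge d_{\min}$ (\autoref{assum:clearance}) together with an upper bound $\|\vv_{\textup{rel}}\|\le\|\vv_{\textup{rel}}\|_{\max}$ on the relative speed, which exists because the forward speeds of both the robot and the obstacle are bounded. This yields $-\cos\tilde\psi \ge k_{\mu}d_{\min}/\|\vv_{\textup{rel}}\|_{\max}$, i.e.\ $\cos\tilde\psi\le\cos\tilde\psi_{\max}$, and the trivial estimate $\cos\tilde\psi\ge-1$ closes~\eqref{eq:cos_bound}. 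I would also record here the mild non-degeneracy condition $k_{\mu}d_{\min}<\|\vv_{\textup{rel}}\|_{\max}$, which keeps $\cos\tilde\psi_{\max}\in(-1,0)$ so that the bound is informative and $\sin\tilde\psi_{\max}=\sqrt{1-\cos^{2}\tilde\psi_{\max}}$ is a well-defined number in $(0,1)$; this is satisfied by the parameter ranges used later.

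Finally, the sine bound~\eqref{eq:sin_bound} follows from $\sin^{2}\tilde\psi = 1-\cos^{2}\tilde\psi$ combined with~\eqref{eq:cos_bound}: since $\cos\tilde\psi\in[-1,\cos\tilde\psi_{\max}]$ with $\cos\tilde\psi_{\max}<0$, the cosine is at least as far from zero as $\cos\tilde\psi_{\max}$, hence $\cos^{2}\tilde\psi\ge\cos^{2}\tilde\psi_{\max}$, so $\sin^{2}\tilde\psi\le 1-\cos^{2}\tilde\psi_{\max}$, and taking square roots gives $|\sin\tilde\psi|\le\sin\tilde\psi_{\max}$, with $|\sin\tilde\psi|\ge 0$ immediate. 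I expect no real obstacle in this lemma — it is bookkeeping on~\eqref{eq:vrel_exact} — and the only point that needs a moment's care is the \emph{direction} of this last estimate: a more strongly anti-aligned relative velocity (cosine nearer $-1$) corresponds to a \emph{smaller} lateral component, so the extremal case for $|\sin\tilde\psi|$ is the boundary state where $\cos\tilde\psi$ is closest to $\cos\tilde\psi_{\max}$, not the one where it is closest to $-1$.
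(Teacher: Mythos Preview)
Your proposal is correct and follows essentially the same route as the paper: both arguments rearrange the boundary identity~\eqref{eq:dpcbf_after_tilde_onboundary} (equivalently~\eqref{eq:vrel_exact}) to isolate $\cos\tilde\psi$, observe that positivity of $k_\lambda,k_\mu,d(\vx)$ forces $\cos\tilde\psi<0$, then drop the nonnegative $k_\lambda d(\vx)\sin^{2}\tilde\psi$ term and bound with $d(\vx)\ge d_{\min}$ and $\|\vv_{\textup{rel}}\|\le\|\vv_{\textup{rel}}\|_{\max}$, with the sine bound following from the Pythagorean identity. Your write-up is in fact more explicit than the paper's on the direction of the sine estimate and on the non-degeneracy condition $k_\mu d_{\min}<\|\vv_{\textup{rel}}\|_{\max}$, which is a helpful addition.
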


\begin{proof}
From the boundary identity~\eqref{eq:dpcbf_after_tilde_onboundary}, since $k_\lambda, k_\mu$, and  $d(\vx)$ are positive, the term $\|\vv_{\textup{rel}}\| \cos \tilde \psi$ must be negative, implying $\cos \tilde \psi < 0$. Solving \eqref{eq:dpcbf_after_tilde_onboundary} for $\cos \tilde \psi$ and maximizing the right-hand side over $d(\vx) \geq d_{\min}$ and $\|\vv_{\textup{rel}}\| \leq \|\vv_{\textup{rel}}\|_{\textup{max}}$ yields the upper bound~\eqref{eq:cos_bound}. The bound~\eqref{eq:sin_bound} follows directly.
\end{proof}

\begin{corollary}[Bounds on line-of-sight Relative Velocity]\label{cor:vrel_tilde_bounds}
Under the hypotheses of \autoref{lemma:trigono_bound},
every boundary state $\vx\in\partial\mathcal C$ satisfies:  
\begin{align} \label{eq:vtil_bounds}
   -\|\vv_{\textup{rel}}\|
       &\leq \tilde v_{\textup{rel},x}
       \leq \|\vv_{\textup{rel}}\| \cos\tilde\psi_{\max}, \\
   0 &\leq |\tilde v_{\textup{rel},y}|
       \le \|\vv_{\textup{rel}}\| \sin\tilde\psi_{\max}.
\end{align}
In particular,
\begin{equation}
       \inf_{\vx\in\partial\mathcal C}\bigl|\tilde v_{\textup{rel},y}(\vx)\bigr|=0.
\end{equation}
\end{corollary}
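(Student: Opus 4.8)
The plan is to obtain both displayed inequalities directly from the line-of-sight decomposition $\tilde v_{\textup{rel},x}=\|\vv_{\textup{rel}}\|\cos\tilde\psi$ and $\tilde v_{\textup{rel},y}=\|\vv_{\textup{rel}}\|\sin\tilde\psi$ in \eqref{eq:tilded_vel}, together with the trigonometric envelope of \autoref{lemma:trigono_bound}. Since \autoref{lemma:boundary_coupling} guarantees $\|\vv_{\textup{rel}}\|\ge\|\vv_{\textup{rel}}\|_{\min}=k_\mu d_{\min}>0$ on $\partial\calC$, the scalar $\|\vv_{\textup{rel}}\|$ is strictly positive and hence order-preserving. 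Multiplying $-1\le\cos\tilde\psi\le\cos\tilde\psi_{\max}$ (from \eqref{eq:cos_bound}) by $\|\vv_{\textup{rel}}\|$ yields $-\|\vv_{\textup{rel}}\|\le\tilde v_{\textup{rel},x}\le\|\vv_{\textup{rel}}\|\cos\tilde\psi_{\max}$, and multiplying $0\le|\sin\tilde\psi|\le\sin\tilde\psi_{\max}$ (from \eqref{eq:sin_bound}) by $\|\vv_{\textup{rel}}\|$ gives $0\le|\tilde v_{\textup{rel},y}|\le\|\vv_{\textup{rel}}\|\sin\tilde\psi_{\max}$. These are exactly the two claimed bounds, and nothing beyond the sign of $\|\vv_{\textup{rel}}\|$ and the content of \autoref{lemma:trigono_bound} is required.

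For the infimum claim, I would show that the trivial lower bound $|\tilde v_{\textup{rel},y}|\ge0$ is attained on the boundary. Equality $|\tilde v_{\textup{rel},y}|=0$ forces $\sin\tilde\psi=0$; combined with $\cos\tilde\psi<0$ on all of $\partial\calC$ (\autoref{lemma:trigono_bound}), this pins $\tilde\psi=\pi$, i.e.\ $\cos\tilde\psi=-1$. Substituting $\tilde\psi=\pi$ into the boundary identity \eqref{eq:dpcbf_after_tilde_onboundary} collapses it to $-\|\vv_{\textup{rel}}\|+k_\mu d(\vx)=0$, i.e.\ $\|\vv_{\textup{rel}}\|=k_\mu d(\vx)$; taking $d(\vx)=d_{\min}$ gives $\|\vv_{\textup{rel}}\|=k_\mu d_{\min}=\|\vv_{\textup{rel}}\|_{\min}$. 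I would then exhibit a concrete realization — a head-on encounter at range $\|\vp_{\textup{rel}}\|=p_{\min}$ with the obstacle momentarily at rest and the robot driving straight toward it at forward speed $v=k_\mu d_{\min}$ — and verify it lies in $\partial\calC$ and satisfies Assumptions~\ref{assum:robot_speed_bound}--\ref{assum:clearance}, so that $\inf_{\vx\in\partial\calC}|\tilde v_{\textup{rel},y}(\vx)|=0$.

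The only step warranting any care is this last one: confirming that a boundary state with $\tilde v_{\textup{rel},y}=0$ is genuinely attained by the bicycle/obstacle system under the standing assumptions, not merely permitted by the algebra of \eqref{eq:dpcbf_after_tilde_onboundary}. Should the chosen parameters place $k_\mu d_{\min}$ outside the attainable range of relative speeds, I would instead construct a sequence of boundary states with $\tilde\psi\to\pi$ and conclude that the infimum (rather than a minimum) equals zero; the conclusion is the same either way. Everything else is an immediate consequence of \eqref{eq:tilded_vel}, \autoref{lemma:boundary_coupling}, and \autoref{lemma:trigono_bound}, with no genuine computation involved.
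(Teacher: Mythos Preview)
Your proposal is correct and matches the paper's intended approach: the paper states this corollary without proof, treating it as an immediate consequence of \autoref{lemma:trigono_bound} via the decomposition \eqref{eq:tilded_vel}, and your derivation by multiplying the bounds \eqref{eq:cos_sin_bounds} through by the strictly positive $\|\vv_{\textup{rel}}\|$ is exactly that argument. Your additional care on the infimum claim --- exhibiting a head-on boundary state with $\tilde\psi=\pi$, and noting the sequential fallback if the witness speed falls outside $[v_{\min},v_{\max}]$ --- fills in a detail the paper leaves tacit.
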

\subsection{Proof of Validity for Case 1~(Steering-Dominant)}\label{app:case1}

In this section, we verify the sufficient CBF from \eqref{eq:dpcbf_condition_converted} for the first subset of the safety boundary, $\partial \mathcal{C}_{1}$.
\paragraph{Definition of the Subspace}
Case 1 corresponds to the \textbf{\emph{steering-dominant}} scenario, defined by the subspace:

\begin{equation}\label{eq:appendix_case1_subspace}
  \partial\mathcal C_{1}
  =\bigl\{\vx\in\partial\mathcal C \big| \sin\tilde\theta\ge\bar s \bigr\},
  \qquad \bar s \;\coloneqq\; \frac{v_{\textup{obs}}}{v}\sin\tilde\theta_{\textup{obs}} \in [0,1).
\end{equation}
In this configuration, as shown in Fig.~\ref{fig:dpcbf_appendix_proof_partition}a, the robot's heading has a significant component directed towards the obstacle's path, and $\sin \tilde \theta$ is uniformly bounded below by $\bar s$.

\paragraph{Proof strategy}
For every $ \vx\in\partial\mathcal C_{1}$, the robot's forward speed $v$ is positive $(v\ge v_{\min}>0)$ by \autoref{assum:robot_speed_bound}. To simplify the analysis of the control coefficients, we normalize the CBF condition by $v$, which gives the equivalent objective:
\begin{equation}\label{eq:appendix_case1_condition}
  \inf_{\vx\in\partial\mathcal C_{1}} \frac{\lieder_f h(\vx)}{v} +
  \inf_{\vx\in\partial\mathcal C_{1}} \bigg|\frac{C^{a}(\vx)}{v}\bigg| a_{\max} + 
  \inf_{\vx\in\partial\mathcal C_{1}} \bigg|\frac{C^{\beta}(\vx)}{v}\bigg| \beta_{\max}
  \geq 0 .
\end{equation}
Our strategy is to show that in this subspace, steering authority is the dominant term. Specifically, we will prove that:
\begin{enumerate}
    \item The worst-case control authority from acceleration is negligible: $\inf_{\vx \in \partial \mathcal{C}_{1}} |C^{a}(\vx) / v| = 0$.
    \item The control authority from steering is strictly positive: $\inf_{\vx \in \partial \mathcal{C}_{1}} |C^{\beta}(\vx) / v| > 0$.
    \item The positive lower bound on steering authority from (ii) is sufficient to overcome the negative lower bound (i.e., worst-case drift) of the drift term $\lieder_f h(\vx) / v$.
\end{enumerate}

\subsubsection{Acceleration Term $(C^{a})$}
We first establish that the infimum of the normalized acceleration coefficient is zero.
 
\begin{align}
\inf_{\vx \in \partial \calC_{1}}  \bigg|\frac{C^{a}(\vx)}{v}\bigg|
&=\inf_{\vx \in \partial \calC_{1}} \bigg| \underbrace{  \Bigl[
        -\frac{1}{v}
        + k_\lambda \frac{d(\vx)}{\|v_{\textup{rel}}\|^{3}} \frac{v_{\textup{obs}}}{v}\,\cos\tilde\theta_{\textup{obs}}\tilde v_{\textup{rel},y}^{2}
      \Bigr]}_{:=\frac{\eta^{a}_{\cos}(\vx)}{v}}\cos\tilde \theta \nonumber \\
      &+  \underbrace{\Bigl[
        k_\lambda \frac{d(\vx)}{\|v_{\textup{rel}}\|^{3}} \frac{v_{\textup{obs}}}{v}\sin\tilde\theta_{\textup{obs}}\tilde v_{\textup{rel},y}^{2}
          -2k_\lambda \frac{d(\vx)}{\|v_{\textup{rel}}\|} 
            \frac{\tilde v_{\textup{rel},y}}{v}
      \Bigr]}_{:=\frac{\eta^{a}_{\sin}(\vx)}{v}}
      \sin\tilde \theta \nonumber \\
      &+ \underbrace{\Bigl[
        - k_\lambda \frac{d(\vx)}{\|v_{\textup{rel}}\|^{3}}
          \tilde v_{\textup{rel},y}^{2}
      \Bigr]}_{:=\frac{\eta^{a}_{0}(\vx)}{v}} \bigg| \\
    \left\{
\begin{array}{l}
  \text{\autoref{cor:vrel_tilde_bounds}}, \\
    \sin\tilde{\theta} \geq \bar s
\end{array}
\right. \Rightarrow  \quad & \geq
    \inf_{\vx \in \partial \calC_{1}} \biggl| \frac{\eta^{a}_{\cos}(\vx)}{v} \cancelto{0}{\cos\tilde \theta} \quad
    + \cancelto{0}{\frac{\eta^{a}_{\sin}(\vx)}{v}} \sin\tilde{\theta}
    + \cancelto{0}{\frac{\eta^{a}_{0}(\vx)}{v}}
     \quad \biggr| = 0.
\end{align}
This infimum is achieved and is exactly zero. The terms composing $C^{a}(\vx) / v$ are functions of $\tilde v_{\textup{rel}, y}$. From \autoref{cor:vrel_tilde_bounds}, we know that $\inf_{\vx \in \partial \mathcal{C}} |\tilde v_{\textup{rel}, y}|=0$. Since $\partial \mathcal{C}_{1} \subset \partial \mathcal{C}$, there exists a sequence in $\partial \mathcal{C}_{1}$ with $\tilde v_{\textup{rel}, y} \to 0$. In this limit, all terms involving $\tilde v_{\textup{rel}, y}$ vanish. The only remaining term is proportional to $\cos \tilde \theta$, which can also be zero within the set (e.g., at $\tilde \theta = \pi / 2$). Thus, the expression can approach zero, and its infimum is 
\begin{equation}
    \inf_{\vx \in \partial \calC_{1}}  \bigg|\frac{C^{a}(\vx)}{v}\bigg| = 0
\end{equation}

\subsubsection{Steering Term $(C^{\beta})$}
Next, we show that the steering term is uniformly positive. In the subspace $\partial \mathcal{C}_{1}$, the condition $\sin \tilde \theta \geq \bar s$ ensures $\tilde v_{\textup{rel}, y} = -v \sin \tilde \theta + v_{\textup{obs}}\sin \tilde \theta_{\textup{obs}} \leq 0$. This geometric configuration requires a positive (counter-clockwise) steering input $\beta$ to generate lateral clearance. An examination of the terms in $C^{\beta} (\vx) / v$ confirms most of them are non-negative in this subspace. Therefore, we can write
\begin{align}
 \inf_{\vx \in \partial \calC_{1}}  \bigg|\frac{C^{\beta}(\vx)}{v}\bigg| &= \inf_{\vx \in \partial \calC_{1}}\bigg| \underbrace{\Bigl[
         \Bigl(
            k_\lambda \frac{\|p_{\textup{rel}}\|}{d(\vx)}
              \frac{\tilde v_{\textup{rel},y}^{2}}{\|v_{\textup{rel}}\|}
          + k_\mu \frac{\|p_{\textup{rel}}\|}{d(\vx)}\Bigr)
        + \frac{v}{\ell_r}\Bigl(
            1
          - k_\lambda\,\frac{d(\vx)}{\|v_{\textup{rel}}\|^{3}}
        v_{\textup{obs}}\cos \tilde \theta_{\textup{obs}}\tilde v_{\textup{rel},y}^{2}
        \Bigr)
    \Bigr]}_{:=\frac{\eta^{\beta}_{\sin}(\vx)}{v}}
      \sin\tilde \theta \nonumber
  \\
    &+ \underbrace{  \Bigl[
            -\frac{\tilde v_{\textup{rel},y}}{\|p_{\textup{rel}}\|}
            + 2k_\lambda \frac{d(\vx)}{\|v_{\textup{rel}}\|} 
              \frac{\tilde v_{\textup{rel},y}\tilde v_{\textup{rel},x}}{\|p_{\textup{rel}}\|}
        + \frac{v}{\ell_r} \Bigl(k_\lambda\,\frac{d(\vx)}{\|v_{\textup{rel}}\|^3}v_{\textup{obs}}\,\sin\tilde\theta_{\textup{obs}}\,\tilde v_{\textup{rel},y}^{2}
        - 2\,k_\lambda\frac{d(\vx)}{\|v_{\textup{rel}}\|}\tilde v_{\textup{rel},y}
        \Bigr)
    \Big]}_{:=\frac{\eta^{\beta}_{\cos}(\vx)}{v}}
      \cos\tilde \theta \bigg|\\
 \label{eq:inf_c1_beta} &= \inf_{\vx \in \partial \calC_{1}} \bigg| \frac{\eta^{\beta}_{\sin}(\vx)}{v} \sin\tilde \theta + \frac{\eta^{\beta}_{\cos}(\vx)}{v} \cos\tilde \theta \bigg|,
\end{align}
where
\begin{subequations}
\begin{align}
    &\frac{\eta^{\beta}_{\sin}(\vx)}{v} = 
         \Bigl[ \Bigl(
            \underbrace{k_\lambda \frac{\|p_{\textup{rel}}\|}{d(\vx)} 
              \frac{\tilde v_{\textup{rel},y}^{2}}{\|v_{\textup{rel}}\|}}_{\geq 0}
          + \underbrace{k_\mu \frac{\|p_{\textup{rel}}\|}{d(\vx)}\Bigr)}_{> 0}
        + \Bigl(
            \underbrace{\frac{v}{\ell_r}}_{>0}
           \underbrace{-k_\lambda \frac{d(\vx)}{\|v_{\textup{rel}}\|^{3}} \frac{v}{\ell_r} v_{\textup{obs}} \cos\tilde\theta_{\textup{obs}}\tilde v_{\textup{rel},y}^{2}}_{\geq 0}
         \Bigr) \Bigr], \label{eq:appendix_case1_beta_eta_sin}\\
         &\frac{\eta^{\beta}_{\cos}(\vx)}{v} := \Bigl[
            \underbrace{-\frac{\tilde v_{\textup{rel},y}}{\|p_{\textup{rel}}\|}}_{\geq 0} +
            \underbrace{2k_\lambda \frac{d(\vx)}{\|v_{\textup{rel}}\|}
              \frac{\tilde v_{\textup{rel},y}\tilde v_{\textup{rel},x}}{\|p_{\textup{rel}}\|}}_{\geq 0}
        + \frac{v}{\ell_r} \Bigl(k_\lambda \frac{d(\vx)}{\|v_{\textup{rel}}\|^3}v_{\textup{obs}} \sin\tilde\theta_{\textup{obs}} \tilde v_{\textup{rel},y}^{2} 
        \underbrace{- 2k_\lambda\frac{d(\vx)}{\|v_{\textup{rel}}\|}\tilde v_{\textup{rel},y}}_{\geq 0}
        \Bigr)
    \Big].
    \label{eq:appenidx_case1_beta_eta_cos}
\end{align}
\end{subequations}
Then, for \textbf{Case~1}, the worst–case lower bound of the control input steering term is attained at
\begin{align}
    \bigg|\frac{C^{\beta}(\vx)}{v}\bigg|\beta_{\max} = \left( \frac{C^{\beta}(\vx)}{v} \right) \beta_{\max}
    &\geq  \inf_{\vx \in \partial \calC_{1}} \left( \frac{\eta^{\beta}_{\sin}(\vx)}{v} \sin\tilde \theta + \frac{\eta^{\beta}_{\cos}(\vx)}{v} \cos\tilde \theta \right) \beta_{\max} \\
    {\text{\autoref{prop:inf_subadd}}}
    \Rightarrow & \geq \inf_{\vx \in \partial \calC_{1}} \left( \frac{\eta^{\beta}_{\sin}(\vx)}{v} \sin\tilde \theta\right) \beta_{\max} + \inf_{\vx \in \partial \calC_{1}} \left( \frac{\eta^{\beta}_{\cos}(\vx)}{v} \cos\tilde \theta \right) \beta_{\max} \\
    & \coloneqq C^{\beta}_{1,\min} > 0.
\end{align}
To guarantee that $C^{\beta}_{1,\min}>0$ uniformly over~$\vx$, we formulate equation using the parameters
$k_\lambda$ and $k_\mu$ so that the guaranteed minimum of  
$\left(\frac{\eta^{\beta}_{\sin}(\vx)}{v} \sin\tilde \theta + \frac{\eta^{\beta}_{\cos}(\vx)}{v} \cos\tilde \theta \right)$ strictly non-zero.
\allowdisplaybreaks
\paragraph{Bounding the $\sin \tilde \theta$ coefficient}
By \eqref{eq:appendix_case1_beta_eta_sin}, all the terms within $\eta^{\beta}_{\sin} (\vx) / v$ are non-negative. Hence, we can treat the absolute value as a sum of scalar functions.
\begin{align}
\inf_{\vx \in \partial \calC_{1}}\frac{\eta^{\beta}_{\sin}(\vx)}{v} \sin \tilde \theta = & \inf_{\vx \in \partial \calC_{1}}
         \bigg[\bigg(
            k_\lambda\,\frac{\|p_{\textup{rel}}\|}{d(\vx)}\,
              \frac{\tilde v_{\textup{rel},y}^{2}}{\|v_{\textup{rel}}\|}
          + k_\mu\,\frac{\|p_{\textup{rel}}\|}{d(\vx)}\bigg)
        + \bigg(
            \frac{v}{\ell_r}
           -k_\lambda\,\frac{d(\vx)}{\|v_{\textup{rel}}\|^{3}}\,\frac{v}{\ell_r}\,v_{\textup{obs}}\,\cos\tilde\theta_{\textup{obs}}\tilde v_{\textup{rel},y}^{2}
           \bigg)\bigg] \sin \tilde \theta \label{eq:inf_c1_beta_nonabs}\\
        \text{\autoref{prop:inf_subadd}} \Rightarrow  \quad  \geq & \inf_{\vx \in \partial \calC_{1}}
         \bigg(
            k_\lambda\,\frac{\|p_{\textup{rel}}\|}{d(\vx)}
              \frac{\tilde v_{\textup{rel},y}^{2}}{\|v_{\textup{rel}}\|}\bigg) \sin \tilde \theta
          +\inf_{\vx \in \partial \calC_{1}} \bigg(k_\mu\,\frac{\|p_{\textup{rel}}\|}{d(\vx)}\bigg) \sin \tilde \theta \nonumber\\
        & \qquad \qquad \quad +\inf_{\vx \in \partial \calC_{1}}\bigg(\frac{v}{\ell_r}\bigg) \sin \tilde \theta +\inf_{\vx \in \partial \calC_{1}}\bigg(-k_\lambda\,\frac{d(\vx)}{\|v_{\textup{rel}}\|^{3}}\frac{v}{\ell_r}\,v_{\textup{obs}}\cos\tilde\theta_{\textup{obs}}\tilde v_{\textup{rel},y}^{2}\bigg) \sin \tilde \theta \\
        \left\{
        \begin{array}{l}
        \text{\autoref{cor:vrel_tilde_bounds}}, \\
        \sin\tilde{\theta} \geq \bar s
        \end{array}
        \right.
        \Rightarrow  \quad  = &\inf_{\vx \in \partial \calC_{1}}
         \bigg(
            k_\lambda \frac{\|p_{\textup{rel}}\|}{d(\vx)}
              \frac{\cancelto{0}{\tilde v_{\textup{rel},y}^{2}}}{\|v_{\textup{rel}}\|}\bigg) \bar s
          +\inf_{\vx \in \partial \calC_{1}} \bigg(k_\mu\,\frac{\|p_{\textup{rel}}\|}{d(\vx)}\bigg) \bar s \nonumber\\
        &\qquad \qquad \quad +\inf_{\vx \in \partial \calC_{1}}\bigg(\frac{v}{\ell_r}\bigg) \bar s
        +\inf_{\vx \in \partial \calC_{1}}\bigg(-k_\lambda \frac{d(\vx)}{\|v_{\textup{rel}}\|^{3}} \frac{v}{\ell_r}\,v_{\textup{obs}} \cos\tilde\theta_{\textup{obs}}\cancelto{0}{\tilde v_{\textup{rel},y}^{2}}\quad \bigg) \bar s
        \\
        =&\left(k_\mu\frac{p_{\max}}{d_{\max}} + \frac{v_{\min}}{\ell_r}\right) \bar s
        =: \eta^{\beta}_{\sin, \min}(k_\mu) \, \bar s.
        \label{eq:app_eta_beta_sin_min}
\end{align}

\paragraph{Bounding the $\cos \tilde \theta$ coefficient}
Similarly, all terms within $\eta^{\beta}_{\cos} (\vx) / v$ are non-negative. However, they all depend on $\tilde v_{\textup{rel} y}$. Since $\inf_{\vx \in \partial \calC_{1}} |\tilde v_{\textup{rel}, y}|=0$, the worst-case lower bound is zero.
\begin{align}
    \inf_{\vx \in \partial \calC_{1}} \frac{\eta^{\beta}_{\cos}(\vx)}{v} \cos \tilde \theta
    &=  \inf_{\vx \in \partial \calC_{1}} \Bigl[
            -\frac{\tilde v_{\textup{rel},y}}{\|p_{\textup{rel}}\|} +
            2k_\lambda\,\frac{d(\vx)}{\|v_{\textup{rel}}\|}
              \frac{\tilde v_{\textup{rel},y}\tilde v_{\textup{rel},x}}{\|p_{\textup{rel}}\|}
        + \frac{v}{\ell_r}\Bigl(k_\lambda \frac{d(\vx)}{\|v_{\textup{rel}}\|^3}v_{\textup{obs}}\,\sin\tilde\theta_{\textup{obs}} \tilde v_{\textup{rel},y}^{2} - 2k_\lambda\frac{d(\vx)}{\|v_{\textup{rel}}\|}\tilde v_{\textup{rel},y}
        \Bigr)
    \Big] \cos \tilde \theta\\
    \left\{
    \begin{array}{l}
          \text{\autoref{cor:vrel_tilde_bounds}} \\
         \sin \tilde \theta \geq \bar s
    \end{array}
    \right.
     \Rightarrow \quad &=  \inf_{\vx \in \partial \calC_{1}}\Bigl[
            -\frac{\cancelto{0}{\tilde v_{\textup{rel},y}}}{\|p_{\textup{rel}}\|} 
            + 2k_\lambda\,\frac{d(\vx)}{\|v_{\textup{rel}}\|}
              \frac{\cancelto{0}{\tilde v_{\textup{rel},y}}\tilde v_{\textup{rel},x}}{\|p_{\textup{rel}}\|} \nonumber \\
        & \qquad \qquad \qquad \qquad \quad + \frac{v}{\ell_r} \Bigl(k_\lambda\frac{d(\vx)}{\|v_{\textup{rel}}\|^3}v_{\textup{obs}}\sin\tilde\theta_{\textup{obs}}\cancelto{0}{\tilde v_{\textup{rel},y}^{2}}
        - 2k_\lambda\frac{d(\vx)}{\|v_{\textup{rel}}\|}\cancelto{0}{\tilde v_{\textup{rel},y}} \quad
        \Bigr)
    \Big] \cancelto{0}{\cos \tilde \theta}\\
    & = 0  \coloneqq \eta^{\beta}_{\cos, \min}. \label{eq:app_eta_beta_cos_min}
\end{align}

\paragraph{Lower Bound on Control Authority}
By combining these explicit bounds, we can establish the concrete lower bound for $\frac{\Phi_{1}(\vx)}{v}$ in this case. By \eqref{eq:app_eta_beta_sin_min} and \eqref{eq:app_eta_beta_cos_min}, the quantity $\left(\frac{\eta^{\beta}_{\sin}(\vx)}{v} \sin\tilde \theta + \frac{\eta^{\beta}_{\cos}(\vx)}{v} \cos\tilde \theta \right)$ is strictly positive, and the minimum available control authority from steering is therefore: 
\begin{equation}
    \frac{\Phi_{1} (\vx)}{v} = \frac{C^{\beta}(\vx)}{v} \beta_{\max} \geq \left[\eta^{\beta}_{\sin, \min}(k_\mu) \, \bar s
\right] \beta_{\max} := C^{\beta}_{1, \min}(k_\mu) > 0.
\end{equation}
\allowdisplaybreaks
\subsubsection{Analysis of Drift Term}
We now find a lower bound for the normalized drift term $\lieder_f h(\vx) / v$
\begin{align}
     \inf_{\vx \in \partial \calC_{1}} \frac{\lieder_f h(\vx)}{v}
    &= \inf_{\vx \in \partial \calC_{1}} \biggl[ \Bigl(
  \underbrace{-\frac{\tilde v_{\textup{rel},y}}{\|p_{\textup{rel}}\|}}_{\geq 0}
  + \underbrace{2k_\lambda
     \frac{d(\vx)}{\|p_{\textup{rel}}\|}\frac{\tilde v_{\textup{rel},y}}{\|v_{\textup{rel}}\|}\tilde v_{\textup{rel},x}}_{\geq 0} \Bigr)\sin\tilde \theta
  + \Bigl( \underbrace{-k_\lambda
     \frac{\|p_{\textup{rel}}\|}{d(\vx)}
     \frac{\tilde v_{\textup{rel},y}^2}{\|v_{\textup{rel}}\|}}_{\leq 0} 
     \underbrace{- k_\mu
     \frac{\|p_{\textup{rel}}\|}{d(\vx)}}_{\leq 0} \Bigr) \cos\tilde \theta \biggr]
    \\
 \text{\autoref{prop:inf_subadd}} \Rightarrow \quad  & \geq
 \inf_{\vx \in \partial \calC_{1}} \biggl[ \Bigl(
    -\frac{\tilde v_{\textup{rel},y}}{\|p_{\textup{rel}}\|}
  +2k_\lambda
     \frac{d(\vx)}{\|p_{\textup{rel}}\|}\frac{\tilde v_{\textup{rel},y}}{\|v_{\textup{rel}}\|}\tilde v_{\textup{rel},x} \Bigr) \sin\tilde \theta \biggr] \nonumber \\
 &\qquad \qquad \qquad \qquad \qquad \qquad +  \inf_{\vx \in \partial \calC_{1}}\biggl[-\Bigl(k_\lambda
     \frac{\|p_{\textup{rel}}\|}{d(\vx)}
     \frac{\tilde v_{\textup{rel},y}^2}{\|v_{\textup{rel}}\|} 
     + k_\mu
     \frac{\|p_{\textup{rel}}\|}{d(\vx)} \Bigr) \cos\tilde \theta \biggr] \\
    |\sin\tilde\theta|\,\geq\,\bar s \Rightarrow \quad  
    &= \inf_{\vx \in \partial \calC_{1}} \biggl[ \Bigl(
    -\frac{\tilde v_{\textup{rel},y}}{\|p_{\textup{rel}}\|}
  +2k_\lambda
     \frac{d(\vx)}{\|p_{\textup{rel}}\|}\frac{\tilde v_{\textup{rel},y}}{\|v_{\textup{rel}}\|}\tilde v_{\textup{rel},x} \Bigr) \bar s \biggr] \nonumber \\
 &\qquad \qquad \qquad \qquad \qquad \qquad +  \inf_{\vx \in \partial \calC_{1}}\biggl[-\Bigl(k_\lambda
     \frac{\|p_{\textup{rel}}\|}{d(\vx)}
     \frac{\tilde v_{\textup{rel},y}^2}{\|v_{\textup{rel}}\|}
     + k_\mu
     \frac{\|p_{\textup{rel}}\|}{d(\vx)} \Bigr) \sqrt{1-\bar s^{2}} \biggr] \\
     \text{\autoref{cor:vrel_tilde_bounds}} \Rightarrow \quad
     &= \inf_{\vx \in \partial \calC_{1}} \biggl[ \Bigl(
    -\frac{\cancelto{0}{\tilde v_{\textup{rel},y}}}{\|p_{\textup{rel}}\|}
  +2k_\lambda
     \frac{d(\vx)}{\|p_{\textup{rel}}\|}\frac{\cancelto{0}{\tilde v_{\textup{rel},y}}}{\|v_{\textup{rel}}\|}\tilde v_{\textup{rel},x} \Bigr) \bar s \biggr] \nonumber \\
 &\qquad \qquad \qquad \qquad \qquad \qquad +  \inf_{\vx \in \partial \calC_{1}}\biggl[-\Bigl(k_\lambda
     \frac{\|p_{\textup{rel}}\|}{d(\vx)}
     \frac{\tilde v_{\textup{rel},y}^2}{\|v_{\textup{rel}}\|}
     + k_\mu
     \frac{\|p_{\textup{rel}}\|}{d(\vx)} \Bigr) \sqrt{1-\bar s^{2}} \biggr] \\
     \text{\autoref{prop:inf_subadd}} \Rightarrow \quad
     & \geq \sqrt{1-\bar s^{2}} \biggl[\inf_{\vx \in \partial \calC_{1}} \Bigl(-k_\lambda
     \frac{\|p_{\textup{rel}}\|}{d(\vx)}
     \frac{\tilde v_{\textup{rel},y}^2}{\|v_{\textup{rel}}\|} \Bigr)
     + \inf_{\vx \in \partial \calC_{1}} \Bigl(-k_\mu
     \frac{\|p_{\textup{rel}}\|}{d(\vx)} \Bigr) \biggr] \\
     \text{\eqref{eq:tilded_vel}} \Rightarrow \quad
     &= \sqrt{1-\bar s^{2}} \biggl[\inf_{\vx \in \partial \calC_{1}} \Bigl(-k_\lambda
     \frac{\|p_{\textup{rel}}\|}{d(\vx)}
     \|v_{\textup{rel}}\|\,\sin^{2}\tilde\psi \Bigr)
     + \inf_{\vx \in \partial \calC_{1}} \Bigl(-k_\mu
     \frac{\|p_{\textup{rel}}\|}{d(\vx)} \Bigr) \biggr] \\
     \text{\autoref{lemma:trigono_bound}} \Rightarrow \quad
     & = -\sqrt{1-\bar s^{2}} \Bigl( k_\lambda \frac{p_{\max}}{d_{\max}}\|v_{\textup{rel}}\|_{\max}\,\sin^{2}\tilde\psi_{\max} 
     + k_\mu \frac{p_{\max}}{d_{\max}} \Bigr)
     := D_{1, \min} (k_\lambda, k_\mu).
 \label{eq:L1max_bound}
 \end{align}
 
\subsubsection{Final CBF Condition Synthesis for Case 1}
Combining the bounds, the sufficient condition~\eqref{eq:appendix_case1_condition} is satisfied for case~1 if
\begin{equation}
   \dot h (\vx, \vu) \geq
   \underbrace{D_{1,\min}(k_\lambda, k_\mu)}_{ <0}
        + \underbrace{\Phi_{1,\min}(k_\mu)}_{>0}
        \geq 0,
   \quad\forall \vx: \partial \mathcal{C}_{1}.
\end{equation}
This holds if we select positive parameters $k_\lambda, k_\mu$ such that
\begin{equation}\label{eq:case1_final}
       \Phi_{1,\min}(k_\mu) \geq -D_{1,\min}(k_\lambda, k_\mu).
\end{equation}
This verifies the CBF condition on $\partial \mathcal{C}_{1}$ and completes the analysis for Case 1.
\subsection{Proof of Validity for Case 2~(Longitudinal-Dominant)}\label{app:case2}

This section verifies the sufficient CBF condition~\eqref{eq:dpcbf_condition_converted} for the second subset of the safety boundary, $\partial \mathcal{C}_{2}$

\paragraph{Define the subspace} 
Case 2 corresponds to the \textbf{\emph{longitudinal-dominant}} scenario, where the robot's heading is nearly aligned with the line-of-sight to the obstacle. The subspace is defined as
\begin{equation} \label{eq:appendix_case2_subspace}
       \partial\mathcal C_{2} = \{\vx\in\partial\mathcal C \mid \sin\tilde\theta < \bar{s} \},
   \qquad \bar s \coloneqq \frac{v_{\textup{obs}}}{v}\sin \tilde \theta_{\textup{obs}} \in [0,1).
\end{equation}
In this configuration, as shown in Fig.~\ref{fig:dpcbf_appendix_proof_partition}(b), $\cos \tilde \theta$ is uniformly bounded below by $\sqrt{1-\bar s ^{2}}> 0$.

\paragraph{Proof strategy} 
The objective is to prove that
\begin{equation}\label{eq:appendix_case2_goal}
      \inf_{\vx\in\partial\mathcal C_{2}} \lieder_f h(\vx)
      + \inf_{\vx\in\partial\mathcal C_{2}} \bigl|C^{a}(\vx) \bigr| a_{\max}
      + \inf_{\vx\in\partial\mathcal C_{2}} \bigl| C^{\beta}(\vx) \bigr| \beta_{\max} \geq 0 .
\end{equation}
Our strategy parallels that of Case 1 but highlights the dominance of the acceleration input. We will prove: 
\begin{enumerate}
\item The worst-case control authority from steering is negligible: $\inf_{\vx \in \partial \mathcal{C}_{2}} |C^{\beta}(\vx)| = 0$.
\item The control authority from acceleration is strictly positive: $\inf_{\vx \in \partial \mathcal{C}_{2}} |C^{a}(\vx)| > 0$.
\item This positive lower bound on acceleration authority from (ii) is sufficient to overcome the worst-case drift term $\lieder_f h(\vx)$
\end{enumerate}

Under \eqref{eq:appendix_case2_subspace}, we have $v_{\textup{rel},y} > 0$. Geometrically, states in $\partial \mathcal{C}_{2}$ are those in which the longitudinal input $a$ is the dominant action for avoiding collision with a single obstacle. By contrast, the worst-case effect of the steering input on the barrier function is negligible, yielding $\inf_{\vx \in \partial \mathcal{C}_{2}} |C^{\beta}(\vx)| = 0$. We therefore derive a positive lower bound for the acceleration term, show that the steering term can vanish in the worst case, and then verify \eqref{eq:appendix_case2_goal}.

\subsubsection{Steering Term $(C^{\beta})$}
We first establish that the infimum of the steering coefficient is zero.
\begin{align}
    \inf_{\vx \in \partial \calC_{2}}
    \bigl|C^{\beta}(\vx)\bigr| =&  \inf_{\vx \in \partial \calC_{2}}
     \biggl|\underbrace{v\Bigl[
         \Bigl(
            k_\lambda \frac{\|p_{\textup{rel}}\|}{d(\vx)} 
              \frac{\tilde v_{\textup{rel},y}^{2}}{\|v_{\textup{rel}}\|}
          + k_\mu \frac{\|p_{\textup{rel}}\|}{d(\vx)}\Bigr)
        + \frac{v}{\ell_r}\Bigl(
            1
          - k_\lambda \frac{d(\vx)}{\|v_{\textup{rel}}\|^{3}}v_{\textup{obs}}\cos\tilde\theta_{\textup{obs}}\tilde v_{\textup{rel},y}^{2}
        \Bigr)
      \Bigr]}_{:=\eta^{\beta}_{\sin}(\vx)} 
      \sin\tilde \theta \nonumber
  \\
     &+  \underbrace{v\Bigl[
            -\frac{\tilde v_{\textup{rel},y}}{\|p_{\textup{rel}}\|}
            + 2k_\lambda \frac{d(\vx)}{\|v_{\textup{rel}}\|}
              \frac{\tilde v_{\textup{rel},y}\tilde v_{\textup{rel},x}}{\|p_{\textup{rel}}\|}
        + \frac{v}{\ell_r} \Bigl(k_\lambda\,\frac{d(\vx)}{\|v_{\textup{rel}}\|^3} v_{\textup{obs}} \sin\tilde\theta_{\textup{obs}} \tilde v_{\textup{rel},y}^{2}
        - 2 k_\lambda\frac{d(\vx)}{\|v_{\textup{rel}}\|} \tilde v_{\textup{rel},y}
        \big)
      \Big]}_{:=\eta^{\beta}_{\cos}(\vx)}
      \cos\tilde \theta \biggr|\\
      =& \inf_{\vx \in \partial \calC_{2}} \biggl[ v\Bigl[
         \Bigl( \underbrace{
            k_\lambda \frac{\|p_{\textup{rel}}\|}{d(\vx)} 
              \frac{\tilde v_{\textup{rel},y}^{2}}{\|v_{\textup{rel}}\|}}_{> 0}
          + \underbrace{k_\mu \frac{\|p_{\textup{rel}}\|}{d(\vx)}\Bigr)}_{\geq 0}
        + \underbrace{\frac{v}{\ell_r}\Bigl(
            1}_{> 0}
          \underbrace{- k_\lambda \frac{d(\vx)}{\|v_{\textup{rel}}\|^{3}}v_{\textup{obs}} \cos \tilde \theta_{\textup{obs}}\tilde v_{\textup{rel},y}^{2}
        \Bigr)
      \Bigr]}_{> 0} 
      \sin\tilde \theta \nonumber
  \\
    & +  v\Bigl[
            \underbrace{-\frac{\tilde v_{\textup{rel},y}}{\|p_{\textup{rel}}\|}}_{< 0}
            + \underbrace{2k_\lambda \frac{d(\vx)}{\|v_{\textup{rel}}\|}
              \frac{\tilde v_{\textup{rel},y}\tilde v_{\textup{rel},x}}{\|p_{\textup{rel}}\|}}_{< 0}
        + \frac{v}{\ell_r} \Bigl(\underbrace{k_\lambda\,\frac{d(\vx)}{\|v_{\textup{rel}}\|^3} v_{\textup{obs}}\,\sin\tilde\theta_{\textup{obs}} \tilde v_{\textup{rel},y}^{2}}_{> 0}
        \underbrace{-2 k_\lambda\frac{d(\vx)}{\|v_{\textup{rel}}\|} \tilde v_{\textup{rel},y}}_{< 0}
        \big)
      \Big] \cos\tilde \theta \biggr]\\
      \left\{
      \begin{array}{l}
           \text{\autoref{cor:vrel_tilde_bounds}},  \\
           0 \leq \sin\tilde{\theta} < \bar s 
      \end{array}
      \right.
        \Rightarrow  \quad \geq& \inf_{\vx \in \partial \calC_{2}} \left[ \eta^{\beta}_{\sin}(\vx) \cancelto{0}{\sin\tilde \theta} + \cancelto{0}{\eta^{\beta}_{\cos}(\vx)} \cos\tilde \theta \right] = 0
\end{align}
In the subspace $\partial \mathcal{C}_{2}$, every term in the expression for $C^{\beta} (\vx)$ is a function of $\tilde v_{\textup{rel}, y}$ or $\sin \tilde \theta$. As established in \autoref{cor:vrel_tilde_bounds}, the infimum of $|\tilde v_{\textup{rel}, y}|$ over the boundary is zero. Since $\partial \mathcal{C}_{2} \subset \partial \mathcal{C}$, a state can exist in this subspace where $\tilde v_{\textup{rel}, y} \rightarrow 0$. In this limit, every term in $C^{\beta}(\vx)$ vanishes. Therefore, the infimum is zero:
\begin{equation}
    \inf_{\vx \in \partial \calC_{2}} |C^{\beta}(\vx)| = 0
\end{equation}
The lower bound is thus given by:
\begin{equation}
    \Phi_{2}(\vx) \ge \inf_{\vx \in \partial \calC_{2}} |C^{a}(\vx)| a_{\max} := \Phi_{2,\min} > 0.
    \label{eq:phi_min2-inequality}
\end{equation}
We now establish a positive lower bound for $|C^{a}(\vx)|$.

\subsubsection{Acceleration Term $(C^{a})$}
Unlike the steering term, the acceleration coefficient $C^{a} (\vx)$ does not vanish. In the subspace $\partial \mathcal{C}_{2}$, the condition $0 \leq \sin \tilde \theta < \bar s$ ensures that $\tilde v_{\textup{rel}, y} = -v \sin \tilde \theta + v_{\textup{obs}} \sin \tilde \theta_{\textup{obs}} > 0$. This configuration requires a negative (deceleration) input $a$ to manage the relative velocity. An examination of the terms in $C^{a}(\vx)$ reveals that the term $- \cos \tilde \theta$ is dominant. Since $\cos \tilde \theta > \sqrt{1- \bar s^{2}}> 0$ in this subspace, this term provides a non-vanishing negative component, ensuring that $|C^{a}(\vx)$ is bounded away from zero. Thus, the absolute value is redundant, and we can write:
\begin{align}
    \inf_{\vx \in \partial \calC_{2}} \bigl|C^{a}(\vx)\bigr| &=  \inf_{\vx \in \partial \calC_{2}} \bigg| \underbrace{  \Bigl[
        -1
        + k_\lambda \frac{d(\vx)}{\|v_{\textup{rel}}\|^{3}}  v_{\textup{obs}} \cos\tilde\theta_{\textup{obs}} \tilde v_{\textup{rel},y}^{2}
      \Bigr]}_{:=\eta^{a}_{\cos}(\vx)} 
      \cos\tilde \theta +  \underbrace{\Bigl[
        k_\lambda \frac{d(\vx)}{\|v_{\textup{rel}}\|^{3}}\, v_{\textup{obs}} \sin\tilde\theta_{\textup{obs}}\tilde v_{\textup{rel},y}^{2}
          -2k_\lambda \frac{d(\vx)}{\|v_{\textup{rel}}\|}
            \tilde v_{\textup{rel},y}
      \Bigr]}_{:=\eta^{a}_{\sin}(\vx)} 
      \sin\tilde \theta \nonumber \\
      &+ \underbrace{\Bigl[
        - k_\lambda \frac{d(\vx)}{\|v_{\textup{rel}}\|^{3}} 
          v \tilde v_{\textup{rel},y}^{2}
      \Bigr]}_{:=\eta^{a}_{0}(\vx)} \bigg| \\
      &=\inf_{\vx \in \partial \calC_{2}}\bigg[ \bigg| \eta^{a}_{\cos}(\vx)\cos\tilde{\theta} + \eta^{a}_{\sin}(\vx)\sin\tilde{\theta} + \eta^{a}_{0}(\vx) \bigg| \bigg],
\end{align}
where
\begin{subequations}
\begin{align}
        & \eta^{a}_{\cos}(\vx) = -1
        + \underbrace{k_\lambda \frac{d(\vx)}{\|v_{\textup{rel}}\|^{3}}  v_{\textup{obs}} \cos\tilde\theta_{\textup{obs}} \tilde v_{\textup{rel},y}^{2}}_{< 0}, \label{eq:appendix_case2_eta_a_cos}\\
        &\eta^{a}_{\sin}(\vx) = \underbrace{\underbrace{k_\lambda \frac{d(\vx)}{\|v_{\textup{rel}}\|^{3}} v_{\textup{obs}} \sin\tilde\theta_{\textup{obs}}\tilde v_{\textup{rel},y}^{2}}_{> 0}
          \underbrace{-2k_\lambda \frac{d(\vx)}{\|v_{\textup{rel}}\|}
            \tilde v_{\textup{rel},y}}_{< 0}}_{>0 \, or\, <0},\label{eq:appendix_case2_eta_a_sin}\\
        &\eta^{a}_{0}(\vx) = \underbrace{- k_\lambda \frac{d(\vx)}{\|v_{\textup{rel}}\|^{3}} 
          v \tilde v_{\textup{rel},y}^{2}}_{< 0}. \label{eq:appendix_case2_eta_a_0}
\end{align}

\end{subequations}
Then, for \textbf{Case 2}, we find a lower bound for $|C^{a}(\vx)|$ using the reverse triangle inequality~(\autoref{cor:reverse_inf}) and is attained at
\begin{align} \label{eq:appendix_case2_c_a_init}
    \inf_{\vx \in \partial \calC_{2}}|C^{a}(\vx)|a_{\textup{max}} 
    &= \inf_{\vx \in \partial \calC_{2}} \biggl| 
    \eta^{a}_{\cos}(\vx) \cos\tilde{\theta}
    + \eta^{a}_{0}(\vx)
    + \eta^{a}_{\sin}(\vx) \sin\tilde{\theta}
     \biggr| a_{\textup{max}} \\
    \text{\autoref{prop:inf_subadd}} \Rightarrow &\geq
    \inf_{\vx \in \partial \calC_{2}} \biggl|
    \eta^{a}_{\cos}(\vx) \cos\tilde{\theta} +\eta^{a}_{\sin}(\vx) \sin\tilde{\theta}\biggr| a_{\textup{max}}
    +\inf_{\vx \in \partial \calC_{2}} \biggl|\eta^{a}_{0}(\vx) \biggr| a_{\textup{max}}\\
     \text{\autoref{cor:reverse_inf}} \Rightarrow & \geq
     \inf_{\vx \in \partial \calC_{2}} \biggl|
    \eta^{a}_{\cos}(\vx) \cos\tilde{\theta}\biggr| a_{\textup{max}}
    +\inf_{\vx \in \partial \calC_{2}} \biggl|\eta^{a}_{0}(\vx) \biggr| a_{\textup{max}}
    -\sup_{\vx \in \partial \calC_{2}} \biggl|\eta^{a}_{\sin}(\vx) \sin\tilde{\theta}
     \biggr| a_{\textup{max}} \\
     & =
     \inf_{\vx \in \partial \calC_{2}} \biggl[
    -\eta^{a}_{\cos}(\vx) \cos\tilde{\theta}\biggr] a_{\textup{max}}
    +\inf_{\vx \in \partial \calC_{2}} \biggl[-\eta^{a}_{0}(\vx) \biggr] a_{\textup{max}}
    -\sup_{\vx \in \partial \calC_{2}} \biggl[\eta^{a}_{\sin}(\vx) \sin\tilde{\theta}
     \biggr] a_{\textup{max}} \label{eq:appendix_case2_nonzero}\\
    &\coloneqq C^{a}_{2, \min} >0.
\end{align}
To guarantee that $C^{a}_{2, \min} > 0$ uniformly over $\vx$, we formulate equation~\eqref{eq:appendix_case2_nonzero} using the parameters $k_{\lambda}$ and $k_{\mu}$ so that the guaranteed minimum of \eqref{eq:appendix_case2_c_a_init} holds.

\paragraph{Bounding the coefficient $-\eta^{a}_{\cos}(\vx) \cos \tilde \theta$}
By \eqref{eq:appendix_case2_eta_a_cos}, all the terms have the same sign over the domain. Hence, we can treat them as a sum of scalar functions. Here, we can find a non-zero uniform lower bound for $-\eta^{a}_{\cos}(\vx) \cos \tilde \theta$  because the constant term 1 is present and also $1 \geq \cos \tilde \theta > \sqrt{1-\bar s^{2}} $ in this scenario.
\begin{align}
\inf_{\vx \in \partial \calC_{2}} \biggl[-\eta^{a}_{\cos}(\vx) \cos \tilde \theta \biggr]
&= \inf_{\vx \in \partial \calC_{2}} \biggl[1
- k_\lambda \frac{d(\vx)}{\|v_{\textup{rel}}\|^{3}} v_{\textup{obs}}\cos\tilde \theta_{\textup{obs}} \tilde{v}_{\textup{rel},y}^{2} \biggr] \cos \tilde \theta \\
\left\{
\begin{array}{l}
     \text{\autoref{cor:vrel_tilde_bounds}}  \\
     \sin \tilde \theta < \bar s 
\end{array}
\right.
 \Rightarrow  \quad  
& = \inf_{\vx \in \partial \calC_{2}} \bigg[ 1 - k_\lambda \frac{d(\vx)}{\|v_{\textup{rel}}\|^{3}} v_{\textup{obs}}\cos\tilde{\theta}_{\textup{obs}} \cancelto{0}{\tilde{v}^2_{\textup{rel},y}} \quad \bigg] \sqrt{1 - \bar s^{2}} \coloneqq \eta^{a}_{\cos,\min}. \label{eq:appendix_case2_eta_a_cos_result}
\end{align}

\paragraph{Bounding the coefficient $-\eta^{a}_{0}(\vx)$}
Similarly,  $\eta^{a}_{0}(\vx)$ has a fixed sign and enforces deceleration in this case, but its infimum vanishes by \autoref{cor:vrel_tilde_bounds}.
\begin{align}
\inf_{\vx \in \partial \calC_{2}} \biggl[-\eta^{a}_{0}(\vx) \biggr]
&= \inf_{\vx \in \partial \calC_{2}} \bigg[k_\lambda\frac{d(\vx)}{\|v_{\textup{rel}}\|^{3}}v \tilde{v}^2_{\textup{rel},y} \bigg] \\
\text{\autoref{cor:vrel_tilde_bounds}} \Rightarrow  \quad   
& = \inf_{\vx \in \partial \calC_{2}}\bigg[k_\lambda\frac{d(\vx)}{\|v_{\textup{rel}}\|^{3}}v \cancelto{0}{\tilde{v}^2_{\textup{rel},y}} \quad \bigg] = 0 
\coloneqq \eta^{a}_{0, \min}.
\end{align}

\paragraph{Bounding the coefficient $\eta^{a}_{\sin}(\vx) \sin \tilde \theta$}
We next upper-bound $\eta^{a}_{\sin}(\vx) \sin \tilde \theta$ using the triangle inequality. Since this term does not have a fixed sign over the domain, bounding its magnitude is the safest way to control its worst-case contribution.
\begin{align}
\sup_{\vx \in \partial \calC_{2}} \biggl[\eta^{a}_{\sin}(\vx) \sin \tilde \theta \biggr]
&= \sup_{\vx \in \partial \calC_{2}} \biggl[\underbrace{k_\lambda\frac{d(\vx)}{\|v_{\textup{rel}}\|^{3}}v_{\textup{obs}}\sin\tilde{\theta}_{\textup{obs}}\tilde{v}^2_{\textup{rel},y}}_{> 0} \underbrace{- 2k_\lambda \frac{d(\vx)}{\|v_{\textup{rel}}\|} \tilde{v}_{\textup{rel},y}}_{< 0} \biggr] \sin \tilde \theta\\
&\geq \sup_{\vx \in \partial \calC_{2}} \biggl[ k_\lambda\frac{d(\vx)}{\|v_{\textup{rel}}\|^{3}}v_{\textup{obs}}\sin\tilde{\theta}_{\textup{obs}}\tilde{v}^2_{\textup{rel},y} \biggr] \sin \tilde \theta
+ \sup_{\vx \in \partial \calC_{2}} \biggl[ -2k_\lambda \frac{d(\vx)}{\|v_{\textup{rel}}\|} \tilde{v}_{\textup{rel},y} \biggr] \sin \tilde \theta\\
&= \sup_{\vx \in \partial \calC_{2}} \biggl[ k_\lambda\frac{d(\vx)}{\|v_{\textup{rel}}\|^{3}}v_{\textup{obs}}\sin\tilde{\theta}_{\textup{obs}}\tilde{v}^2_{\textup{rel},y} \biggr] \sin \tilde \theta
+ \inf_{\vx \in \partial \calC_{2}} \biggl[2k_\lambda \frac{d(\vx)}{\|v_{\textup{rel}}\|} \tilde{v}_{\textup{rel},y} \biggr] \sin \tilde \theta\\
\left\{
\begin{array}{l}
     \text{\autoref{cor:vrel_tilde_bounds}}  \\
     \sin \tilde \theta < \bar s 
\end{array}
\right. \Rightarrow \quad
&= \sup_{\vx \in \partial \calC_{2}} \biggl[ k_\lambda\frac{d(\vx)}{\|v_{\textup{rel}}\|^{3}}v_{\textup{obs}}\sin\tilde{\theta}_{\textup{obs}}\tilde{v}^2_{\textup{rel},y} \biggr] \bar s
+ \inf_{\vx \in \partial \calC_{2}} \biggl[2 k_\lambda \frac{d(\vx)}{\|v_{\textup{rel}}\|} \cancelto{0}{\tilde{v}_{\textup{rel},y}} \quad \biggr] \cancelto{0}{\sin \tilde \theta}\\
\text{\eqref{eq:tilded_vel}} \Rightarrow \quad &= \sup_{\vx \in \partial \calC_{2}}\biggl[ k_\lambda\frac{d(\vx)}{\|v_{\textup{rel}}\|}v_{\textup{obs}}\sin\tilde{\theta}_{\textup{obs}}\sin^{2}\tilde\psi \biggr] \bar s\label{eq:inf_c2_a_sin}\\
\text{\autoref{lemma:trigono_bound}} \Rightarrow  \quad & = \, k_\lambda\frac{d_{\max}}{\|v_{\textup{rel}}\|_{\min}}v_{\textup{obs},\max}\sin^2\tilde\psi_{\max} \, \bar s
:= \bar s \, \eta^{a}_{\sin, \max}(k_\lambda). \label{eq:appendix_case2_eta_a_sin_result}
\end{align}

\paragraph{Lower Bound on Control Terms}
Combining these bounds into \eqref{eq:appendix_case2_nonzero}, we get a strictly positive lower bound for $\Phi_{2}(\vx)$ in this case. By \eqref{eq:appendix_case2_eta_a_cos_result} and \eqref{eq:appendix_case2_eta_a_sin_result}, $\biggl| 
    \eta^{a}_{\cos}(\vx) \cos\tilde{\theta}
    + \eta^{a}_{0}(\vx)
    + \eta^{a}_{\sin}(\vx) \sin\tilde{\theta}
     \biggr|$ become strictly positive and the minimum available control authority from acceleration is therefore:
\begin{equation}
\Phi_2(\vx) \ge \left[\eta^{a}_{\cos, \min} - \bar{s} \, \eta^{a}_{\sin, \max}(k_\lambda)  \right] a_{\max} := \Phi_{2,\min}(k_\lambda) > 0.\label{eq:phi_2_min}
\end{equation}

\subsubsection{Analysis of Drift Term} All terms composing the drift term $\lieder_f h(\vx)$ for $\vx \in \partial \calC_{2}$ are negative in this subspace.
\begin{align}
 \inf_{\vx \in \partial \calC_{2}} \lieder_f h(\vx) &= \inf_{\vx \in \partial \calC_{2}} \bigg[v\bigg( \underbrace{-\frac{\tilde{v}_{\textup{rel},y}}{\|p_{\textup{rel}}\|}\sin\tilde{\theta}}_{\leq 0} 
 \underbrace{- k_\lambda \frac{\|p_{\textup{rel}}\|}{d(\vx)} \frac{\tilde{v}_{\textup{rel},y}^2}{\|v_{\textup{rel}}\|}\cos\tilde{\theta}}_{< 0} 
 \underbrace{+ 2k_\lambda \frac{d(\vx)}{\|p_{\textup{rel}}\|} \frac{\tilde{v}_{\textup{rel},y}}{\|v_{\textup{rel}}\|} \tilde{v}_{\textup{rel},x}\sin\tilde{\theta}}_{\leq 0} 
 \underbrace{- k_\mu \frac{\|p_{\textup{rel}}\|}{d(\vx)}\cos\tilde{\theta}}_{< 0}
 \bigg)\bigg] \label{eq:Lfh_c2_full} \\
\eqref{eq:tilded_vel} \Rightarrow \quad 
&= \inf_{\vx \in \partial \calC_{2}} \bigg[v\bigg( 
-\frac{\|v_{\textup{rel}}\|}{\|p_{\textup{rel}}\|}\sin\tilde\psi\,\sin\tilde{\theta} 
- k_\lambda \frac{\|p_{\textup{rel}}\|}{d(\vx)} \|v_{\textup{rel}}\|\,\sin^{2}\tilde\psi\,\cos\tilde{\theta} \nonumber \\
&\qquad \qquad \qquad \qquad \qquad \qquad + 2k_\lambda \frac{d(\vx)}{\|p_{\textup{rel}}\|} \|v_{\textup{rel}}\|\,\sin\tilde\psi\,\cos\tilde\psi\,\sin\tilde{\theta} 
- k_\mu \frac{\|p_{\textup{rel}}\|}{d(\vx)}\cos\tilde{\theta} \bigg)\bigg]\label{eq:inf_Lfh_c2_lemma3}\\
  \text{\autoref{prop:trigono_max}} \Rightarrow \quad & \geq \inf_{\vx \in \partial \calC_{2}} \bigg[v\bigg( 
  -\frac{\|v_{\textup{rel}}\|}{\|p_{\textup{rel}}\|}\sin\tilde\psi\,\sin\tilde{\theta} 
  - k_\lambda \frac{\|p_{\textup{rel}}\|}{d(\vx)} \|v_{\textup{rel}}\|\,\sin^{2}\tilde\psi \cos\tilde{\theta} \nonumber \\ 
  & \qquad \qquad \qquad \qquad \qquad \qquad+ k_\lambda \frac{d(\vx)}{\|p_{\textup{rel}}\|} \|v_{\textup{rel}}\|\,\sin\tilde{\theta} 
  - k_\mu \frac{\|p_{\textup{rel}}\|}{d(\vx)}\cos\tilde{\theta} \bigg)\bigg]\label{eq:inf_Lfh_c2_prop4}\\
  \text{\autoref{lemma:trigono_bound}}\Rightarrow \quad & = \inf_{\vx \in \partial \calC_{2}} \bigg[ v \bigg(
  -\frac{\|v_{\textup{rel}}\|}{\|p_{\textup{rel}}\|}\sin\tilde\psi_{\max}\,\sin\tilde{\theta}- k_\lambda \frac{\|p_{\textup{rel}}\|}{d(\vx)} \|v_{\textup{rel}}\|\,\sin^{2}\tilde\psi_{\max}\cos\tilde{\theta} \nonumber \\
  & \qquad \qquad \qquad \qquad \qquad \qquad+ k_\lambda \frac{d(\vx)}{\|p_{\textup{rel}}\|} \|v_{\textup{rel}}\|\sin\tilde{\theta} 
  - k_\mu \frac{\|p_{\textup{rel}}\|}{d(\vx)}\cos\tilde{\theta} \bigg)\bigg]\label{eq:inf_Lfh_c2_prop6}\\
 |\sin\tilde\theta| < \bar s \Rightarrow \quad 
 & = \inf_{\vx \in \partial \calC_{2}} \bigg[v\Bigl(
 -\frac{\|v_{\textup{rel}}\|}{\|p_{\textup{rel}}\|}\sin\tilde\psi_{\max}\,\bar s
 - k_\lambda \frac{\|p_{\textup{rel}}\|}{d(\vx)} \|v_{\textup{rel}}\|\,\sin^{2}\tilde\psi_{\max} \sqrt{1-\bar s^{2}} \nonumber \\
 & \qquad \qquad \qquad \qquad \qquad \qquad+ k_\lambda \frac{d(\vx)}{\|p_{\textup{rel}}\|} \|v_{\textup{rel}}\|\,\bar s 
 - k_\mu \frac{\|p_{\textup{rel}}\|}{d(\vx)} \sqrt{1-\bar s^{2}}
 \Bigr)\bigg]\label{eq:inf_Lfh_c2_bars} \\
 \text{\autoref{prop:inf_subadd}} \Rightarrow \quad & \geq -v_{\max}\left( \frac{\|v_{\textup{rel}}\|_{\max}}{p_{\min}}\sin\tilde\psi_{\max}\,\bar{s} + k_\lambda \frac{p_{\max}}{d_{\max}}\|v_{\textup{rel}}\|_{\max}\sin^{2}\tilde\psi_{\max} + k_\lambda \frac{d_{\min}}{p_{\min}}\|v_{\textup{rel}}\|_{\max}\bar{s} + k_\mu \frac{p_{\max}}{d_{\max}} \right)\nonumber\\
 &:= D_{2,\min}(k_\lambda, k_\mu).\label{eq:L2max_bound}
\end{align}

\subsubsection{Final CBF Condition Synthesis for Case 2}
Combining the bounds, the sufficient condition~\eqref{eq:appendix_case2_goal} is satisfied for case~2 if
\begin{equation}
    \dot h (\vx, \vu) \ge 
    \underbrace{D_{2,\min}(k_\lambda, k_\mu)}_{<0} 
    + \underbrace{\Phi_{2,\min}(k_\lambda)}_{>0},
    \quad\forall \vx : \partial \mathcal{C}_{2}.
\end{equation}
This holds if we select positive parameters $k_\lambda, k_\mu$ such that
\begin{equation}\label{eq:case2_final}
       \Phi_{2,\min}(k_\lambda) \geq -D_{2,\min}(k_\lambda, k_\mu)
\end{equation}
This verifies the CBF condition on $\partial \mathcal{C}_{2}$ and completes the analysis for Case 2.
\subsection{Results}\label{app:result}

The analyses in \autoref{app:case1} and \autoref{app:case2} establish that the DPCBF is valid if there exists a pair of positive parameters $(k_\lambda, k_\mu)$ that simultaneously satisfies the final conditions for the steering-dominant and longitudinal-dominant cases derived in \eqref{eq:case1_final} and \eqref{eq:case2_final}, respectively:
\begin{align}
       \Phi_{1,\min}(k_\mu) &\geq -D_{1,\min}(k_\lambda, k_\mu),\\
       \Phi_{2,\min}(k_\lambda) &\geq -D_{2,\min}(k_\lambda, k_\mu).
\end{align}

While these inequalities are complex, they can be evaluated numerically for a given set of system parameters to find a non-empty feasible set for $(k_\lambda, k_\mu)$. This section demonstrates this process, thereby completing the proof of DPCBF validity.

\paragraph{Parameter Evaluation}
We evaluate the bounds derived for $\Phi_{1, \min}, D_{1, \min}, \Phi_{2, \min}$ and $D_{2, \min}$ using the physical parameters of the robot and obstacles from our simulation studies, as summarized in Table I. For the partitioning threshold, a value of $\bar s=0.44$ was selected to ensure a balanced analysis between the two cases for our experiments.

\begin{figure}[h!]
    \centering
    \includegraphics[width=.5\linewidth]{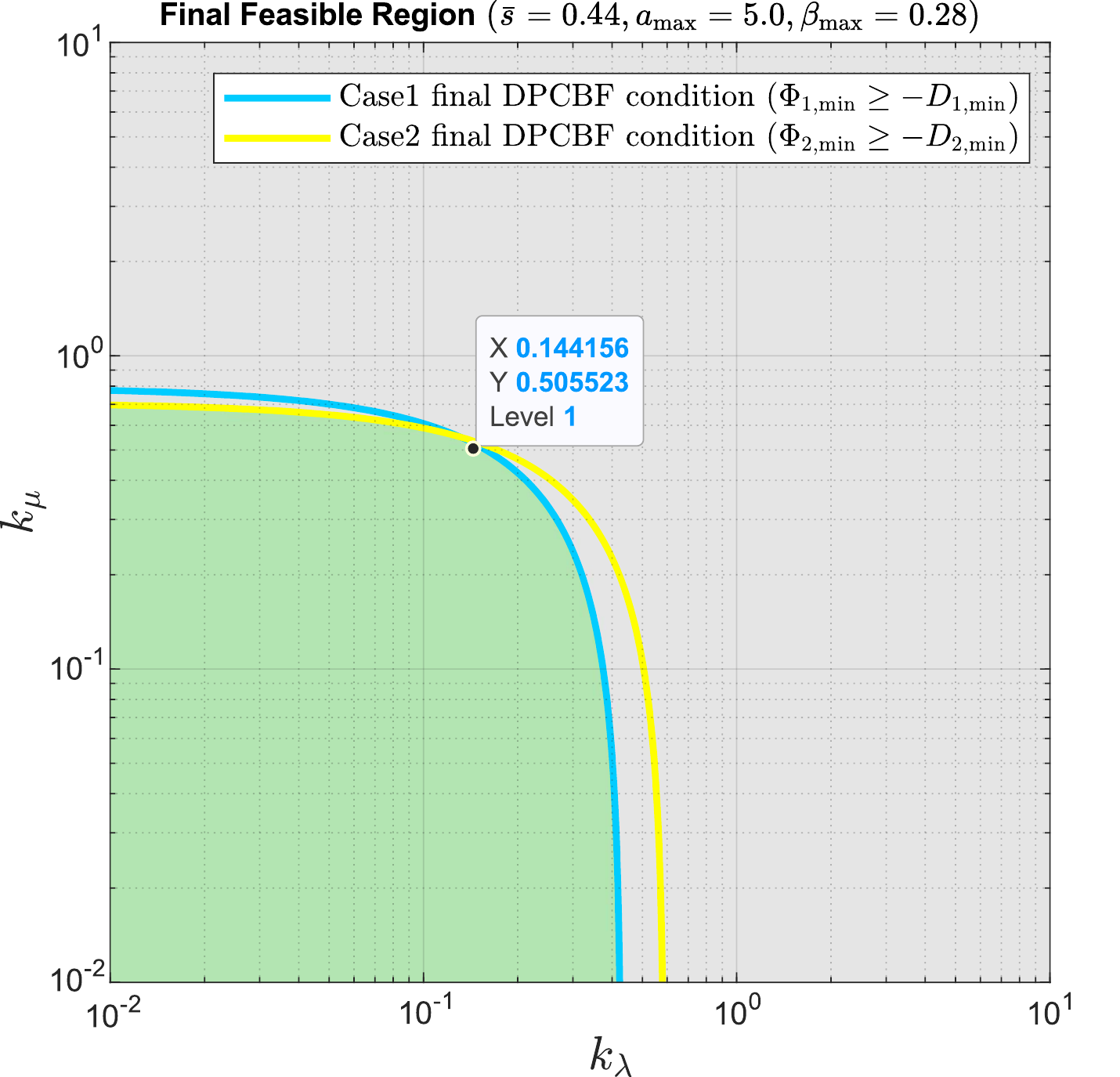}
    \caption{The feasible region for the DPCBF parameters $(k_\lambda, k_\mu)$, evaluated using the system parameters in Table I and a threshold of $\bar s=v_{\textup{obs}, \max} / v_{\max} \approx 0.44$. The green shaded area represents the intersection of the feasible sets for Case 1 (bounded by cyan) and Case 2 (bounded by yellow). The black dot indicates the parameter choice ($k_\lambda=0.144$ and $k_\mu=0.505$) used in our simulations, which lies safety within the proven feasible region with respect to a single obstacle.}
    \label{fig:dpcbf_parameters_feasible_set}
\end{figure}
\paragraph{Feasible Region}
\autoref{fig:dpcbf_parameters_feasible_set} plots the resulting feasible regions for the parameters $(k_\lambda, k_\mu)$ on a log-log scale.
\begin{itemize}
\item The region bounded by the \textbf{cyan line} represents the set of gains satisfying the Case 1 (steering-dominant) condition.
\item The region bounded by the \textbf{yellow line} represents the set of gains satisfying the Case 2 (longitudinal-dominant) condition.
\end{itemize}
The intersection of these two sets, shown as the \textbf{green shaded region}, constitutes the final feasible region. Any pair $(k_\lambda, k_\mu)$ chosen from this region guarantees that the DPCBF is a valid CBF for our system under the specified parameters.

\paragraph{Conclusion}
The existence of this non-empty feasible region completes our proof. For the simulation results presented in the main paper, we selected the tunable parameters $k_\lambda=0.144$ and $k_\mu =0.505$. As shown by the black dot in Fig.~\ref{fig:dpcbf_parameters_feasible_set}, this choice lies within the proven feasible region. This demonstrates that the performance of our DPCBF-based controller observed in simulations is underpinned by this formal guarantee of safety.

\end{document}